\algnewcommand{\LeftComment}[1]{\State \(\triangleright\) #1}
\newtheorem{theorem}{Theorem}
\newcommand{\sij}{S_{ij}}
\newcommand{\zij}{Z_{ij}}
\newcommand{\bmx}{\bm{x}}
\newcommand{\bme}{{\bm e}}
\newcommand{\bmv}{{\bm v}}
\newcommand{\bmz}{{\bm z}}
\newcommand{\bmw}{{\bm w}}
\newcommand{\bmh}{{\bm h}}
\newcommand{\bmy}{{\bm y}}
\newcommand{\ev}{eigenvector}
\newcommand{\pev}{pseudo-eigenvector}
\newcommand{\xarrow}[2]{x_{#1} \rightarrow x_{#2}}
\newcommand{\tZ}{\tilde{Z}}
\newcommand{\algo}{CAST}
\newcommand{\synone}{\textsc{Syn1}}
\newcommand{\syntwo}{\textsc{Syn2}}
\newcommand{\mnist}{\emph{MNIST0127}}
\newcommand{\yale}{\emph{Yale\_5\textsc{class}}}
\newcommand{\coil}{\emph{COIL20}}
\newcommand{\isolet}{\emph{isolet\_5\textsc{class}}}
\newcommand{\glass}{\emph{glass}}
\long\def\comment#1{}
  \providecommand\BibTeX{{%
    \normalfont B\kern-0.5em{\scshape i\kern-0.25em b}\kern-0.8em\TeX}}}
\begin{document}
\fancyhead{}

\title{\algo: A Correlation-based Adaptive Spectral Clustering Algorithm on Multi-scale Data}

%\numberofauthors{1} 
%  in this sample file, there are a *total*
% of EIGHT authors. SIX appear on the 'first-page' (for formatting
% reasons) and the remaining two appear in the \additionalauthors section.

\author{Xiang Li$^{\mathsection}$, Ben Kao$^{\mathsection}$, Caihua Shan$^{\mathsection}$, Dawei Yin$^{\dagger}$, Martin Ester$^{\ddagger}$}
\affiliation{
  \institution{\textsuperscript{$\mathsection$}The University of Hong Kong, Pokfulam Road, Hong Kong \\ \textsuperscript{$\dagger$}JD.com, Beijing, China \\ \textsuperscript{$\ddagger$}Simon Fraser University, Burnaby, BC, Canada}
  %\streetaddress{\textsuperscript{$\dagger$}\{xli2, kao, sqluo\}@cs.hku.hk \hspace{2mm} \textsuperscript{$\ddagger$}\{ester\}@sfu.ca}
  \city{\textsuperscript{$\mathsection$}\{xli2, kao, chshan\}@cs.hku.hk \hspace{2mm} \textsuperscript{$\dagger$}yindawei@acm.org \hspace{2mm} \textsuperscript{$\ddagger$}ester@sfu.ca} 
  %\state{\textsuperscript{$\ddagger$} Simon Fraser University, Burnaby, BC,Canada} 
  %\postcode{43017-6221}
}
%\email{{$\dagger$}\{xli2, kao, sqluo\}@cs.hku.hk \hspace{2mm} {$\ddagger$}\{ester\}@sfu.ca}

%\author{Xiang Li}
%\affiliation{%
%  \institution{Institute for Clarity in Documentation}
%  \streetaddress{P.O. Box 1212}
%  \city{Dublin} 
%  \state{Ohio} 
%  \postcode{43017-6221}
%}
%\email{webmaster@marysville-ohio.com}
%
%\author{Lars Th{\o}rv{\"a}ld}
%\authornote{This author is the
%  one who did all the really hard work.}
%\affiliation{%
%  \institution{The Th{\o}rv{\"a}ld Group}
%  \streetaddress{1 Th{\o}rv{\"a}ld Circle}
%  \city{Hekla} 
%  \country{Iceland}}
%\email{larst@affiliation.org}
%
%\author{Valerie B\'eranger}
%\affiliation{%
%  \institution{Inria Paris-Rocquencourt}
%  \city{Rocquencourt}
%  \country{France}
%}
%\author{Aparna Patel} 
%\affiliation{%
% \institution{Rajiv Gandhi University}
% \streetaddress{Rono-Hills}
% \city{Doimukh} 
% \state{Arunachal Pradesh}
% \country{India}}
%\author{Huifen Chan}
%\affiliation{%
%  \institution{Tsinghua University}
%  \streetaddress{30 Shuangqing Rd}
%  \city{Haidian Qu} 
%  \state{Beijing Shi}
%  \country{China}
%}
%
%\author{Charles Palmer}
%\affiliation{%
%  \institution{Palmer Research Laboratories}
%  \streetaddress{8600 Datapoint Drive}
%  \city{San Antonio}
%  \state{Texas} 
%  \postcode{78229}}
%\email{cpalmer@prl.com}
%
%\author{John Smith}
%\affiliation{\institution{The Th{\o}rv{\"a}ld Group}}
%\email{jsmith@affiliation.org}
%
%\author{Julius P.~Kumquat}
%\affiliation{\institution{The Kumquat Consortium}}
%\email{jpkumquat@consortium.net}

% The default list of authors is too long for headers.
\renewcommand{\shortauthors}{X. Li et al.}

\begin{abstract}
We study the problem of applying spectral clustering to cluster multi-scale data,
which is data whose clusters are of various sizes and densities.
Traditional spectral clustering techniques discover clusters by processing a similarity matrix that reflects the proximity of objects.
For multi-scale data,  distance-based similarity is not effective because objects of a sparse cluster could be far apart while those of a dense cluster
have to be sufficiently close.
Following~\cite{li2018rosc}, we solve the problem of spectral clustering on multi-scale data by integrating the concept of 
objects' ``reachability similarity'' with a given distance-based similarity to derive an objects' coefficient matrix. 
We propose the algorithm \algo\ that applies {\it trace Lasso} to regularize the coefficient matrix. 
We prove that the resulting coefficient matrix has the ``grouping effect'' and that it exhibits ``sparsity''. 
We show that these two characteristics imply very effective spectral clustering. 
We evaluate \algo\ and 10 other clustering methods on a wide range of datasets w.r.t. various measures.
Experimental results show that \algo\ provides excellent performance and is highly robust across test cases of multi-scale data.
\end{abstract}

%\begin{CCSXML}
%<ccs2012>
% <concept>
%  <concept_id>10010520.10010553.10010562</concept_id>
%  <concept_desc>Computer systems organization~Embedded systems</concept_desc>
%  <concept_significance>500</concept_significance>
% </concept>
% <concept>
%  <concept_id>10010520.10010575.10010755</concept_id>
%  <concept_desc>Computer systems organization~Redundancy</concept_desc>
%  <concept_significance>300</concept_significance>
% </concept>
% <concept>
%  <concept_id>10010520.10010553.10010554</concept_id>
%  <concept_desc>Computer systems organization~Robotics</concept_desc>
%  <concept_significance>100</concept_significance>
% </concept>
% <concept>
%  <concept_id>10003033.10003083.10003095</concept_id>
%  <concept_desc>Networks~Network reliability</concept_desc>
%  <concept_significance>100</concept_significance>
% </concept>
%</ccs2012>  
%\end{CCSXML}
%
%\ccsdesc[500]{Computer systems organization~Embedded systems}
%\ccsdesc[300]{Computer systems organization~Redundancy}
%\ccsdesc{Computer systems organization~Robotics}
%\ccsdesc[100]{Networks~Network reliability}
%
\keywords{Spectral clustering; robustness; multi-scale data}

\maketitle

\section{Introduction}
\label{sec:intro}
Cluster analysis is a fundamental task in machine learning and data mining,
which seeks to group similar objects into same clusters and separate dissimilar objects into different clusters.
Spectral clustering,
which transforms clustering into a graph partitioning problem,
has been shown to be effective in image segmentation~\cite{stella2003multiclass},
text mining~\cite{dhillon2001co} and information network analysis~\cite{li2019spectral}.
These are fundamental tasks that are at the cores of many applications and services, such as text/media information retrieval systems, 
recommender systems, and viral marketing. 

Given a set of objects $\mathcal{X} = \{x_1, x_2 ,... x_n\}$
and a similarity matrix $S$ such that each entry $S_{ij}$ represents the affinity between objects $x_i$ and $x_j$,
standard spectral clustering methods first construct a graph $G = (\mathcal{X}, S)$,
where $\mathcal{X}$ denotes the set of vertices 
and $\sij$ gives the weight of the edge that connects $x_i$ and $x_j$.
Then, the 
%(normalized) 
graph Laplacian $L$ of $G$ is computed and eigen-decomposition is performed on 
% the $n \times n$ 
matrix $L$
to derive the $k$ smallest eigenvectors $\{\bme_1, \bme_2,..., \bme_k\}$\footnote{We say that an eigenvector 
$\bme_i$ is smaller than another eigenvector $\bme_j$ if $\bme_i$'s eigenvalue is smaller than 
that of $\bme_j$'s.},
where $k$ is the desired number of clusters and
$\bm{e}_i$ is the $i$-th smallest eigenvector.
These eigenvectors form a $k\times n$ matrix,
whose $j$-th column is taken as the feature vector of object $x_j$.
(Essentially, objects are mapped into low-dimensional embeddings using the eigenvectors.) 
Finally, a post-processing step, e.g., $k$-means, is applied on the objects with their feature vectors to return clusters.
Figure~\ref{figure:flow_graph} illustrates the general pipeline of spectral clustering. 
%The general pipeline of standard spectral clustering methods [xxx] is summarized in Figure~\ref{figure:flow_graph}(a).

\begin{figure}
    \centering
        \includegraphics[width = \linewidth]{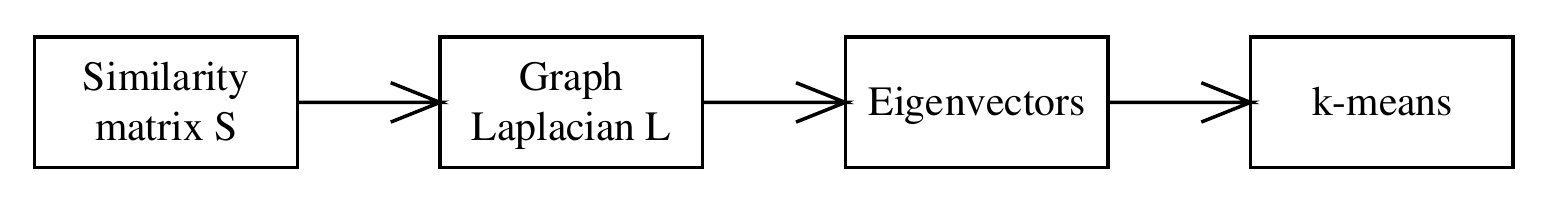}
        \caption{Spectral clustering pipeline}
        \vspace{-5mm}
        \label{figure:flow_graph}
\end{figure}

Spectral clustering aims to optimize certain criterion that measures the 
quality of graph partitions.
For example, the NCuts~\cite{shi2000normalized} method minimizes the \emph{normalized cut} between clusters,
which measures the weights of inter-cluster edges. 
Conventionally, objects' affinity is given by some distance-based similarity. 
For multi-scale data, which consists of object clusters of different sizes and densities, distance-based similarity
is often ineffective in capturing the correlations between objects~\cite{zelnik2004self,nadler2006fundamental}.
This leads to poor performance of spectral methods. 
For example,
Fig.~\ref{figure:example}(a) shows a dense rectangular cluster located on top of a very sparse strip-shaped cluster.
Objects at different ends of the strip-shaped cluster are far apart and hence their distance-based similarity is small.
Fig.~\ref{figure:example}(b) shows the clustering given by NCuts, from which we see that the strip-shaped cluster is 
incorrectly segmented. 
%Due to the long distance between objects in two ends of the strip cluster,
%similarities between these objects are small,
%which leads to the partitioning of the strip cluster into two segments in the clustering given by NCuts, as is shown in Fig.~\ref{figure:syn1}(b).
%In this paper, we study spectral clustering on multi-scale data, with the goal of putting forward a robust spectral clustering method.

In~\cite{li2018rosc}, the ROSC algorithm was proposed to address the multi-scale data issue in spectral clustering. 
%a number of methods have been proposed [xxx].
%Recently, 
The idea is to rectify a given distance-based similarity matrix $S$ by deriving a coefficient matrix $Z$ that can better
express the correlation among objects. 
Intuitively, each entry $Z_{ij}$ in $Z$ represents how well an object $x_i$ characterizes another object $x_j$, and two objects
are considered highly correlated (and thus should be put into the same cluster) if they give similar characterization to other objects. 
The coefficient matrix $Z$ is constructed based on the similarity matrix $S$ as well as a transitive $K$-nearest-neighbor (TKNN) graph.
Specifically, two 
%To capture the high correlation between distant objects in an elongated cluster,
%$Z$ is regularized by 
%a transitive $K$-nearest neighbor (TKNN) graph,
objects $x_i$ and $x_j$ are connected in the TKNN graph if there exists an object sequence $<x_i,...x_j>$
such that adjacent objects in the sequence are $K$-nearest-neighbors of each other.
For example, objects that are located at far ends of the strip-shaped cluster (Fig.~\ref{figure:example}) are connected by a chain of K-NN
relations. 
An important property that was proven in~\cite{li2018rosc} is that the matrix $Z$ has 
the \emph{grouping effect}~\cite{lu2012robust,li2018rosc}, which states that
if two objects are similar in terms of both $S$ and TKNN graph connectivity, 
their corresponding coefficient vectors in $Z$ are also similar.
Based on $Z$, ROSC constructs a new correlation matrix $\tZ$. 
The grouping effect of $Z$ ensures that 
highly correlated objects are grouped together by applying spectral clustering on $\tZ$.

Besides expressing the correlation between objects of the same cluster, another important factor for
correct clustering is to suppress the correlation between objects of different clusters. 
ROSC, however, focuses on enhancing the former by deriving a coefficient matrix $Z$ that amplifies intra-cluster correlation;
it does not promote the latter. 
Our objective is to study methods that deal with both factors.
Specifically, our proposed algorithm \algo\ regularizes matrix $Z$ so that it has grouping effect and it exhibits inter-cluster {\it sparsity}. 
By sparsity, we refer to the desired property that entries in the matrix that correspond to inter-cluster object pairs should be 0 or very small, 
hence the matrix is sparse. 

%
%%and thus explains the superior performance of ROSC.
%However,
%the grouping effect only enhances correlations between objects in the same cluster.
%For objects from different clusters,
%there should be very few connections between them,
%%they should have very few connections,
%which is referred to as \emph{sparsity}.
%ROSC fails to
%%lacks an effective mechanism to
%enforce the sparsity between clusters, 
%which may adversely affect its performance.
%The objective of this paper is to study the spectral clustering issue on multi-scale data.
%In particular,
%given an affinity matrix $S$,
%we aim to construct a new one that exhibits
%grouping effect for highly correlated objects and sparsity for uncorrelated objects.
%The matrix can thus enhance the performance of spectral clustering on multi-scale data.
%%then be fed into the general pipeline of standard spectral clustering.
%

\begin{figure}
    \centering
        \includegraphics[width = \linewidth]{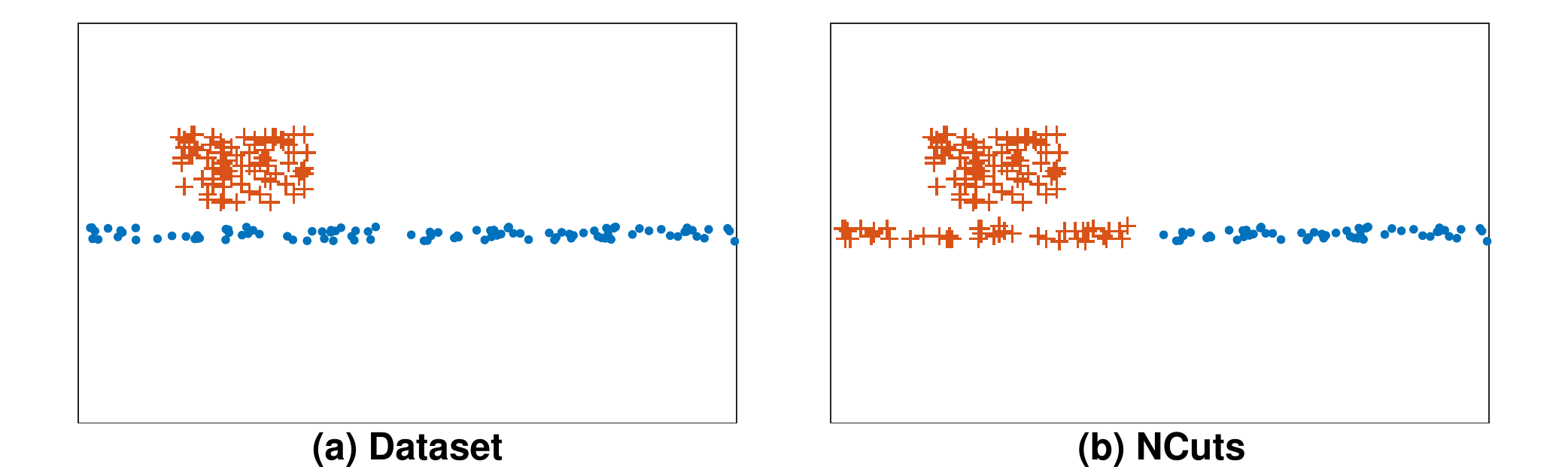}
        \caption{(a) A multi-scale dataset, (b) clustering by NCuts}
        \vspace{-5mm}
        \label{figure:example}
\end{figure}

One common approach to enforce sparsity is to apply $\ell_1$ regularization on a solution matrix 
(i.e., by including the $\ell_1$-norm as a penalty term in an optimization problem).
While using the $\ell_1$-norm helps sparsify inter-cluster correlation, 
it weakens intra-cluster correlation and hence 
it goes against establishing grouping effect.
In contrast, ROSC
uses the Frobenius norm
%\footnote{The Frobenius norm of a matrix can be decomposed into a set of $\ell_2$-norm of vectors.}
to regularize the coefficient matrix $Z$,
which is equivalent to regularizing each column vector of $Z$ by the $\ell_2$-norm.
%adding a $\ell_2$-norm regularization on each column vector of $Z$.
The Frobenius norm has been shown to enhance grouping effect but not sparsity.
%We can also say that the $\ell_2$-norm enhances the grouping effect.
%The Frobenius norm is similar to the $\ell_2$-norm, 
%which has been shown to enhance grouping effect but not sparsity.
%In spectral clustering,
Generally,
a desired coefficient matrix should possess both
grouping effect for objects in the same cluster
and sparsity for objects from different clusters.
To construct such a matrix,
we introduce trace Lasso~\cite{grave2011trace},
which is a regularizer that falls in between the $\ell_1$-norm and  the $\ell_2$-norm.
The trace Lasso is adaptive depending on the correlation between objects.
Given a set of objects $\mathcal{X}$,
let $X$ denote the feature matrix of objects,
%We normalize the feature vector $\bmx_i$ of an object $x_i$ by $\bmx_i^T\bmx_i = 1$.
whose $j$-th column is the feature vector $\bmx_j$ of an object $x_j$.
(We assume the vectors are normalized, i.e., $\bmx_j^T\bmx_j = 1$ for all $j$.)
If objects are highly correlated,
i.e., $X^TX = \textbf{1}\textbf{1}^T$ ($\textbf{1}$ is the all-one vector), 
the trace Lasso is equivalent to the $\ell_2$-norm;
If objects are independent,
i.e., $X^TX = I$ ($I$ is the identity matrix),
the trace Lasso will behave like the $\ell_1$-norm.

%Details of trace Lasso will be introduced in Sec.~\ref{subsection:tracelasso}.
%The correlation adaptivity of trace Lasso enables it to 
%automatically balance the effect of $\ell_1$-norm and $\ell_2$-norm.
%We thus apply it to
%construct a similarity matrix that can be
%used in spectral clustering.
%%that contains both grouping effect within clusters and sparsity between clusters.

In this paper we study spectral clustering over multi-scale data. We propose the
\textbf{C}orrelation-based \textbf{A}daptive \textbf{S}pectral clustering method  using \textbf{T}race lasso, or \algo.
We discuss how \algo\ takes advantage of the trace Lasso to achieve robust spectral clustering.  
%
%Similar to ROSC,
%we construct a coefficient matrix $Z$ that characterizes objects' correlations.
%%with the TKNN graph.
%%to capture the correlations between objects.
%%Moreover, 
%We further regularize $Z$ by $\ell_1$-norm and trace Lasso,
%respectively,
%to reduce connections between clusters.
%In particular, 
%we visually compare the matrices constructed by different methods
%and show the advantage of trace Lasso over other regularizers.
%%trace Lasso can be used to construct a matrix that exhibits strong intra-cluster relations and weak inter-cluster relations.
%%The derived matrix $Z$ in CASC also has the grouping effect.
We summarize 
our main contributions as follows.

\noindent{\small$\bullet$}
We study the problem of applying spectral clustering on multi-scale data.
We propose the \algo\ algorithm, which uses trace Lasso to construct and regularize a coefficient matrix $Z$.
A correlation matrix that exhibits grouping effect and inter-cluster sparsity is subsequently derived for effective and robust spectral clustering.

\noindent{\small$\bullet$}
We mathematically prove that the derived matrix by \algo\ has grouping effect. 
This ensures high intra-cluster object correlation. 
%correlations between objects in the same cluster.
%It thus enhances the performance of spectral clustering when applied to multi-scale data.

\noindent{\small$\bullet$}
We conduct extensive experiments to show the effectiveness of \algo. 
We compare \algo\ with 10 other methods w.r.t. various clustering quality measures over a wide range of datasets.
Our results show that \algo\ consistently provides very good performance over the range of datasets. It is thus a very robust algorithm especially in handling multi-scale data.
%Specifically,
%we visually compare matrices derived by different methods and 
%illustrate the advantage of trace Lasso over other regularizers.

The rest of the paper is organized as follows.
Section~\ref{sec:related}
introduces related works.
In Section~\ref{sec:algorithm}
we describe the ROSC algorithm, give formal definitions of some important concepts based on which our algorithm is designed, and then
present \algo.
%and formally give definitions on some related concepts.
%We then 
%present the CASC model
%and optimization techniques.
Section~\ref{sec:exp}
presents experimental results. Finally, Section~\ref{sec:conclusion} concludes the paper.

\comment{
Cluster analysis is a core technique in data mining and machine learning.
Given a set of objects, the general idea of clustering is to group objects that are {\it similar} into the same clusters
and to separate dissimilar objects into different clusters.  
Among existing techniques, {\it spectral clustering} has been shown to be very effective, particularly in the
fields of image segmentation~\cite{shi2000normalized,stella2003multiclass} and
 text mining~\cite{dhillon2001co}.
 %In these contexts,  clusters of objects form relatively dense regions that are separated by low-density regions. 
 
Based on spectral graph theory,
spectral clustering methods transform the clustering problem into a graph partitioning problem.
Given a set of  $n$ objects $\mathcal{X} = \{x_1, \ldots, x_n\}$,
and a similarity matrix $S$, such that the matrix entry $\sij$ captures the affinity of objects $x_i$ and $x_j$,
spectral clustering first constructs a weighted graph $G=(\mathcal{X},S)$, where $\mathcal{X}$ gives the set of vertices and 
$\sij$ gives the weight of the edge connecting $x_i$ to $x_j$.
The graph $G$ is then partitioned with the objective of optimizing 
a criterion that
measures the quality of the partitioning such as the \emph{normalized cut}~\cite{shi2000normalized}.

Figure~\ref{figure:flow_graph}(a) illustrates the key steps of basic spectral clustering\footnote{There are a number 
of variants of spectral clustering methods. Our description here is based on the
NJW method~\cite{ng2001spectral}.
More details will be presented in Section~\ref{sec:related}.}.
Given a similarity matrix $S$, we first compute a normalized Laplacian matrix  $L$ from $S$. 
Then, we apply eigen-decomposition on $L$ to obtain the $k$ smallest eigenvectors\footnote{We say that an eigenvector 
$\bme_i$ is smaller than another eigenvector $\bme_j$ if $\bme_i$'s eigenvalue is smaller than 
that of $\bme_j$'s.},
$\bme_1, \ldots, \bme_k$.
Let $M$ be an $k \times n$ matrix whose $i$-th row is $\bme_i$.
We take the $j$-th column of $M$ as the feature vector of object $x_j$ and perform $k$-means clustering 
on the objects. 
In a nutshell, spectral clustering methods
map objects into low dimensional embeddings
using the $k$ smallest eigenvectors.
%They have been shown to have interesting properties.
%For example, it is shown that for the special case of $k=2$, 
%the partitioning produced provides a guaranteed approximation 
%to the optimal cuts~\cite{chung1997spectral}.
%\ben{(I don't understand this statement:) Also, It has been experimentally shown that more eigenvectors will be better~\cite{alpert1995spectral,malik2001contour}}.

\begin{figure}
    \centering
        \includegraphics[width = 1.09\linewidth]{flow_graph3}
        \caption{The key steps of (a) basic spectral clustering; (b) with local scaling and PI; (c) ROSC}
        \label{figure:flow_graph}
\end{figure}

\comment{
For example, for a 2-cluster partition $V=V_1 \cup (V \backslash V_1)$, 
the \emph{normalized cut} is defined as
\begin{equation}
\label{eq:ncut}
Ncut(V_1, V\backslash V_1) = \sum_{i\in V_1,j\in V\backslash V_1} S_{ij}[\frac{1}{a(V_1)} + \frac{1}{a(V\backslash V_1)}]
\end{equation}
where $a(V_1) = \sum_{i\in V_1, j\in V}S_{ij}$.
Minimizing the normalized cut was proposed in~\cite{shi2000normalized} and
the optimization problem is NP-hard.
It has been shown that cuts based on the eigenvector corresponding to the second largest eigenvalue of the normalized graph Laplacian
$D^{-1}(D-S)$
give a guaranteed approximation to the optimal cuts~\cite{chung1997spectral,shi2000normalized},
where $D$ is a diagonal matrix with $D_{ii} = \sum_jS_{ij}$.
It is easy to further extend the normalized cut criterion to the case of $k$ clusters,
and cuts based on the largest $k$ eigenvectors will give a guaranteed approximation.
From this viewpoint, spectral clustering is closed related with the spectral analysis to a matrix.
However, since the object cluster membership is hard,
mapping from these eigenvectors to the discrete cluster membership is required.
}

\comment{
Clustering fundamentally serves as an analysis tool in data mining and machine learning,
and spectral clustering is one important type.
Some basic clustering methods, such as $k$-means and EM clustering~\cite{dempster1977maximum}, 
explicitly or implicitly pre-assume that data should fit specific distributions.
%which assumes data follows Gaussian distribution.
Obviously, when data is complex, these methods tend to fail.
%especially those which contain non-Gaussian distributed clusters.
In contrast, spectral clustering may work well on such data.
Instead of making assumptions on the data distribution,
it translates clustering into a graph partition problem,
which aims to achieve strong intra-cluster relations and weak inter-cluster relations between objects.
Objects are clustered by a spectral analysis on the normalized graph Laplacian,
which has been theoretically proved~\cite{chung1997spectral,spielmat1996spectral}.
}

\comment{
\begin{figure}
    \centering
        \includegraphics[width = \linewidth]{figure/syn1_intro.eps}
        \caption{An toy example}
        \label{figure:SYN1_intro}
\end{figure}
}

%%%%%%%%%% Ben: I masked the results of SYN1 because we are thinking of removing it.
\begin{figure}
    \centering
        \includegraphics[width = \linewidth]{figure/example_intro.eps}
        \caption{(a) A multi-scale dataset, (b) clustering by NJW}
        \label{figure:syn1}
\end{figure}

Despite the successes of spectral clustering, previous works have
pointed out that spectral methods
can be adversely affected by the presence of 
\emph{multi-scale data}~\cite{zelnik2004self,nadler2006fundamental},
which is defined as data whose object clusters are of various sizes and densities.
As an illustrative example, 
Figure~\ref{figure:syn1}(a) shows a dataset of three clusters: 
two dense rectangular clusters on top of a narrow sparse stripe cluster.
Figure~\ref{figure:syn1}(b) shows the result of applying the standard spectral clustering method NJW
to the dataset. We see that the stripe cluster is segmented into three parts, two of which are
incorrectly merged with the rectangular clusters. 
The objective of this paper is to address the multi-scale data issue in spectral clustering. 
In particular, we review existing methods for handling multi-scale data, provide insight into how
the issue can be addressed, and put forward our algorithm called ROSC which outperforms existing 
methods in clustering multi-scale data.

There are two general approaches to address the multi-scale data problem: one on scaling the similarity matrix 
$S$ and another on applying the power iteration technique to obtain pseudo-eigenvectors that contain rich cluster separation
information.

Recall that spectral clustering methods model data objects as a graph and
perform clustering by graph partitioning.
The similarity matrix $S$ should therefore capture objects' local neighborhood information. 
A common choice of such a similarity function is the
Gaussian kernel:
$\sij = \exp \left(-\frac{||\bmx_i-\bmx_j||^2}{2\sigma^2}\right)$,
where $\bm x$ (boldface) denotes a feature vector of an object $x$, 
%$||\cdot||$ denotes the standard Euclidean distance
and $\sigma$ is a global scaling parameter.
A major difficulty in using the Gaussian function to cluster multi-scale data lies in the choice of
$\sigma$. 
If $\sigma$ is set small, then $\sij$ will become small.  Objects in a sparse cluster (which are 
relatively distant among themselves compared with objects in a dense cluster) will likely be
judged as dissimilar leading to partitioning of the cluster. On the other hand, if $\sigma$ is set large,
$\sij$ will be large. Hence,
nearby dense clusters could be judged similar to each other and are inadvertently merged. 

To tackle this problem, the ZP method~\cite{zelnik2004self} applies {\it local scaling} and modifies the Gaussian similarity
to $\sij = \exp\left (-\frac{||\bmx_i-\bmx_j||^2}{\sigma_i\sigma_j}\right)$.
Here, $\sigma_i$ (and likewise for $\sigma_j$) is a local scaling parameter for object $x_i$.
It is defined as the distance between $x_i$ and its $l$-th nearest neighbor for some 
empirically determined value $l$.
If $x_i$ is located in a sparse cluster, then $\sigma_i$ will be large.
This boosts the similarity of $x_i$ and its neighboring objects and thus avoids the splitting of 
sparse clusters. 
Also, if $x_i$ is located in a dense cluster, $\sigma_i$ will be small.
Objects will have to be very close to be considered neighbors.
This avoids the merging of nearby dense clusters. 

Previous studies~\cite{alpert1995spectral,ye2016fuse} have suggested that employing more eigenvectors beyond the $k$ smallest ones
can help capture more cluster separation information and thus improve spectral clustering in handling multi-scale data.
A {\it power iteration} (PI) method~\cite{saad2011numerical} was put forward as an efficient method for computing the
dominant eigenvector of a matrix. 
It is observed in~\cite{lin2010power} that one can
{\it truncate} the iteration process to obtain an 
intermediate \pev\ $\bmv_t$.
It is shown that $\bmv_t$ represents a weighted linear combination of all the eigenvectors and is thus
a very effective replacement of the $k$ smallest eigenvectors typically used in a standard spectral
clustering process. 
Figure ~\ref{figure:flow_graph}(b) shows how the local-scaling method (green box) and the power iteration method (yellow box)
are integrated into the basic spectral clustering method. 

In this paper we take a different approach to tackle the multi-scale data problem. 
The core idea is to construct an $n \times n$  coefficient matrix $Z$ such that the entry $\zij$\footnote{Given a matrix $M$, 
%unless otherwise specified, 
we use a pair of subscripts to specify an entry of $M$.} reflects how well 
an object $x_i$  characterizes another object $x_j$. 
Our objective is to derive such a $Z$ with  ``grouping effect'': 
if two objects $x_i$ and $x_j$ are highly correlated (and thus should be put into the same cluster), 
then their corresponding coefficient vectors 
$\bmz_i$ and $\bmz_j$ given in $Z$ are similar.
The interesting question we address is how to find such a $Z$.

The main feature of our algorithm ROSC is illustrated by the red box shown in Figure ~\ref{figure:flow_graph}(c). 
Instead of using PI to obtain low dimensional embeddings of the objects as input to $k$-means
(yellow box in Figure ~\ref{figure:flow_graph}(b)),
ROSC uses the embeddings to construct the matrix $Z$ (upper path in the red box).
We note that two objects that belong to the same cluster could be located at distant far ends of a cluster,
their high correlation is therefore not expressed properly by a distance-based similarity matrix $S$.
To capture the high correlation between distant objects in the same cluster,
we propose to use a transitive $K$ nearest neighbor (TKNN) graph (lower path in the red box). 
Two objects $x_i$ and $x_j$ are connected by an edge in the TKNN graph
if there is a sequence of objects $<x_i, \ldots, x_j>$ such that adjacent objects in the sequence 
are mutual $K$ nearest neighbors of each other. 
We use the TKNN graph to regularize the matrix $Z$ so that it possesses the desired grouping effect.
The matrix $Z$ is then fed to the pipeline of spectral clustering, taking the role of $S$.

%To tackle the problem,
%researchers mainly focus on two aspects.
%First, construct a more effective similarity matrix.
%The Gaussian kernel is based on the feature distance between objects, 
%however, in the case of clusters with multiple densities,
%the average distance between objects in the sparse cluster is larger than in the dense cluster.
%Therefore, 
%the average similarity between objects in the sparse cluster is smaller than in the dense cluster,
%leading to the sparse cluster being more likely to be partitioned.
%A representative method in this kind is 
%the self-tuning spectral clustering method ZP.
%Instead of using the global scaling parameter, 
%ZP considers the local statistics information for each object
%and calculates $S_{ij} = exp(-\frac{||\bmx_i-\bmx_j||^2}{\sigma_i\sigma_j})$,
%where $\sigma_i,\sigma_j$ denote the local scaling parameters for objects $\bmx_i,\bmx_j$ respectively.
%$\sigma_i$ is set to be the distance between $\bmx_i$ and its $l$-th neighbor,
%where $l$ is set empirically.
%By considering the local density, ZP can increase the similarity between objects in the sparse cluster.
%However, when clusters are of uniform density but different sizes,
%objects will have similar local scales and ZP will fail. 
%Further, the miscalculation on $\sigma_i$ can also lead to the poor clustering performance.

\comment{
Second, employ more eigenvectors.
%The standard spectral clustering methods use only $k$ eigenvectors.
%(or the $k$ largest eigenvectors of the graph similarity matrix).
%However, it is not unlikely that the $k$-th eigenvector corresponds to some particularly salient noise in the data,
%while the $k$+1-th eigenvector contains good cluster indicators and then it will be missed totally~\cite{lin2010power}.
It has been pointed out that when dealing with clusters of different scales, 
standard spectral clustering using only $k$ eigenvectors may fail~\cite{nadler2006fundamental,ye2016fuse}.
Therefore, some works propose to use more eigenvectors to derive more cluster-separation information
based on power iteration, which provides a way to fuse information in all eigenvectors (will be introduced in the next section).
However, more eigenvectors also bring the redundancy and noise problem, 
which further bring negative effects.
%Recently, power-iteration based methods have attracted great attention.
%the full spectral clustering method FUSE was proposed to
%fuse the useful cluster-separation information in all the eigenvectors, but 
%it neglects the noise reduction.
Finally,
we emphasize that methods of both kinds 
are not mutually independent.
Some approaches use more eigenvectors based on a locally scaled similarity matrix.
The difference is that 
the former focuses more on the similarity matrix construction 
while the latter highlights more on the usage of eigenvectors based on a given similarity matrix.
}

\comment{
On the other hand,
more eigenvectors can indeed bring more cluster-separation information, but in the meantime, 
more redundancy and more noise,
which further arouses the redundancy and noise reduction problem.
Existing methods attempt to address these problems, 
but no one can be widely applied.
For example, the self-tuning spectral clustering method ZP~\cite{zelnik2004self},
which introduces local scaling, may fail 
when large clusters have comparable densities with small clusters~\cite{nadler2006fundamental};
the full spectral clustering method FUSE~\cite{ye2016fuse}, 
which adopts \emph{independent component analysis} (ICA) to 
reduce redundancy, neglects noise reduction in their case.
To investigate the robustness of these methods, 
we first conduct a group of experiments on two multi-scale datasets as shown in Fig.~\ref{figure:syn1}(a) and~\ref{figure:syn2}(a) respectively. 
}

\comment{
The first dataset consists of 
three uniformly distributed clusters with 500, 80 and 120 objects respectively.
Since the largest rectangular cluster is of large length,
the two-end objects in the cluster are far away from each other,
i.e., they are less similar. 
Further, the closeness between the two small clusters and the large cluster increases more difficulty in clustering.
The second dataset
is composed of five clusters:
two Gaussian distributed with 100 objects each,
two uniformly distributed with 150 and 200 objects respectively 
and an annular cluster with 100 objects.
The annular cluster is close to the other clusters and it is hard to be identified.
Fig.~\ref{figure:syn1} and~\ref{figure:syn2} also show the clustering results for ZP and FUSE.
We observe that ZP performs well on SYN2 but poorly on SYN1 
while FUSE works better on SYN1 but fails on SYN2.
We further notice that ROSC, our proposed method in this paper,
achieves favorable results on both datasets.
From Fig.~\ref{figure:syn1},
ROSC can identify the three clusters with some errors in the small clusters
while FUSE separates the large cluster into two clusters.
From Fig.~\ref{figure:syn2}, ROSC performs well in identifying the annular cluster
while ZP correctly finds Gaussian distributed and uniformly distributed clusters
with misclassification on some objects in the annular cluster.
Obviously, ROSC is more robust than ZP and FUSE.
}

\comment{
Although existing methods attempt to improve spectral clustering on multi-scale data,
we notice that they are unstable.
The instability originates from the fact that none of them settles the problem from the origin.
The effect of spectral clustering depends on the quality of the similarity matrix.
Suppose a block diagonal matrix is given, in which each block corresponds to one cluster,
spectral clustering can definitely perform well because there exist only intra-cluster similarities but no inter-cluster similarities.
Some approaches aim to construct a locally scaled similarity matrix,
but the difficulty in estimating the local scales may 
result in inaccurate similarities which fail in reflecting the true similarities between objects.
Further, they lack a scheme to rectify these inaccurate similarities. 
Different from all the existing methods,
this paper attempts to improve spectral clustering from the origin of the problem. 
Given a similarity matrix, we aim to derive a new matrix which can better reflect the true similarities between objects.
To achieve such goal,
the new matrix should inherit the accurate similarities in the original matrix
and further rectify the inaccurate ones.
The rectified matrix will lead to more robust spectral clustering, as shown in Fig.~\ref{figure:syn1}(b) and~\ref{figure:syn2}(b).
}

\comment{
While these two bottlenecks have attracted great attention,
most of the proposed methods are dedicated to only one aspect~\cite{yan2009fast,chen2011large,zelnik2004self,li2007noise}.
Recently, Ye et al.~\cite{ye2016fuse} put forward a novel method FUSE 
to address both the two problems simultaneously.
Since eigen-decomposition requires high time complexity, it employs \emph{power iteration},
in which eigenvector calculation is replaced by a small number of matrix-vector multiplications,
to derive a set of pseudo-eigenvectors.
The derived pseudo-eigenvectors are fused by all the eigenvectors, 
which contain all the cluster-separation information.
Then it adopts \emph{Independent Component Analysis} (ICA)~\cite{learned2003ica, bohm2008outlier} to rotate these vectors 
and get statistically independent ones.
These pseudo-eigenvectors are then regarded as feature vectors, 
to which k-means would be applied to derive the final clustering result.
FUSE achieves both effectiveness and efficiency, 
however, there still remain some problems. 
First, the derived pseudo-eigenvectors may be redundant and noise corrupted. 
Despite rotation by ICA reducing redundancy, the noise still remains, which weakens the effectiveness.
Second, a self-adaptive greedy strategy is developed to search for directions towards the optimal solution, however,
it might still be trapped in the local optimum.
Third, the time complexity of FUSE relies on
the maximal rank considered by the low-rank decomposition algorithms for pair-wise mutual information estimation in ICA. 
When the rank is as close as $n$, FUSE will run slowly. 
To this end, spectral clustering deserves further exploration from both perspectives of effectiveness and efficiency.
}

Our main contributions are:

\noindent$\bullet$
We address the multi-scale data problem in spectral clustering
and propose the ROSC algorithm. ROSC uses PI to derive a coefficient matrix $Z$, which is 
regularized by a TKNN graph. The regularized $Z$ replaces the similarity matrix $S$ in the spectral
clustering process.

\noindent{\small$\bullet$}
We mathematically prove that the regularized $Z$ possesses the grouping effect. 
Hence, it is effective in improving clustering results.

\noindent{\small$\bullet$}
We conduct extensive experiments %using synthetic and real datasets 
to evaluate the performance of ROSC
against $9$ other clustering methods. 
Our results show that ROSC performs very well against the competitors. 
In particular, it is very robust in that it consistently performs well over all the datasets tested. 
Also, it outperforms others by wide margins for datasets that are highly multi-scale. 

The rest of the paper is organized as follows.
%In Section~\ref{sec:preliminary} we give more details of spectral clustering and briefly 
%describe the power iteration method.
Section~\ref{sec:related} mentions related works and describes a number of key
spectral clustering algorithms.
Section~\ref{sec:algorithm} presents the ROSC algorithm.
Section~\ref{sec:exp} describes the experiments and presents experimental results.
Finally, Section~\ref{sec:conclusion} concludes the paper.

%\noindent{\small$\bullet$}
%We propose a transitive $K$ nearest neighbor (TKNN) graph.
%In the graph,
%objects in the same cluster but far away in the feature space can be connected
%while objects in different clusters but close to each other can be disconnected.
%
%\noindent{\small$\bullet$}
%We put forward a robust spectral clustering method ROSC on multi-scale data.
%Based on power iteration, ROSC fuses more cluster-separation information in more eigenvectors
%and reduces the redundancy and noise contained.
%%and it integrates the noise reduction with matrix rectification.
%It further applies the TKNN graph to rectify the raw similarity matrix 
%and derives a more effective one,
%leading to a more robust spectral clustering.
%
%\noindent{\small$\bullet$}
%We conduct extensive experiments to
%prove the robustness of ROSC.
%We compare ROSC with state-of-the-art methods on both synthetic and real datasets
%with respect to three clustering measures.
%All the experimental results show that ROSC is indeed a robust spectral clustering method.

\comment{
Recently, in machine learning and computer vision areas, 
%data can be viewed as points drawn from multiple low-dimensional subspaces, with each subspace corresponding to one category or class.
subspace segmentation has been well studied,
which aims to segment (cluster) high-dimensional objects into the low-dimensional subspaces where they are drawn from.
A number of methods have been proposed, such as LRR~\cite{liu2010robust}, LSR~\cite{lu2012robust}, etc.
In these methods, given a set of data vectors $\mathcal{X} = [\bm{x}_1, \bm{x}_2,...,\bm{x}_n]$ (each column is an object) in $\mathbb{R}^d$, 
each object is represented by 
a linear combination of the basis in a dictionary $A = [\bm{a}_1,\bm{a}_2,...\bm{a}_m]$:
\begin{equation}
\nonumber
X = AZ,
\end{equation}
where $Z$ is a coefficient matrix.
Considering the case that data vectors are corrupted by noise, 
a more robust model is thus modified as 
\begin{equation}
\nonumber
X = AZ + E,
\end{equation}
where $E$ is utilized to capture noise.
Subject to the constraint, different optimization objectives on $Z$ (and $E$) are adopted by different methods,
and they desire to derive a low-dimensional mapping $Z$ which can reflect the true subspace structure in the original data.

In this paper, we aim to improve spectral clustering from both effectiveness and efficiency.
Similar to FUSE, we first use power iteration to derive a set of pseudo-eigenvectors 
which inherit all the cluster-separation information in all the eigenvectors.
Then we resort to the subspace segmentation model to purify the pseudo-eigenvectors.

To be continued.
}
}

\section{Related Work}
\label{sec:related}

Spectral clustering is a widely studied topic~\cite{bojchevski2017robust,correa2012locally,li2007noise,chen2018spectral,xiang2008spectral}.
%Given a similarity matrix $S$,
%standard spectral clustering methods
%vary in the way they normalize the graph Laplacian $L = D - S$, where $D$ is the diagonal matrix with $D_{ii} = \sum_j S_{ij}$.
%For example,
%the NCuts~\cite{shi2000normalized} method uses the random-walk-based normalization $D^{-1} (D-S)$
%while the NJW~\cite{ng2001spectral} method employs the symmetric normalization $D^{-\frac{1}{2}}(D-S)D^{-\frac{1}{2}}$.
There are many works that study various aspects of spectral clustering, such as computational efficiency~\cite{chen2011large,cullum2002lanczos,yan2009fast},
clustering performance on data with different characteristics~\cite{huang2009spectral,von2008consistency,zhu2014constructing},
and the theoretic foundations of the method~\cite{lafon2006diffusion,nadler2005diffusion,meila2001random}.
%Moreover, spectral clustering has been widely applied in various fields [xxx].
An introduction to spectral clustering is given in~\cite{von2007tutorial,kannan2004clusterings,ng2001spectral}.

Despite the success of spectral clustering,
previous works~\cite{nadler2006fundamental,ye2016fuse} have pointed out that
spectral methods can be adversely affected when data is multi-scale.
To address the problem,
the \emph{self-tuning spectral clustering} method ZP~\cite{zelnik2004self}
uses \emph{local scaling} to extend a Gaussian kernel based similarity  
$\sij = \exp \left(-\frac{||\bmx_i-\bmx_j||^2}{2\sigma^2}\right)$ to $\sij = \exp\left (-\frac{||\bmx_i-\bmx_j||^2}{\sigma_i\sigma_j}\right)$,
where $\bm x$ (boldface) denotes the feature vector of an object $x$, 
%$||\cdot||$ denotes the standard Euclidean distance
and $\sigma$ is a scaling parameter.
The original formulation uses a global scaling parameter $\sigma$ for every object pair,
which is difficult to set.
When $\sigma$ is set small, $S_{ij}$ is small
and it cannot effectively
capture the high correlation between distant objects in a large sparse cluster.
On the contrary,
when $\sigma$ is set large,
$S_{ij}$ is large. Objects from different but nearby dense clusters 
will  then more likely to be mis-judged as similar.
To address the issue,
ZP introduces a local scaling parameter $\sigma_i$ for each object $x_i$,
which is defined as the distance between $x_i$ and its $l$-th nearest neighbor ($l$ can be empirically set).
For an object $x_i$ in a sparse cluster,
$\sigma_i$ is large. This enlarges the similarity between $x_i$ and other distant objects in the same cluster of $x_i$.
Also, a dense cluster gives a small $\sigma_i$, which effectively
%of an object $x_i$ in a dense cluster
decreases the similarity between $x_i$ and objects from nearby clusters.

%Although standard spectral clustering methods use the $k$ smallest eigenvectors 
Other previous works~\cite{alpert1995spectral} have suggested to use more eigenvectors 
to capture more cluster separation information to 
improve the effectiveness of spectral clustering on multi-scale data.
There are methods that employ pseudo-eigenvectors generated by the power iteration (PI) technique.
%which are weighted linear combination of all the eigenvectors.
%and thus contain more cluster separation information.
The PI method is used to compute
the dominant eigenvector of a matrix.
%For details of the PI method, please see [xxx].
Lin et al.~\cite{lin2010power}  point out that one can truncate the iterative PI process to 
obtain an intermediate pseudo-eigenvector.
The pseudo-eigenvector is a weighted linear combination of all the eigenvectors
and thus contains rich cluster separation information.
%The basic spectral clustering methods use
%the $k$ smallest eigenvectors of the (normalized) graph Laplacian,
%e.g., $D^{-1}(D-S)$,
%which are equivalent to the $k$ largest eigenvectors of $W = D^{-1}S$.
%However, as pointed out in~\cite{lin2010power},
%it is possible that 
%some of 
%these smallest eigenvectors correspond to salient noise in the data,
%while some non-top-k-smallest eigenvectors contain rich cluster separation information.
%Also, there exist some works [xxx] suggesting
%using more eigenvectors to improve the effectiveness of spectral clustering on multi-scale data.
They propose the \emph{Power Iteration Clustering} (PIC) method based on the idea.
%which truncates the power iteration process on $W$ to obtain an intermediate pseudo-eigenvector.
%which is a weighted linear combination of all the eigenvectors.
%where the pseudo-eigenvector is used as a replacement of the $k$ smallest eigenvectors
%and
%fed into the step of $k$-means clustering.
Under PIC, however, each object has only one feature value given by the lone \pev.
When the number of clusters is large,
a single pseudo-eigenvector is not enough to handle the cluster collision problem~\cite{lin2012scalable}. 
The PIC-$k$ method~\cite{lin2012scalable} is subsequently proposed to address this issue.
PIC-$k$ runs PI multiple times to generate multiple pseudo-eigenvectors.
These pseudo-eigenvectors provide more features of objects for more effective clustering.
One issue of the PIC-$k$ method is that the {\pev}s are not orthogonal and thus are redundant.   
To reduce redundancy, \cite{thang2013deflation} proposes
a \emph{Deflation-based Power Iteration Clustering} (DPIC) method
that uses Schur complement to generate orthogonal pseudo-eigenvectors.
Another issue of the PIC-based methods is that
the more dominant eigenvectors are assigned larger weights in the PI iteration. 
Generally, they overshadow other lesser but indispensable {\ev}s.
To address this issue,
the \emph{Diverse Power Iteration Embedding} (DPIE) method~\cite{huang2014diverse} is proposed.
When a new pseudo-eigenvector is generated in DPIE,
information of previously generated pseudo-eigenvectors is removed from the new one.
%DPIE wisely sets threshold to ensure the process to generate the $i$-th pseudo-eigenvector
%stops earlier than the one to generate the ($i$-1)-th pseudo-eigenvector.
Ye et al.~\cite{ye2016fuse} put forward 
a \emph{Full Spectral Clustering} (FUSE) method.
It uses independent component analysis to rotate $p > k$ generated pseudo-eigenvectors 
and make them statistically independent.
The $k$ most informative ones are used for clustering. 
Finally, in~\cite{li2018rosc}, a spectral clustering algorithm for multi-scale data, ROSC, is proposed. 
Details of ROSC will be covered in the next section.

\comment{
Spectral clustering is a well-studied topic.
For an introduction and an analysis of the method, see~\cite{von2007tutorial,ng2001spectral,kannan2004clusterings}.
%Figure~\ref{figure:flow_graph}(a) shows the basic pipeline of spectral clustering. 
There are a number of variants of the basic method, some of which differ in the 
way they normalize the graph Laplacian,
$D-S$, where $D$ is the diagonal matrix with $D_{ii} = \sum_j S_{ij}$.
For example, the \emph{normalized cuts} (NCuts) method~\cite{shi2000normalized} 
employs random-walk-based normalization $D^{-1}(D-S)$
while the \emph{Ng-Jordan-Weiss} (NJW) method~\cite{ng2001spectral}  uses symmetric normalization $D^{-\frac{1}{2}}(D-S)D^{-\frac{1}{2}}$.

There are many previous works studying the various aspects of spectral clustering techniques.
For example, there are studies
that focus on the performance of spectral clustering under different data characteristics~\cite{huang2009spectral, von2008consistency, zhu2014constructing},
on computational efficiency~\cite{cheng2003recursive,
chen2011large,cullum2002lanczos,gittens2013approximate,yan2009fast,dhillon2007weighted},
and on the probabilistic theory of the method~\cite{meila2001random,nadler2005diffusion,lafon2006diffusion}.

\comment{
The first one is the high computation complexity due to eigen-decomposition, 
which would be $O(n^3)$ in general.
A great number of methods have been proposed to solve the problem~\cite{cheng2003recursive,
song2008parallel,cullum2002lanczos,gittens2013approximate,wang2009approximate,dhillon2007weighted}
and next we summarize some major works as follows.

\textbf{[Random sampling based approaches]}
This kind of methods aim to speed up spectral clustering by the sampling technique.
Yan et al.~\cite{yan2009fast} proposed k-means-based approximate spectral clustering (KASP)
in which k-means is first used as a preprocessor to derive $k$ cluster centroids as representatives. 
Then the original spectral clustering method is performed on 
these centroids. Finally, the cluster membership of each object is 
determined by its associated nearest centroid. 
In~\cite{shinnou2008spectral}, a slightly different way is adopted to reduce the data size. It first runs k-means to derive a clustering result
and then pick some objects near to the center of each cluster to form the so-called committee data.
Finally, the original spectral clustering is performed on the data except committee.
%Both the two methods above require an accurate initial clustering result by k-means.
Based on the Nystr\"{o}m method, 
\cite{fowlkes2004spectral} randomly chooses samples to calculate small-size eigenvectors
and then extrapolates eigenvectors to the entire data. 
Although these methods reduce the computation cost, 
they are quite sensitive to the sampling quality.

\textbf{[Singular value decomposition (SVD) based approaches]}
Since eigen-decomposition on the $n\times n$ matrix is computationally costly, methods in this kind generally 
select $m$ representative objects (or construct $m$ virtual objects)
and compute a $n\times m$ similarity matrix,
where $m\ll n$. 
Then instead of performing eigen-decomposition,
truncated SVD is used to derive a set of singular vectors, 
which could approximate the principle components of the subspace spanned by the top-k eigenvectors of the $n\times n$ matrix.
The main difference in these methods is the construction of similarity matrix.
With random projection and sampling technique, \cite{sakai2009fast} first maps the original data space into low dimensions and
randomly choose $m$ objects as examples. 
Later, a similarity matrix between original objects and example objects is approximately calculated by the low dimensional feature vectors.
Chen et al.~\cite{chen2011large} select a set of points as landmarks
and use them to encode all the data points. 
A sparse similarity matrix is then computed by Nadaraya-Watson kernel regression~\cite{hardle1990applied}.
In~\cite{Liu:2013:LSC:2540128.2540342}, a small number of dummy supernodes are generated to connect to the regular nodes
and a bipartite graph is thereby constructed.
Through such transformation, a small size similarity matrix can be computed between regular nodes and supernodes.
}

There are also works that point out the degradation of spectral clustering's effectiveness when
data is multi-scale~\cite{nadler2006fundamental} or noisy~\cite{li2007noise}. 
To address these problems,
a number of methods have been proposed~\cite{zelnik2004self,correa2012locally,li2007noise,bojchevski2017robust}.
%Some approaches~\cite{zelnik2004self,correa2012locally} consider the local density of each object to compute a locally scaled similarity matrix.
For example,
the \emph{self-tuning spectral clustering} method ZP~\cite{zelnik2004self} 
considers the local statistics information of each object and
constructs a locally scaled similarity matrix as we have described in Section~\ref{sec:intro}.
Moreover, ZP estimates the number of clusters by rotating the eigenvectors
to best align them with a canonical coordinate system.
The number of clusters that gives the minimal alignment cost is then selected.

Standard spectral clustering uses the so-called ``most informative" eigenvectors.
Heuristically, the $k$ smallest {\ev}s are usually taken as the most informative ones. 
However, it is pointed out in~\cite{lin2010power} that it is possible
that some of these smallest {\ev}s correspond to some particularly salient noise in the data,
while other non-top-$k$-smallest vectors contain good cluster separation information.
This observation leads to studies of how {\ev}s should be selected in spectral clustering.
For example, Xiang et al.~\cite{xiang2008spectral} 
enhance spectral clustering by measuring the relevance of 
an eigenvector according to how well it can separate data into different clusters, particularly in the
presence of noise.

As we mentioned in Section~\ref{sec:intro}, 
power iteration (PI) can be used to find {\pev}s in spectral clustering.
Given a matrix $W$, PI starts with a random vector $\bm{v}_0\neq \bm{0}$
and iterates:
\begin{equation}
\bm{v}_{t+1} = \frac{W\bm{v}_t}{||W\bm{v}_t||_1}, \;\;\; t \geq 0.
\end{equation} 
Suppose $W$ has eigenvalues $\lambda_1>\lambda_2>...>\lambda_n$
with associated eigenvectors $\bm{e}_1,\bm{e}_2,...,\bm{e}_n$.
We express $\bm{v}_0$ as
\begin{equation}
\label{eq:v0}
\bm{v}_{0} = c_1\bm{e}_1 + c_2\bm{e}_2 + ... + c_n\bm{e}_n,
\end{equation}
for some constants $c_1 \ldots, c_n$.
Let $R = \prod_{i=0}^{t-1} \lVert W\bmv_i \rVert_1$, we have,
\begin{equation}
\begin{split}
\bmv_t & = W^t \bm{v}_0  / R\\
& = (c_1 W^t\bm{e}_1 + c_2 W^t\bm{e}_2 + ... + c_n W^t\bm{e}_n)/R\\
& = (c_1 \lambda_1^t\bm{e}_1 + c_2 \lambda_2^t\bm{e}_2 + ... + c_n \lambda_n^t\bm{e}_n)/R \\
& = \frac{c_1\lambda_1^t}{R}\left[\bm{e}_1 + \frac{c_2}{c_1}\left(\frac{\lambda_2}{\lambda_1}\right)^t\bm{e}_2+...+ \frac{c_n}{c_1}\left(\frac{\lambda_n}{\lambda_1}\right)^t\bm{e}_n\right].
\end{split}
\label{eq:pi}
\end{equation}
$\bmv_{t}$ is thus a linear combination of the {\ev}s.
Moreover, if
$\bmv_0$ has a nonzero component in the direction of $\bm{e}_1$ (i.e., $c_1 \neq 0$),
$\bmv_{t}$ converges to a scaled version of the dominant \ev\ $\bm{e}_1$. 

In the context of spectral clustering, we set $W=D^{-1}S$.
It is easy to see that the $k$ smallest eigenvectors of $D^{-1}(D-S)$ computed in NCuts
are equivalent to the $k$ largest eigenvectors of $W$.
Lin et al.~\cite{lin2010power} propose the \emph{Power Iteration Clustering} (PIC) method.
PIC employs truncated power iteration to obtain an intermediate \pev\ $\bmv_t$,
which is shown to be a weighted linear combination of all the eigenvectors.
The $j$-th component of $\bmv_t$, $\bmv_t[j]$, is taken as the feature of object $x_j$.
Based on these feature values, 
$k$-means clustering is applied to cluster objects. 

\comment{
To handle noise, a warping model was proposed in~\cite{li2007noise} to map objects into 
new space, in which clusters become relatively compact and well separated, 
including a noise cluster composed of noise objects.
Then a similarity matrix is computed in the new space and 
standard spectral mapping is employed to derive low-dimensional representation for objects.
The combination of warping mapping and spectral mapping is used as the final embedding for objects
and k-means will be applied to return clusters.}

PIC uses PI to obtain one single \pev\ from which objects' feature values are extracted. 
A single \pev, however, is generally not enough when the number
of clusters is large due to the {\it cluster collision problem}~\cite{lin2012scalable}. 
In~\cite{lin2012scalable}, the PIC-$k$ method is proposed which 
runs PI multiple times to obtain multiple {\pev}s. These generated {\pev}s provide 
more dimensions of object features that are used in the $k$-means clustering step of spectral clustering. 
These {\pev}s, however, could be similar to each other and are thus redundant. 
In ~\cite{thang2013deflation}, a \emph{Deflation-based Power Iteration Clustering} (DPIC) method is
 proposed. DPIC applies Schur complement to generate multiple orthogonal pseudo-eigenvectors
to address the redundancy issue.
Another issue of the PIC method is that the more dominant eigenvectors 
contribute higher weights in the \pev\ (see Equation~\ref{eq:pi}).
They thus overshadow the other minor but indispensable eigenvectors,
especially in the case of multi-scale and noisy data.
To deal with this problem, 
Huang et al.~\cite{huang2014diverse} put forward a \emph{Diverse Power Iteration Embedding} (DPIE) method.
In DPIE, when a new \pev\ is generated, information of other previously obtained {\pev}s is removed
from
the new one.
Also, certain threshold values are wisely set to prevent minor {\ev}s from being overshadowed. 
Recently, a \emph{Full Spectral Clustering} method FUSE~\cite{ye2016fuse} is proposed.
It first generates $p > k$ {\pev}s and then uses independent component analysis (ICA)
to rotate {\pev}s so that they become pairwise statistically independent.
The $k$ rotated pseudo-eigenvectors with the most cluster-separation information are selected 
for clustering.
%Although ICA is employed, it still does not deal with the noise in their formula.

Our algorithm ROSC differs from these previous works in two aspects.
First, ROSC uses PI to obtain {\pev}s not as input to the $k$-means clustering step, but to 
construct a coefficient matrix $Z$ which expresses how one object is characterized by other
objects.
Second, we propose the TKNN graph, which is derived from a locally-scaled similarity matrix $S$.
The TKNN graph is used to regularize the matrix $Z$ so that $Z$ possesses the {\it grouping effect}.
The matrix $Z$ serves as input to the spectral clustering pipeline in place of the original similarity
matrix $S$. ROSC is thus orthogonal to other existing techniques of spectral clustering.

\comment{
Although the two bottlenecks have attracted great attention,
few methods aim to solve both.
In fact, power iteration based methods can
not only reduce the high time complexity,
but also utilize all the relevant eigenvectors rather than just the top-$k$ eigenvectors,
which further leads to a good performance in
data containing multi-scale clusters or much noise.
Recently, without any assumptions, 
FUSE~\cite{ye2016fuse} was proposed 
which has been proved both effective and efficient even on data containing multi-scale clusters.
Its success arises from the fusion of all the cluster-separation information,
but noise is also encoded. Further, based on ICA, it might be trapped into local optimum
and run slowly. These shortcomings motivate us to further explore spectral clustering.
}
}

\section{Algorithms}
\label{sec:algorithm}
In this section we first outline the ROSC algorithm, which is the basis of our algorithm \algo. 
We define TKNN graph and grouping effect.
Then, we discuss the effect of different regularization methods on the coefficient matrix $Z$, which is constructed to derive object correlation. 
After that, we present \algo\ and prove that the matrix $Z$ \algo\ derives has grouping effect and that it promotes sparsity. 

%In this section, we first revisit the ROSC algorithm
%and give formal definitions of the TKNN graph and grouping effect.
%We then show how $\ell_1$-norm and trace Lasso can be 
%used to construct
%a coefficient matrix $Z$, respectively. 
%In the end, 
%we present our proposed algorithm CASC
%and 
%prove that the derived $Z$ of CASC has grouping effect.

\subsection{ROSC}

%The distance-based similarity measure could not effectively capture correlations between objects when data is multi-scale.
%To solve it,
%Li et al.~\cite{li2018rosc} propose a robust spectral clustering method ROSC,
%which constructs a new similarity matrix
%as a replacement of the original one.
%Given a similarity matrix $S$,
%ROSC runs PI multiple times to generate multiple pseudo-eigenvectors.
%Taking these pseudo-eigenvectors as low dimensional embeddings of objects,
%ROSC constructs a coefficient matrix $Z$,
%which represents how well an object can be characterized by others.
%For distant objects in a large cluster,
%ROSC enhances their correlations by regularizing $Z$ with a TKNN graph.
%The matrix $Z$ is proved to possess grouping effect for highly correlated objects.
%However,
%for uncorrelated objects,
%ROSC could not weaken their connections.
%Our algorithm CASC
%uses trace Lasso to regularize $Z$,
%which derives a matrix that can more effectively capture objects' correlations in the presence of multi-scale data.

The basic idea of ROSC is to construct a coefficient matrix $Z$ from a given similarity matrix $S$ and then perform 
spectral clustering based on $Z$. 
To construct $Z$, ROSC first applies PI multiple times to generate $p$ pseudo-eigenvectors.
Whitening~\cite{kessy2017optimal} is used to reduce the redundancy of these pseudo-eigenvectors.
The  $p$ {\pev}s together form a $p \times n$ matrix $X$.
The $q$-th column of $X$ is taken as the feature vector $\bmx_{q}$ of an object $x_q$.
That is, ROSC takes the {\pev}s as low dimensional embeddings of objects. 
Assuming the linear subspace model~\cite{liu2013robust},
ROSC characterizes each object by others with
\begin{equation}
X = XZ + O,
\end{equation}
where $Z \in \mathbb{R}^{n\times n}$ denotes a coefficient matrix
such that each entry $Z_{ij}$ describes how well an object $x_i$ characterizes another object $x_j$,
and $O \in \mathbb{R}^{p \times n}$ is a matrix that captures the noise in the pseudo-eigenvectors.
The more similar two objects are, the more likely one object can be represented by the other.

To correlate objects that are located at far ends of a cluster,
ROSC regularizes $Z$ by a TKNN graph. 
\begin{definition}
\label{def:trans_relation}
\textbf{(Transitive $K$-nearest-neighbor (TKNN) graph)}
Given a set of objects $\mathcal{X} = \{x_1, x_2,..., x_n\}$,
the TKNN graph $\mathcal{G}_K = (\mathcal{X},\mathcal{E})$
is an undirected graph,
where $\mathcal{X}$ is the set of vertices and $\mathcal{E}$ is the set of edges.
Specifically, the edge ($x_i$, $x_j$) $\in \mathcal{E}$
iff there exists a sequence <$x_i,...x_j$>
such that adjacent objects in the sequence are $K$-nearest-neighbors of each other.
We use a reachability matrix $\mathcal{W}$ to represent the TKNN graph, 
whose ($i$,$j$)-entry $\mathcal{W}_{ij} = 1$ if ($x_i$, $x_j$) $\in \mathcal{E}$; 0 otherwise.
\hfill$\Box$
\end{definition}
ROSC optimizes the objective function
\begin{equation}
\label{eq:rosc_l2}
\min_Z\ \| X - XZ\|_F^2 + \alpha_1 \|Z\|_F^2 + \alpha_2 \| Z-\mathcal{W} \|_F^2,
\end{equation}
where the first term reduces noise $O$, the second term is the Frobenius norm of $Z$
and the third term regularizes $Z$ by the TKNN graph.
It is shown in~\cite{li2018rosc} that 
Eq.~\ref{eq:rosc_l2} has a closed-form solution $Z^*$ that has the following grouping effect.

\begin{definition}
\label{def:group_effect}
\textbf{(Grouping effect)}. 
Given a set of objects $\mathcal{X} = \{x_1, x_2,..., x_n\}$,
let $\bmw_q$ denote the $q$-th column of $\mathcal{W}$.
Assume that objects' features are normalized, i.e., $\bmx_q^T\bmx_q = 1$, $\forall 1 \leq q \leq n $.
Let $\xarrow{i}{j}$ denote the conditions:
(1) $\bmx_i^T \bmx_j \rightarrow 1$ and 
(2) $\lVert \bmw_i - \bmw_j \rVert_2 \rightarrow 0$.
A matrix $Z$ is said to have grouping effect
if
\[
\left( \xarrow{i}{j} \right) \Rightarrow \left( |Z_{ip} - Z_{jp}| \rightarrow 0\; \forall 1 \leq p \leq n \right).
\]
\end{definition}
Grouping effect considers both feature similarity and reachability similarity of objects.
Feature similarity is measured based on the closeness of objects' feature vectors (columns in $X$),
while reachability similarity is evaluated by columns in $\mathcal{W}$ that express the connectivity of objects in the TKNN graph.
Since the optimal solution $Z^*$ has grouping effect, 
highly correlated objects in both similarities will have similar coefficient vectors in $Z^*$.
ROSC will thus group the objects in the same cluster.
Note that $Z^*$ may be asymmetric and contain negative values.
ROSC derives a correlation matrix $\tilde{Z} = (|Z^*| + |(Z^*)^T|)/2$ and uses that as input to standard spectral clustering in place of 
the original similarity matrix $S$.
%Finally,
%$\tilde{Z}$ will be taken as the new similarity matrix and 
%a standard spectral clustering method will be applied on $\tilde{Z}$ to return clusters.

\subsection{Sparsity}
\label{subsec:scm}
ROSC focuses on generating a coefficient matrix $Z$ that has grouping effect so that highly correlated objects have similar vector representations
in $Z$. 
For effective clustering, we also require that objects from different clusters have sparse connections.
%The generated matrix $Z$ in ROSC possesses grouping effect for highly correlated objects.
%However,
%for objects from different clusters,
%they should have sparse connections.
%%their sparse correlations should also be taken into consideration.
To enforce sparsity,
a common approach is to use $\ell_1$ regularization.
We thus modify Eq.~\ref{eq:rosc_l2} as
%We thus replace the term $\fnorm{Z}^2$ by $\lonorm{Z}$ in Eq.~\ref{eq:rosc_l2}. Hence,
\begin{equation}
\begin{split}
\label{eq:rosc_l1}
 \min_Z \quad & \frac{1}{2}\| X - XZ\|_F^2 + \alpha_1 \|Z\|_1 + \frac{\alpha_2}{2} \| Z-\mathcal{W} \|_F^2, \\
 \text{s.t.} \quad & \text{diag}(Z) = {\bm 0}, \\
\end{split}
\end{equation}
where the second term is the $\ell_1$-norm of $Z$.
To avoid self-representation of objects,
we further add the constraint $\text{diag}(Z) = {\bm 0}$,
where $\text{diag}(Z)$ is the main diagonal vector of $Z$.
%$Z_{ii} = 0$, $\forall 1 \leq i \leq n$.
The optimization problem in Eq.~\ref{eq:rosc_l1}
is convex and 
it is equivalent to solving the problem:
\begin{equation}
\begin{split}
\label{eq:rosc_l1_transform}
 \min_{Z,J} \quad & \frac{1}{2}\| X - XJ\|_F^2 + \alpha_1 \|Z\|_1 + \frac{\alpha_2}{2} \| J-\mathcal{W} \|_F^2 \\
 \text{s.t.} \quad & J = Z - \text{Diag}(Z), \\
\end{split}
\end{equation}
where $\text{Diag}(Z)$ returns a diagonal matrix whose main diagonal vector is that of $Z$.
Eq.~\ref{eq:rosc_l1_transform} can be solved by
the inexact Augmented Lagrange Multiplier (ALM) method~\cite{zhang2010recent,lin2010augmented},
which minimizes the following augmented Lagrangian function:
\begin{equation}
\nonumber
\begin{split}
L(J,Z) & = \frac{1}{2}||X-XJ||_F^2 + \alpha_1||Z||_1 + \frac{\alpha_2}{2}||J-\mathcal{W}||_F^2 \\
& +tr(Y^T(J-Z+\text{Diag}(Z)))+\frac{\mu}{2}||J-Z+\text{Diag}(Z)||_F^2,
\end{split}
\end{equation}
where $Y$ is the Lagrangian multiplier and $\mu > 0$ is a penalty parameter.
$L$ can be minimized by alternatively updating one variable with the others fixed. 
To update $J$, 
we set $\frac{\partial L}{\partial J} = 0$ and derive
\begin{equation}
\label{eq:update_J_sparse}
J = (X^TX+\alpha_2 I + \mu I)^{-1} (X^TX+\alpha_2 \mathcal{W} - Y + \mu Z -\mu \cdot \text{Diag}(Z)).
\end{equation}
To update $Z$,
we set $\frac{\partial L}{\partial Z} = 0$ and derive
\begin{equation}
\label{eq:update_Z_sparse}
Z = A - \text{Diag}(A)\quad \text{and} \quad A = \mathcal{T}_{\frac{\alpha_1}{\mu}} \left(\frac{Y}{\mu} + J \right),
\end{equation}
where $\mathcal{T}_\eta(\cdot)$ is the shrinkage-thresholding operator
acting on each entry of a given matrix,  
which is defined as
$\mathcal{T}_\eta(v) = (|v| - \eta)_+sgn(v)$.
The operator $(\cdot)_+$ returns the argument value if it is non-negative; 0 otherwise.
The operator $sgn(\cdot)$ gives the sign of the argument value.
Algorithm~\ref{alg_sparse} in Appendix shows
the algorithm that uses inexact ALM to generate a sparse coefficient matrix $Z$.
By the theory of inexact ALM,
the convergence of Algorithm~\ref{alg_sparse} is guaranteed~\cite{lin2010augmented}.

%\begin{algorithm}
%\begin{small}
%\caption{Solving Eq.~\ref{eq:rosc_l1} by inexact ALM}
%\label{alg_sparse}
%\begin{algorithmic}[1]
%\Require $X$, $\mathcal{W}$, $k$, $\rho$, $\mu_{\max}$, $\epsilon$
%\Ensure $Z$
%\State Initialize $J$, $Z$, $Y$, $\mu$
%\While{$\|J-Z+\text{Diag}(Z)\|_{\infty} > \epsilon$}
%\State Update $J$ by Eq.~\ref{eq:update_J_sparse} with the others fixed
%\State Update $Z$ by Eq.~\ref{eq:update_Z_sparse} with the others fixed
%\State Update the multiplier $Y = Y + \mu (J-Z+ \text{Diag}(Z))$
%\State Update $\mu = \min (\rho \mu, \mu_{\max})$
%\EndWhile
%\State \Return $Z$
%\end{algorithmic}
%\end{small}
%\end{algorithm}

%\subsection{Correlation Adaptive Coefficient Matrix}
\subsection{\algo}
\label{subsection:tracelasso}
Although the coefficient matrix $Z$ derived with $\ell_1$ regularization (Eq.~\ref{eq:rosc_l1})
has sparse entries for uncorrelated objects,
the regularization also weakens the connections between correlated objects,
which goes against the grouping effect.
To construct a matrix with both grouping effect 
for highly correlated objects and sparsity for uncorrelated ones,
we regularize $Z$ by the trace Lasso, 
which is a regularizer that takes object correlation into consideration.
Let $X$ be a feature matrix
whose $q$-th column is the feature vector $\bmx_q$ of an object $x_q$.
We normalize $\bmx_q$ such that $\bm{x}_q^T \bm{x}_q = 1$, $\forall 1 \leq q \leq n$.
Let $\bmz$ be the coefficient vector (a vector in the  coefficient matrix $Z$) that corresponds to an object $x$,
the trace Lasso of $\bmz$ is defined as 
\begin{equation}
\label{eq:trace_lasso}
\Omega (\bmz) = \|X \: \text{Diag}(\bmz)\|_*,
\end{equation}
where $\text{Diag}(\bmz)$
%\footnote{If the argument is a matrix, $\text{Diag}(\cdot)$ returns a diagonal matrix whose main diagonal is that of the matrix.}
is the diagonal matrix with $\bmz$ as its main diagonal.
The integration of $X$ into the norm
distinguishes the trace Lasso from other commonly used norms like the $\ell_1$-norm and the $\ell_2$-norm.
Note that,

%It is adaptive to the object correlation,

\begin{equation}
\label{eq:trace_lasso_decomp}
X\text{Diag}(\bmz) = \sum_{q=1}^n z_q \bmx_q \bme_q^T,
\end{equation}
where $\bm{e}_q$'s are  vectors of canonical basis.
Consider $X^T X$ as an encoding of object correlation. 
Then, 
if objects are uncorrelated, i.e., $X^TX = I$, 
the trace Lasso equals the $\ell_1$-norm
due to the orthogonality of $\bmx_i$ and $\bme_i$:
\begin{equation}
\|X\text{Diag}(\bmz)\|_* = \sum_{q=1}^n \|\bmx_q\|_2|z_q| = \sum_{q=1}^n |z_q| = \|\bmz\|_1.
\end{equation}
On the other hand,
if all the objects are highly correlated  and have the same feature vector $\bmx$, i.e., $X^TX = \bm{1}\bm{1}^T$, the
trace Lasso is equivalent to the $\ell_2$-norm:
\begin{equation}
\|X\text{Diag}(\bmz)\|_* = \|\bmx \bm{z}^T\|_* = \|\bmx\|_2\|\bmz\|_2 = \|\bmz\|_2.
\end{equation}
For other cases, the
trace Lasso falls in between the $\ell_1$- and the $\ell_2$- norms:
\begin{equation}
\|\bmz\|_2 \leq \|X\text{Diag}(\bmz)\|_* \leq \|\bmz\|_1.
\end{equation}

Due to this adaptability, 
%The correlation adaptivity enables trace Lasso to
%automatically balance the effect of $\ell_1$-norm and $\ell_2$-norm on a matrix.
%Therefore,
we apply the trace Lasso to regularize $Z$ in Eq.~\ref{eq:rosc_l2}.
%To avoid the self-representation of an object $x_i$,
%let $\tilde{X}_i$ be a matrix derived from $X$ with the $i$-th column removed.
%So do $\tilde{\bmz}_i$ and $\tilde{\bmw}_i$.
Specifically, given an object $x$, we optimize:
\begin{equation}
\label{eq:casc}
 \min_{\bmz}\ \frac{1}{2}\| \bmx - X\bmz\|_2^2 + \alpha_1 \| X \text{Diag}(\bmz)\|_* + \frac{\alpha_2}{2} \| \bmz - \bmw \|_2^2.
\end{equation}
%where the second term is the trace Lasso regularization on $\bmz$.
The optimization problem is convex and can be solved by the inexact ALM method.
We first transform the problem into:
\begin{equation}
\label{eq:casc_ialm}
\begin{split}
\min \quad & \frac{1}{2}\|\bme\|_2^2 + \alpha_1 \|J\|_* + \frac{\alpha_2}{2}\|\bmh\|_2^2 \\
s.t. \quad & \bme = \bmx-X\bmz,\ J = X \text{Diag}(\bmz),\ \bmh = \bmz - \bmw. \\
\end{split}
\end{equation}
The augmented Lagrangian function of Eq.~\ref{eq:casc_ialm} is 
\begin{equation}
\nonumber
\begin{split}
\hat{L}(\bme, J, \bmh, \bmz) & = \frac{1}{2}||\bme||_2^2 + \alpha_1||J||_* + \frac{\alpha_2}{2}||\bmh||_2^2\\
& + \bm{\lambda}_1^T(\bme-\bmx+X\bmz) + \bm{\lambda}_2^T(\bmh-\bmz+\bmw) + tr(Y^T(J-X\text{Diag}(\bmz))) \\
& + \frac{\mu}{2}(||\bme-\bmx+X\bmz||_2^2 + ||J-X\text{Diag}(\bmz)||_F^2 + ||\bmh-\bmz+\bmw||_2^2),
\end{split}
\end{equation}
where $\bm{\lambda}_1$, $\bm{\lambda}_2$ and $Y$ are Lagrangian multipliers, and $\mu > 0$ is a penalty parameter.
We adopt an alternative strategy to update variables of $\hat{L}$ as in Sec.~\ref{subsec:scm}
and the update rules are as follows:
\begin{scriptsize}
\begin{equation}
\label{eq:casc_z}
\bmz = (X^TX + I + \text{Diag}(X^TX))^{-1} \left(-\frac{X^T\bm{\lambda}_1}{\mu} - X^T\bme + X^T\bmx + \frac{\bm{\lambda}_2}{\mu} + \bmh + \bmw + \text{diag}((\frac{Y}{\mu} + J)^TX)\right).
\end{equation}
\end{scriptsize}
Here, we overload the notation $\text{Diag}(\cdot)$, which returns a diagonal matrix whose main diagonal is that of the argument matrix. 
Also,  
$\text{diag}(\cdot)$ returns the main diagonal vector of the argument matrix.
\begin{equation}
\label{eq:casc_e}
\bme = \frac{\mu}{\mu + 1}(-\frac{\bm{\lambda}_1}{\mu} + \bmx - X\bmz),
\end{equation}
\begin{equation}
\label{eq:casc_h}
\bmh = \frac{\mu}{\alpha_2 + \mu}(-\frac{\bm{\lambda}_2}{\mu} + \bmz - \bmw).
\end{equation}
Furthermore, updating $J$ is equivalent to solving the sub-problem:
\begin{equation}
\label{eq:casc_min_j}
\min_{J}\ \frac{\alpha_1}{\mu}||J||_* + \frac{1}{2}||J-X\text{Diag}(\bmz) + \frac{Y}{\mu}||_F^2,
\end{equation}
which is convex and has a closed-form solution that can be solved by the Singular Value Thresholding (SVT) operator~\cite{cai2010singular}.
Suppose the singular value decomposition of a rank-$r$ matrix
$X\text{Diag}(\bmz) - \frac{Y}{\mu} = U\Sigma V^*$,
where $\Sigma = \text{Diag}([\sigma_1, ..., \sigma_r])$ and
$\sigma_i$ is the $i$-th largest singular value.
Let $\tau = \frac{\alpha_1}{\mu}$ and $\mathcal{D}_{\tau} (\Sigma) = \text{Diag}([(\sigma_1 - \tau)_+, ..., (\sigma_r - \tau)_+])$.
The solution to Eq.~\ref{eq:casc_min_j} is:
\begin{equation}
\label{eq:casc_j}
J = U\mathcal{D}_{\tau} (\Sigma) V^*.
\end{equation}
%Finally, the algorithm to solve Eq.~\ref{eq:casc} is given in 
Algorithm~\ref{alg_trace_lasso}, which is summarized in Appendix, shows the procedure for solving Eq.~\ref{eq:casc}.
It is pointed out in~\cite{zhang2010recent,liu2013robust} that 
the convergence of inexact ALM cannot be generally proved
when there are three or more variables.
However, 
the convexity of the Lagrangian function $\hat{L}$ guarantees convergence to some extent~\cite{liu2013robust}.
Moreover, there are ways to ensure convergence, e.g., by observing that
$\mu$ is upper bounded by Step 10 of Alg.~\ref{alg_trace_lasso}.
While it is difficult to prove the convergence theoretically, 
inexact ALM has been empirically observed to perform well in practice~\cite{zhang2010recent}.
%We also observe its superior performance in this paper, which will be reported in Sec.~\ref{sec:exp}.

%\begin{algorithm}
%\begin{small}
%\caption{Solving Eq.~\ref{eq:casc} by inexact ALM}
%\label{alg_trace_lasso}
%\begin{algorithmic}[1]
%\Require $\bmx$, $X$, $\bmw$, $k$, $\rho$, $\mu_{\max}$, $\epsilon$
%\Ensure $\bmz$
%\State Initialize $J$, $\bmz$, $\bme$, $\bmh$, $\bm{\lambda}_1$, $\bm{\lambda}_2$, $Y$, $\mu$
%\While{$\|\bme-\bmx+X\bmz\|_{\infty} > \epsilon$ or $\|\bmh-\bmz+\bmw\|_{\infty} > \epsilon$ or $\|J-X\text{Diag}(\bmz)\|_{\infty} > \epsilon$}
%\State Update $\bmz$ by Eq.~\ref{eq:casc_z} with other variables fixed
%\State Update $\bme$ by Eq.~\ref{eq:casc_e} with other variables fixed
%\State Update $\bmh$ by Eq.~\ref{eq:casc_h} with other variables fixed
%\State Update $J$ by Eq.~\ref{eq:casc_j} with other variables fixed
%\State Update the multiplier $\bm{\lambda}_1 = \bm{\lambda}_1 + \mu (\bme-\bmx+X\bmz)$
%\State Update the multiplier $\bm{\lambda}_2 = \bm{\lambda}_2 + \mu (\bmh-\bmz+\bmw)$
%\State Update the multiplier $Y = Y + \mu (J-X\text{Diag}(\bmz))$
%\State Update $\mu = \min (\rho \mu, \mu_{\max})$
%\EndWhile
%\State \Return $\bmz$
%\end{algorithmic}
%\end{small}
%\end{algorithm}

\subsection{Grouping Effect}
Previous works~\cite{hu2014smooth,lu2013correlation,li2018rosc,lu2012robust} have shown that spectral clustering
is effective when applied to data with grouping effect.
As we defined in Def.~\ref{def:group_effect}, with grouping effect, if two objects are highly correlated,
their characterizations of other objects are similar.
We measure object correlation by both a feature similarity and a reachability similarity. 
We next prove that the coefficient vectors $\bmz$'s regularized by the trace Lasso (Eq.~\ref{eq:casc}) result in grouping effect.

%We start with a lemma as follows.

\begin{lemma}
\label{lemma:epsilon1}
Given $\bmw \in \mathbb{R}^d$, let $\bar{w} = \bmw^T\mathbf{1}/d$.
The optimal solution $\bmy^*$ to the problem:
%\begin{equation}
$min_{\bmy} \|\bmy-\bmw\|_2^2,\ \text{s.t.}\ \bmy^T\mathbf{1} = d\bar{y}$,
%\end{equation}
satisfies $y_j^* = \bar{y} + (w_j - \bar{w})$, $\forall 1 \leq j \leq d$.
Moreover, if $\|\bmw - \bar{w}\mathbf{1}\|_2 \leq \epsilon$, then, 
$\bar{y} - \epsilon \leq y_j^* \leq \bar{y} + \epsilon$,
$\forall 1 \leq j \leq d$.
\end{lemma}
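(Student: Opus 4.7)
The plan is to solve the constrained optimization directly by Lagrange multipliers, then read off the bound on each coordinate from the fact that a single entry of a vector is bounded in absolute value by its $\ell_2$-norm.

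First I would form the Lagrangian $L(\bmy, \nu) = \|\bmy - \bmw\|_2^2 + \nu(\bmy^T\mathbf{1} - d\bar{y})$ and set $\partial L/\partial y_j = 0$ for every $j$. This gives $2(y_j - w_j) + \nu = 0$, i.e., $y_j^* = w_j - \nu/2$. Since the objective is strictly convex and the constraint is affine, this first-order condition pins down the unique minimizer once $\nu$ is determined.

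Next I would determine $\nu$ by plugging $y_j^* = w_j - \nu/2$ into the constraint $\sum_j y_j^* = d\bar{y}$. Summing yields $d\bar{w} - d\nu/2 = d\bar{y}$, so $\nu/2 = \bar{w} - \bar{y}$. Substituting back gives the claimed closed form $y_j^* = \bar{y} + (w_j - \bar{w})$, which proves the first assertion.

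For the second assertion, I would simply observe that $|y_j^* - \bar{y}| = |w_j - \bar{w}|$, and since $|w_j - \bar{w}|$ is the absolute value of the $j$-th component of $\bmw - \bar{w}\mathbf{1}$, it is dominated by the Euclidean norm of that vector. Thus $|y_j^* - \bar{y}| \leq \|\bmw - \bar{w}\mathbf{1}\|_2 \leq \epsilon$, which is exactly $\bar{y} - \epsilon \leq y_j^* \leq \bar{y} + \epsilon$.

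There is really no main obstacle here: the problem is a quadratic program with one equality constraint, so Lagrange multipliers give the minimizer in a single line, and the coordinate bound follows from the trivial inequality $|v_j| \leq \|\bmv\|_2$. The only minor care is to note that $\bar{y}$ is treated as a fixed scalar parameter of the constraint (not a variable), so the constraint is genuinely affine and the KKT conditions are sufficient.
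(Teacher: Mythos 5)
Your proposal is correct and follows essentially the same route as the paper's own proof: a Lagrange multiplier for the single affine constraint gives $y_j^* = w_j - \nu/2$, the constraint fixes $\nu/2 = \bar{w}-\bar{y}$, and the coordinate bound $|y_j^*-\bar{y}| = |w_j-\bar{w}| \leq \|\bmw-\bar{w}\mathbf{1}\|_2 \leq \epsilon$ finishes the argument. Your added remarks on strict convexity and the sufficiency of the first-order conditions are a small but welcome tightening that the paper leaves implicit.
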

\begin{proof}
The Lagrangian function $L'$ of the problem can be written as:
$L' = \sum_{j=1}^d(y_j-w_j)^2 + \beta(\sum_{j=1}^dy_j - d\bar{y})$,
where $\beta$  is the Lagrangian multiplier.
By setting $\frac{\partial L'}{\partial y_j} = 0$,
we get $y_j^* = w_j - \frac{\beta}{2}$.
Since $\bmy$ satisfies 
$\bmy^T\bm{1} = d\bar{y}$,
we substitute $y_j^*$ into the equation
and get $\beta = 2(\bar{w} - \bar{y})$,
$y_j^* = \bar{y} + (w_j - \bar{w})$.
If $\|\bmw - \bar{w}\mathbf{1}\|_2 \leq \epsilon$,
then $|w_j - \bar{w}| \leq \epsilon$,
i.e., 
$- \epsilon \leq w_j - \bar{w} \leq \epsilon$.
Hence,
$\bar{y} - \epsilon \leq y_j^* \leq \bar{y} + \epsilon$,
$\forall 1 \leq j \leq d$.
\end{proof}

Given a set of objects $\mathcal{X} = \{x_1,...,x_n\}$,
let $X$ denote the feature matrix of the objects
and
$\bmz^*$ denote the optimal solution of Eq.~\ref{eq:casc}.
We rearrange $X$ as $X= [\hat{X}, \tilde{X}]$,
where $\tilde{X} \in \mathbb{R}^{d\times q}$ consists of $q$ column vectors that are similar to each other
and $\hat{X} \in \mathbb{R}^{d \times (n-q)}$ consists of the remaining columns.
%is formed by the rest columns of $X$.
In particular,
$\tilde{X}$ satisfies:
\begin{equation}
\nonumber
\max\{\| \tilde{X} - \bar{\bmx}_0\mathbf{1}^T\|_*, \| \tilde{X} - \bar{\bmx}_0\mathbf{1}^T\|_F, \| \tilde{X} - \bar{\bmx}_0\mathbf{1}^T\|_2\} \leq \epsilon,
\end{equation}
where $\epsilon$ is a small positive value,
$\mathbf{1} \in \mathbb{R}^q$ is the all one's vector and
$\bar{\bmx}_0 = \tilde{X}\mathbf{1}/q$ is the mean vector of columns of $\tilde{X}$. 
Similarly, we rearrange $\bmz^* = [\hat{\bmz};\tilde{\bmz}]$.
To prove $\bmz^*$ has grouping effect,
we only need to show that
if $\left \| \tilde{\bmz} - \bar{z} \mathbf{1}\right \|_2 > \delta$,
then $f([\hat{\bmz};\tilde{\bmz}]) > f([\hat{\bmz};\bar{z}\mathbf{1}])$,
where $\bar{z} = \mathbf{1}^T\tilde{\bmz}/q$ is the average value of $\tilde{\bmz}$,
$\delta$ is a positive value
and $f(\bmz) = \frac{1}{2}\| \bmx - X\bmz\|_2^2 + \alpha_1 \| X \text{Diag}(\bmz)\|_* + \frac{\alpha_2}{2} \| \bmz - \bmw \|_2^2$.
Formally,

\begin{theorem}
\label{th:casc_ge}
Given a matrix $X = [\hat{X},\tilde{X}]$ and a vector $\bmw = [\hat{\bmw}; \tilde{\bmw}]$,
let $\bmy^*$ be the optimal solution to the problem: $\min_{\bmy} \| \bmy - \tilde{\bmw} \|_2^2,\ \text{s.t.}\ \bmy^T\mathbf{1} = q\bar{z}$.
$\tilde{X}$ satisfies 
$\max\{\| \tilde{X} - \bar{\bmx}_0\mathbf{1}^T\|_*, \| \tilde{X} - \bar{\bmx}_0\mathbf{1}^T\|_F, \| \tilde{X} - \bar{\bmx}_0\mathbf{1}^T\|_2\} \leq \epsilon$,
and $\tilde{\bmw}$ satisfies
$\bar{w} = \mathbf{1}^T\tilde{\bmw}/d$
and $\|\tilde{\bmw} - \bar{w}\mathbf{1}\|_2 \leq \epsilon$.
%let $\bmz^* = [\hat{\bmz};\tilde{\bmz}]$ be the optimal solution of Eq.~\ref{eq:casc},
%$\bar{z} = \tilde{\bmz}^T\mathbf{1}/q$ and
If $\left\| \tilde{\bmz} - \bar{z}\mathbf{1} \right\|_2 > \delta$, $f([\hat{\bmz};\tilde{\bmz}]) > f([\hat{\bmz};\bar{z}\mathbf{1}])$,
where
\begin{equation}
\nonumber
\delta = \sqrt{\frac{\left (2\gamma - \alpha_2 \sum_{j=1}^q [(y_j^* - \bar{z})(y_j^* + \bar{z} - 2\tilde{w}_j)] \right )(\|[\hat{X}\text{Diag}(\hat{\bmz})\ \bar{\bmx}_0\tilde{\bmz}^T]\|_2)}{\alpha_1\|\bar{\bmx}_0\|_2^2}}
\end{equation}
and $\gamma = ((\alpha_1 + \|\bmx - \hat{X}\hat{\bmz} - \tilde{X} (\bar{z} \mathbf{1})\|_2) \|\tilde{\bmz}\|_2 + \alpha_1 |\bar{z}|)\epsilon.$
\end{theorem}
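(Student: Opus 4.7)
The plan is to prove $f([\hat{\bmz};\tilde{\bmz}]) > f([\hat{\bmz};\bar{z}\mathbf{1}])$ by splitting the difference into the three pieces corresponding to the three terms of $f$ (the $\hat{\bmz}$-contributions cancel exactly, leaving only blocks involving $\tilde{X}$, $\tilde{\bmz}$, and $\tilde{\bmw}$), lower-bounding each piece, and checking that the resulting sum is positive precisely when $\|\tilde{\bmz} - \bar{z}\mathbf{1}\|_2^2 > \delta^2$.

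For the reconstruction piece, the equality $\mathbf{1}^T\tilde{\bmz} = q\bar{z}$ (built into the definition of $\bar{z}$) yields the clean identity $\tilde{X}\tilde{\bmz} - \tilde{X}(\bar{z}\mathbf{1}) = (\tilde{X} - \bar{\bmx}_0\mathbf{1}^T)\tilde{\bmz}$, whose $\ell_2$-norm is at most $\epsilon\|\tilde{\bmz}\|_2$ by the spectral-norm hypothesis on $\tilde{X} - \bar{\bmx}_0\mathbf{1}^T$; expanding the two squared residuals and applying Cauchy--Schwarz gives a lower bound of $-\epsilon\|\bmx - \hat{X}\hat{\bmz} - \tilde{X}(\bar{z}\mathbf{1})\|_2\|\tilde{\bmz}\|_2$, supplying the $\|A\|_2\|\tilde{\bmz}\|_2\epsilon$ contribution to $\gamma$. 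For the graph-regularization piece, Lemma~\ref{lemma:epsilon1} supplies the feasibility observation that $\tilde{\bmz}$ itself (with mean $\bar{z}$) competes against $\bmy^*$, giving $\|\tilde{\bmz} - \tilde{\bmw}\|_2^2 \geq \|\bmy^* - \tilde{\bmw}\|_2^2$; a direct $a^2 - b^2$ factorization then produces the identity
\[
\|\bmy^* - \tilde{\bmw}\|_2^2 - \|\bar{z}\mathbf{1} - \tilde{\bmw}\|_2^2 = \sum_{j=1}^q (y_j^* - \bar{z})(y_j^* + \bar{z} - 2\tilde{w}_j),
\]
which is exactly the summation appearing in the definition of $\delta$.

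The main obstacle is the trace-Lasso piece, which has to furnish both the $\|\bar{\bmx}_0\|_2^2$ factor and the spectral-norm denominator $\|[\hat{X}\text{Diag}(\hat{\bmz})\ \bar{\bmx}_0\tilde{\bmz}^T]\|_2$. I would start from $\tilde{X}\text{Diag}(\bmu) = \bar{\bmx}_0\bmu^T + (\tilde{X} - \bar{\bmx}_0\mathbf{1}^T)\text{Diag}(\bmu)$ for $\bmu \in \{\tilde{\bmz}, \bar{z}\mathbf{1}\}$, then use the nuclear-norm triangle inequality together with $\|\tilde{X} - \bar{\bmx}_0\mathbf{1}^T\|_* \leq \epsilon$ and $\|AB\|_* \leq \|A\|_*\|B\|_2$ to reduce the problem, up to a slack $\alpha_1\epsilon(\|\tilde{\bmz}\|_2 + |\bar{z}|)$, to comparing $\|[B\ \bar{\bmx}_0\tilde{\bmz}^T]\|_*$ and $\|[B\ \bar{\bmx}_0(\bar{z}\mathbf{1})^T]\|_*$ with $B := \hat{X}\text{Diag}(\hat{\bmz})$. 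The crucial structural observation is that $[B\ \bar{\bmx}_0\bmu^T][B\ \bar{\bmx}_0\bmu^T]^T = BB^T + \|\bmu\|_2^2\bar{\bmx}_0\bar{\bmx}_0^T$ depends on $\bmu$ only through $t = \|\bmu\|_2^2$, so $\Phi(t) := \sum_i\sqrt{\lambda_i(BB^T + t\bar{\bmx}_0\bar{\bmx}_0^T)}$ captures both nuclear norms. Hellmann--Feynman gives $\Phi'(t) = \sum_i(\bmv_i^T\bar{\bmx}_0)^2/(2\sqrt{\lambda_i})$; bounding each $\sqrt{\lambda_i}$ above by the spectral norm $\|[B\ \bar{\bmx}_0\bmu^T]\|_2$ and using the orthonormal identity $\sum_i(\bmv_i^T\bar{\bmx}_0)^2 = \|\bar{\bmx}_0\|_2^2$ yields $\Phi'(t) \geq \|\bar{\bmx}_0\|_2^2/(2\|[B\ \bar{\bmx}_0\bmu^T]\|_2)$. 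Since the denominator is monotone increasing in $t$, integrating from $q\bar{z}^2$ to $\|\tilde{\bmz}\|_2^2$ (a range of length $\|\tilde{\bmz} - \bar{z}\mathbf{1}\|_2^2$ by Pythagoras, since $\tilde{\bmz} - \bar{z}\mathbf{1} \perp \mathbf{1}$) produces the target bound $\|\bar{\bmx}_0\|_2^2\|\tilde{\bmz} - \bar{z}\mathbf{1}\|_2^2/(2\|[B\ \bar{\bmx}_0\tilde{\bmz}^T]\|_2)$.

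Summing the three lower bounds, the three $\epsilon$-slacks consolidate into exactly $\gamma$, yielding
\[
f([\hat{\bmz};\tilde{\bmz}]) - f([\hat{\bmz};\bar{z}\mathbf{1}]) \geq \frac{\alpha_1\|\bar{\bmx}_0\|_2^2\|\tilde{\bmz} - \bar{z}\mathbf{1}\|_2^2}{2\,\|[\hat{X}\text{Diag}(\hat{\bmz})\ \bar{\bmx}_0\tilde{\bmz}^T]\|_2} - \gamma + \frac{\alpha_2}{2}\sum_{j=1}^q(y_j^* - \bar{z})(y_j^* + \bar{z} - 2\tilde{w}_j).
\]
A direct rearrangement shows this right-hand side is strictly positive whenever $\|\tilde{\bmz} - \bar{z}\mathbf{1}\|_2 > \delta$, completing the proof. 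The most delicate bookkeeping is to verify that the nuclear-norm triangle inequality applied around $\bar{\bmx}_0\mathbf{1}^T$ produces only the $\alpha_1\epsilon\|\tilde{\bmz}\|_2$ and $\alpha_1|\bar{z}|\epsilon$ slacks (no spurious $\sqrt{q}$-factors), so that together with the reconstruction slack they match $\gamma$ precisely.
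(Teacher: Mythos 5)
Your proposal is correct and follows essentially the same route as the paper: the same three-term decomposition around $\bar{\bmx}_0\mathbf{1}^T$ and $\bar{z}\mathbf{1}$, the same use of the optimality of $\bmy^*$ with the $a^2-b^2$ factorization, the same $\epsilon$-slack bookkeeping yielding $\gamma$, and the same reduction of the trace-Lasso term to comparing $\sum_i\sqrt{\lambda_i(BB^T+t\bar{\bmx}_0\bar{\bmx}_0^T)}$ at $t=\|\tilde{\bmz}\|_2^2$ versus $t=q\bar{z}^2$. The only variation is that you establish the key nuclear-norm increment by a Hellmann--Feynman/integration argument, whereas the paper invokes a discrete square-root inequality (its Lemma~\ref{lemma:lambda_mu}, citing \cite{lu2013correlation}); both deliver the identical bound $\|\bar{\bmx}_0\|_2^2\,\|\tilde{\bmz}-\bar{z}\mathbf{1}\|_2^2/(2\|[\hat{X}\text{Diag}(\hat{\bmz})\ \bar{\bmx}_0\tilde{\bmz}^T]\|_2)$.
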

The proof of Theorem~\ref{th:casc_ge} is given in the Appendix.

\begin{theorem}
$\bmz^*$ has grouping effect.
\end{theorem}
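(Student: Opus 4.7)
The plan is to reduce the claim to Theorem~\ref{th:casc_ge} one column at a time. Each column $\bmz^*_p$ of $Z$ is obtained by solving Eq.~\ref{eq:casc} with $\bmx = \bmx_p$ and $\bmw = \bmw_p$ (the $p$-th column of $\mathcal{W}$), so I would fix an arbitrary $p$, fix a pair of indices $i,j$ satisfying $\xarrow{i}{j}$, and aim to show $|Z_{ip} - Z_{jp}| \to 0$. The setup is to partition $X = [\hat{X}, \tilde{X}]$ with $\tilde{X} = [\bmx_i, \bmx_j]$ as the two similar columns, and to split $\bmz^*_p = [\hat{\bmz};\tilde{\bmz}]$ and $\bmw_p = [\hat{\bmw};\tilde{\bmw}]$ accordingly, so that $\tilde{\bmz} = (Z_{ip}, Z_{jp})^T$ is exactly the quantity to be controlled.

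The next step is to verify the two hypotheses of Theorem~\ref{th:casc_ge}. First, since features are unit-normalized, $\|\bmx_i - \bmx_j\|_2^2 = 2(1 - \bmx_i^T\bmx_j) \to 0$; and because $\tilde{X} - \bar{\bmx}_0 \mathbf{1}^T$ is the rank-one matrix $\tfrac{1}{2}(\bmx_i - \bmx_j)[1,-1]$, its nuclear, Frobenius and spectral norms all equal $\|\bmx_i - \bmx_j\|_2/\sqrt{2}$, and so are simultaneously bounded by some $\epsilon \to 0$. Second, using the symmetry of $\mathcal{W}$, the assumption $\|\bmw_i - \bmw_j\|_2 \to 0$ yields $|\mathcal{W}_{ip} - \mathcal{W}_{jp}| \to 0$, which is exactly $\|\tilde{\bmw} - \bar{w}\mathbf{1}\|_2 \to 0$ (the two-dimensional $\tilde{\bmw}$ has both entries close to their common mean).

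Given these hypotheses, by optimality of $\bmz^*_p$ for Eq.~\ref{eq:casc} it is impossible that $f([\hat{\bmz};\tilde{\bmz}]) > f([\hat{\bmz};\bar{z}\mathbf{1}])$, so the contrapositive of Theorem~\ref{th:casc_ge} forces $\|\tilde{\bmz} - \bar{z}\mathbf{1}\|_2 \leq \delta$ and hence $|Z_{ip} - Z_{jp}| \leq 2\delta$. To finish I would show $\delta \to 0$. In the formula for $\delta$, $\gamma = O(\epsilon)$ by inspection of its definition; and by Lemma~\ref{lemma:epsilon1}, $y_j^* - \bar{z} = \tilde{w}_j - \bar{w}$, so $(y_j^* - \bar{z})(y_j^* + \bar{z} - 2\tilde{w}_j)$ is also $O(\epsilon)$. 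The denominator $\alpha_1 \|\bar{\bmx}_0\|_2^2$ stays bounded below since $\|\bar{\bmx}_0\|_2 \to 1$, and the remaining factor $\|[\hat{X}\text{Diag}(\hat{\bmz})\ \bar{\bmx}_0\tilde{\bmz}^T]\|_2$ is bounded whenever $\|\bmz^*_p\|_2$ is. Together these drive $\delta \to 0$, so $|Z_{ip} - Z_{jp}| \to 0$; since $p$ was arbitrary, $\bmz^*$ has grouping effect.

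The main obstacle I foresee is the uniform bound on $\|[\hat{X}\text{Diag}(\hat{\bmz})\ \bar{\bmx}_0\tilde{\bmz}^T]\|_2$: one needs $\|\bmz^*_p\|_2$ to stay bounded as $\epsilon \to 0$. This should follow from coercivity of the objective in Eq.~\ref{eq:casc}: the term $\tfrac{\alpha_2}{2}\|\bmz - \bmw\|_2^2$ grows quadratically in $\|\bmz\|_2$ whereas the objective value at the feasible point $\bmz = \bmw$ is finite, so every minimizer lies in a fixed ball whose radius depends only on $\alpha_1, \alpha_2$ and the bounded quantities $\|\bmx\|_2, \|X\|_2, \|\bmw\|_2$. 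Once this a priori bound is in hand, the rest of the argument is a routine asymptotic analysis of the formula for $\delta$ combined with the two elementary continuity computations described above.
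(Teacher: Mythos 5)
Your proposal is correct and follows essentially the same route as the paper: instantiate Theorem~\ref{th:casc_ge} for the similar columns, use Lemma~\ref{lemma:epsilon1} to control the $\sum_{j}(y_j^*-\bar z)(y_j^*+\bar z-2\tilde w_j)$ term, and conclude from the contrapositive that $\|\tilde{\bmz}-\bar z\mathbf{1}\|_2\le\delta\to 0$. You are in fact more careful than the paper at two points it leaves implicit --- explicitly verifying the $\epsilon$-hypotheses for $\tilde X=[\bmx_i,\bmx_j]$ and $\tilde{\bmw}$ from the condition $\xarrow{i}{j}$, and supplying the coercivity argument that bounds $\|\bmz_p^*\|_2$ so that $\|[\hat X\text{Diag}(\hat{\bmz})\ \bar{\bmx}_0\tilde{\bmz}^T]\|_2$ stays bounded and $\delta\to 0$ genuinely follows.
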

\begin{proof}
Given two objects $x_i$ and $x_j$ in $\tilde{X}$.
When $\epsilon \rightarrow 0$,
$\tilde{X}$ will be close to $\bar{\bmx}_0 \bm{1}^T$
and $\tilde{\bmw}$ will be close to $\bar{w}\bm{1}$.
Hence, $x_i \rightarrow x_j$.
From Lemma~\ref{lemma:epsilon1}, 
$\bmy^*$ in Theorem~\ref{th:casc_ge} satisfies
$-\epsilon \leq y_j^* - \bar{z} \leq \epsilon$.
If $\epsilon \rightarrow 0$,
we have
$\gamma \rightarrow 0$ and 
$\sum_{j=1}^q [(y_j^* - \bar{z})(y_j^* + \bar{z} - 2\tilde{w}_j)] \rightarrow 0$.
Further, we get $\delta \rightarrow 0$.
According to Theorem~\ref{th:casc_ge},
$\tilde{\bmz}$ has to be very close to $\bar{z}\bm{1}$.
As a result,
given two highly correlated objects $x_i$ and $x_j$
such that $x_i \rightarrow x_j$,
we have $z_i^* \rightarrow z_j^*$.
$\bmz^*$ thus has grouping effect.
\end{proof}

\subsection{Clustering Procedure}
Given a set of objects $\mathcal{X} = \{x_1,..., x_n\}$,
%let $\bmz^*_i$ be the optimal solution to Eq.~\ref{eq:casc} for an object $x_i$.
we solve Eq.~\ref{eq:casc} for each object 
and construct an optimal solution for the coefficient matrix $Z^* = [\bmz^*_1, ..., \bmz^*_n]$.
To prevent self-representation,
when we determine $\bmz_i^*$ using Eq.~\ref{eq:casc}, we remove the $i$-th column vector from $X$.
% for an object $x_i$,
%we replace $X$ by $X_i$, which is a matrix obtained by removing the $i$-th column of $X$.
We note that $Z^*$ may be asymmetric and contain negative values. To fix,
%To fix the problem and construct an affinity matrix,
%similar to~\cite{liu2013robust},
\algo\ computes a new matrix $\check{Z} = (|Z^*| + |(Z^*)^T|)/2$ as in~\cite{liu2013robust}.
It is easy to prove that $|Z^*|$, $|(Z^*)^T|$
and thus $\check{Z}$ all have grouping effect.
Moreover, since the trace Lasso automatically self-adjusts to either the $\ell_1$- or the $\ell_2$- norm, 
%automatically balances the effect of $\ell_1$-norm and $\ell_2$-norm,
%based on object correlations,
$Z^*$ enhances sparsity for objects of different clusters.
(We will further illustrate this sparsity effect in the next section.)
%which will be shown in Fig.~\ref{figure:syn1_Z} and~\ref{figure:syn2_Z}.
The matrix $\check{Z}$ is then fed to the pipeline of a standard spectral clustering method (e.g., NCuts) in place of the original similarity matrix $S$.
Algorithm~\ref{alg:casc} in Appendix
%~\ref{sec:codes} 
outlines \algo.

%\begin{algorithm}
%\begin{small}
%\caption{\algo}
%\label{alg:casc}
%\begin{algorithmic}[1]
%\Require $S$, $k$.
%\Ensure $\mathcal{C} = \{C_1, ..., C_k\}$
%\State Compute the TKNN graph and the weight matrix $\mathcal{W}$
%\State Calculate $W = D^{-1}S$, where $D_{ii} = \sum_jS_{ij}$
%%\For $j \leftarrow 1$ do
%%\State$t = 0$
%%\Repeat
%%\State $v_j^{t+1} \leftarrow \frac{Wv_j^t}{||Wv_j^t||_1}$
%%\State $ \delta^{t+1} \leftarrow |v_j^{t+1} - v_j^t|$
%%\State $t$++
%%\Until $||\delta_j^t+1 - \delta_j^t||_{max} \leq \epsilon$ or $t\geq T$
%%\EndFor
%\State Apply PI on $W$ and generate $p$ pseudo-eigenvectors $\{\bm{v}_r\}_{r=1}^p$
%\State $X = \{\bm{v}_1^T; \bm{v}_2^T; ...; \bm{v}_p^T\}$; $X$ = whiten($X$)
%\State Normalize each column vector $\bm{x}$ of $X$ such that $\bm{x}^T\bm{x} = 1$
%%\State Solve Eq.~\ref{eq:casc} for each object and construct a matrix $Z^*$
%\For{$i = 1$ to $n$} 
%\State Solve Eq.~\ref{eq:casc} for an object $x_i$ by inexact ALM and get $\bmz_i^*$
%\EndFor
%\State Calculate the coefficient matrix $Z^* = [\bmz_1^*,...,\bmz_n^*]$
%\State Construct $\check{Z} = (|Z^*| + |(Z^*)^T|)/2$
%\State Run NCuts on $\check{Z}$ to obtain clusters $\mathcal{C} = \{C_r\}_{r=1}^k$
%%\State Decode $\{C_r\}_{r=1}^k$ from $\{{\bm z_r}\}_{r=1}^k$
%\State \Return $\mathcal{C} = \{C_1, ..., C_k\}$
%\end{algorithmic}
%\end{small}
%\end{algorithm}

\comment{
\subsection*{Pseudo-Eigenvectors}
\label{sec:sec:pseudo}
Given a similarity matrix $S$, we normalize it by $D^{-1}S$ and apply PI to obtain {\pev}s.
%When dealing with multi-scale data,
%spectral clustering
%using only the top-$k$ eigenvectors may fail while some other eigenvectors may be useful~\cite{lin2010power}.
%Therefore, it is necessary to fuse the discriminative cluster-separation information in all eigenvectors
%and PI provides such fusion by returning a pseudo-eigenvector.
%However, when the number of clusters is large,
%a single pseudo-eigenvector is insufficient due to the cluster-collision problem. 
%%Since PI returns a pseudo-eigenvector that is a combination of all the eigenvectors,
Similar to~\cite{ye2016fuse},
we run PI multiple times with different random initial vectors to generate a set of {\pev}s, 
which maps each object into a low dimensional embedding.
Note that small {\ev}s are {\it shrunk} by PI~\cite{lin2010power}.
%\footnote{From Equation~\ref{eq:pi}, we see that the $i$-th 
%largest \ev\ is 
%shrunk at a rate of $\lambda_i/\lambda_1$ per iteration.}.
To alleviate the shrinkage of small {\ev}s,
we follow the approach of~\cite{huang2014diverse}
and gradually decrease the number of iterations executed in PI as more {\pev}s are obtained. 

Since the {\pev}s approximate the most dominant {\ev}, they could be similar. 
To reduce this redundancy,
whitening~\cite{kessy2017optimal} is used to make the pseudo-eigenvectors uncorrelated.
Moreover, noise in the {\pev}s are reduced 
by a rectification process, which will be discussed later.

\comment{
First, we generate a set of pseudo-eigenvectors using different random vectors $\bm{v}_0$.
From Eq.~\ref{eq:v0}, $\bm{v}_0$ can be represented by the subspace spanned by all the eigenvectors.
Suppose $\bm{e}_i$ is an informative eigenvector with a relatively small eigenvalue $\lambda_i$.
Given a $\bm{v}_0$, if it has a dominant component in the direction of $\bm{e}_i$, 
i.e., the associated coefficient $c_i$ is large, 
with an appropriate number of iterations, $c_i\lambda_i^t\approx c_2\lambda_2^t$,
the effect of $\bm{e}_i$ will be finally retained in the pseudo-eigenvector.
Therefore, multiple pseudo-eigenvectors generated by different $\bm{v}_0$
may increase the chance to hold all the informative eigenvectors.

Second, we can control the number of iterations in PI.
In PIC, the \emph{velocity} and \emph{acceleration} at the $t$-th iteration are respectively defined as
$\bm{\delta}_t = \bm{v}_t - \bm{v}_{t-1}$ and $\bm{\epsilon}_t = \bm{\delta}_t-\bm{\delta}_{t-1}$.
A threshold $\epsilon$ is introduced to measure the acceleration change in two consecutive iterations 
and stop PI when $|\bm{\epsilon}_t-\bm{\epsilon}_{t-1}|<\epsilon$.
However,
since the effect of eigenvectors corresponding to smaller eigenvalues will shrink as $t$ increases,
we can decrease the number of iterations by gradually increasing the $\epsilon$ value,
which provides another way to keep useful information in ``less important'' eigenvectors. 
}

\subsection*{Transitive {\it K} Nearest Neighbor (TKNN) Graph}

Our objective is to capture the high correlations between objects that belong to the same cluster even 
 though the objects could be located at distant far ends of a cluster. 
 These correlations are expressed via a TKNN graph, which is used to 
 regularize the coefficient matrix $Z$.

\begin{definition}
\label{def:nei_relation}
\textbf{(Mutual  $K$-nearest neighbors)}
Let $N_K(x)$ be the set of $K$ nearest neighbors of an object $x$.
Two objects $x_i$ and $x_j$ are said to be mutual $K$-nearest neighbors of each other,
denoted by $x_i \sim x_j$, 
iff $x_i \in N_K(x_j)$ and $x_j \in N_K(x_i)$.
\hfill$\Box$
\end{definition}

%In the case of multi-scale data,
%the KNN graph is sometimes insufficient to reflect whether two objects belong to the same cluster or not.
%For example, objects $\bmx_2$ and $\bmx_3$ in Fig.~\ref{figure:example1} belong to the same cluster, 
%but they are both far away from each other.
%%objects $x_1$ and $x_2$ are in different clusters despite their closeness.
%To construct an effective KNN graph, $\bmx_2$ and $\bmx_3$
%are expected to be linked.
%Therefore,
%the reachability between two objects is introduced.

\begin{definition}
\label{def:reachability}
\textbf{(Reachability)}
Two objects $x_i$ and $x_j$ are said to be reachable from each other
if there exists a sequence of $h \geq 2$ objects 
 $\{x_i = x_{a_1}, \ldots, x_{a_h} = x_j\}$ such that
 $x_{a_r} \sim x_{a_{r+1}}$ for $1 \leq r < h$.
 \hfill$\Box$
\end{definition}

\begin{definition}
\label{def:trans_relation}
\textbf{(Transitive $K$-nearest neighbor (TKNN) graph)}
Given a set of objects $\mathcal{X} = \{x_1, x_2,..., x_n\}$,
the TKNN graph $\mathcal{G}_K = (\mathcal{X},\mathcal{E})$
is an undirected graph
where $\mathcal{X}$ is the set of vertices and $\mathcal{E}$ is the set of edges.
Specifically, the edge ($x_i$, $x_j$) $\in \mathcal{E}$
iff $x_i$ and $x_j$ are reachable from each other.
We represent the TKNN graph by an $n \times n$ {\it reachability matrix}
$\mathcal{W}$ 
whose ($i$,$j$)-entry $\mathcal{W}_{ij} = 1$ if ($x_i$, $x_j$) $\in \mathcal{E}$; 0 otherwise.
\hfill$\Box$
\end{definition}

%We note that mutual-KNN is a symmetric relation and so is reachability. 

%Compared with the traditional KNN graph, 
%the TKNN graph can more accurately reflect 
%whether two objects belong to the same cluster or not.
%It will be used in the next section to rectify the matrix of similarity between objects.
 
\subsection*{Coefficient Matrix}
%In the case of multi-scale data,
%the distance-based similarity is likely to be ineffective. 
%To improve the clustering performance,
%it is necessary to rectify the matrix.
%The rectification should follow two principles.
%First, for the effective similarities, the efficacy should be reserved.
%Second, for the ineffective ones, the efficacy should be enhanced.
%For example,
%the closeness between objects $\bmx_3$ and $\bmx_4$ in Fig.~\ref{figure:example1}
%leads to a large similarity value.
%Since the two objects are of the same cluster,
%the new value should still be large.
%In comparison, 
%both the similarity between $\bmx_1$ and $\bmx_2$ and the one between $\bmx_2$ and $\bmx_3$
%should be rectified, as these distance-based similarities are ineffective.

%In the multi-scale data,
%different clusters may have different sizes, densities or even geometric structures.
The similarity between two objects describes the degree to which they share common characteristics.
The more similar they are, 
the more likely that one object can be represented by the other. 
Therefore,
to depict the relationships between objects in multi-scale data,
each object is linearly characterized by other objects,
assuming the well-known linear subspace model~\cite{liu2013robust}.

We generate $p$ {\pev}s using PI. 
%just as a non-linear manifold can be locally approximated by linear subspaces.
%First, these vectors, which retain the effects of dominant eigenvectors,
%can certainly retain the original accurate similarities between objects.
%Second, they absorb the cluster-separation information from the seemingly ``less important'' eigenvectors,
%which further provides a reliable source of information to rectify the erroneous similarities.
Let $X\in \mathcal{R}^{p\times n}$ be a matrix whose rows are the {\pev}s.
The $q$-th column of $X$ can be taken as a feature vector $\bmx_q$ of an object $x_q$.
We normalize the column vectors of $X$ such that $\bmx_q^T \bmx_q = 1 \; \forall 1 \leq q \leq n$.
We determine a coefficient matrix $Z \in \mathcal{R}^{n \times n}$ by\footnote{We will regularize $Z$ to avoid the trivial solution of $Z$ being the identity matrix.}
\begin{equation}
\label{eq:nonoise}
%X = XZ,\;\;\; \text{s.t. } diag(Z) = 0.
X = XZ.
\end{equation} 
One can interpret $Z_{ij}$ 
as a value that reflects how well object $x_i$ characterizes object $x_j$.
%To avoid the trivial solution of $Z$ equalling the identity matrix,
%we will regularize $Z$ later.
%For simplicity, we do not explicitly mention this constraint in the rest of the paper.

As indicated by previous works~\cite{ye2016fuse}, the generated pseudo-eigenvectors are likely to be noise-corrupted.
We thus extend Equation~\ref{eq:nonoise} by introducing a noise matrix $O$, giving:
%However, although the pseudo-eigenvectors inherit useful cluster-separation information,
%they may also be contaminated by noise.
%Thus a matrix $O$ is introduced to capture the noise,
\begin{equation}
X = XZ + O.
\label{eq:O}
\end{equation} 
%The TKNN graph conveys information on whether two objects belong to the same cluster or not,

As we have explained, the TKNN graph  conveys useful clustering information, bringing highly
correlated objects that are located at distant far ends of a cluster together.
We thus use the TKNN graph to regularize matrix $Z$.
We derive the following objective function:
%Since our objective is to derive a $Z$ which can better reflect the true similarities between objects,
\comment{
\begin{equation}
%\label{eq:obj}
\min_Z ||O||_F^2+\lambda_1||Z||_F^2 + \lambda_2||Z-\mathcal{W}_K||_F^2,
s.t., X = XZ+O,
\end{equation}
}
%By substituting $O$, we derive
\begin{equation}
\label{eq:obj_constraint}
\min_Z ||X-XZ||_F^2 + \alpha_1 ||Z||_F^2 + \alpha_2 ||Z-\mathcal{W}||_F^2,
\end{equation}
where $\alpha_1 > 0,\alpha_2 \geq 0$ are two 
weighting factors that adjust the relative weights of the three components that constitute
the objective function.
The objective function consists of three terms.
The first term aims to reduce the noise matrix $O$ (see Equation~\ref{eq:O}),
the second term is the Frobenius norm on $Z$, and
%that restricts any item in $Z$ from being dominant.
the third term regularizes $Z$ by the TKNN graph.
%For simplicity, as in~\cite{lu2012robust},
%we remove the constraint and formulate the problem as
%\begin{equation}
%\label{eq:obj}
%\min_Z ||X-XZ||_F^2 + \lambda_1 ||Z||_F^2 + \lambda_2 ||Z-\mathcal{W}_K||_F^2.
%\end{equation}
%$\lambda_1$ and $\lambda_2$ are two parameters to balance the effects of three parts.
A closed-form solution, $Z^*$, to the optimization problem is
\begin{equation}
\label{eq:solution}
Z^* = (X^TX + \alpha_1I + \alpha_2I)^{-1}(X^TX+\alpha_2\mathcal{W}).
\end{equation}

\subsection*{Grouping Effect}
For an object $x_p$, 
let $\bmz_p$ be the $p$-th column of the coefficient matrix $Z$.
We interpret the entries of $\bmz_p$ as the coefficients that express $x_p$ as a linear combinations
of other objects. 
Previous works~\cite{lu2012robust,lu2013correlation,hu2014smooth} have shown that if $Z$ has {\it grouping effect}, then performing spectral 
clustering based on $Z$ would be effective. 
Intuitively, $Z$ has grouping effect if, given two {\it highly correlated} objects $x_i$ and $x_j$,
their characterizations of other objects are similar.
Existing works mostly consider {\it high correlation} between objects as {\it high similarity} of their
feature vectors. With ROSC, we consider object similarity in terms of both feature similarity and 
reachability similarity. 
Feature similarity is measured by the objects' feature vectors as given by the columns of matrix $X$.
Reachability similarity is measured by the columns of matrix $\mathcal{W}$, 
each of which shows the reachability of an object to all others. 
Formally,

%It has been shown that the first two terms in the objective function decide the \emph{grouping effect} of $Z$~\cite{lu2012robust}.
%In Problem~\ref{eq:obj}, the first two terms decides whether $Z$ has the grouping effect~\cite{lu2012robust} defined as follows.
%i.e., the more highly correlated two objects are,
%the larger weight one will have in characterizing the other~\cite{lu2012robust}.

\begin{definition}
\label{def:grouping}
\textbf{(Grouping effect)}. 
Given a set of objects $\mathcal{X} = \{x_1, x_2,..., x_n\}$,
let $\bmw_q$ be the $q$-th column of $\mathcal{W}$. 
Further, let $\xarrow{i}{j}$ denote the condition:
(1) $\bmx_i^T \bmx_j \rightarrow 1$ and 
(2) $\lVert \bmw_i - \bmw_j \rVert_2 \rightarrow 0$.
A matrix $Z$ is said to have grouping effect
if
\[
\xarrow{i}{j} \Rightarrow |Z_{ip} - Z_{jp}| \rightarrow 0\; \forall 1 \leq p \leq n.
\]
%for an arbitrary object $\bmx$ and its associated coefficient vector $\bm{z}$,
%If $\bmx_i\rightarrow \bmx_j$,
%then $z_{pi}\rightarrow z_{pj}$,
%where $z_{pi}$ and $z_{pj}$ are the $i$th and $j$th entry in $\bm{z}_p$ respectively.
%Then we say the self-representation matrix $Z$
%has the grouping effect.
%\hfill$\Box$
\end{definition}

Our next task is to prove that the optimal solution $Z^*$ (as given in Equation~\ref{eq:solution})
% of Equation~\ref{eq:obj_constraint}
has the grouping effect.
In the following discussion, we use $\bmz_q^*$ to denote the $q$-th column vector of $Z^*$.

\begin{lemma}
\label{lemma1}
Given a set of objects $\mathcal{X}$,
the matrix
$X\in \mathcal{R}^{p\times n}$ that is composed of the {\pev}s as rows,
 the reachability matrix $\mathcal{W}$,
 and the optimal soution $Z^*$ of Equation~\ref{eq:obj_constraint},
\begin{equation}
\label{eq:zi}
Z_{ip}^* = \frac{\bm{x}_i^T(\bm{x}_p-X\bm{z}_p^*) + \alpha_2 \mathcal{W}_{ip}}{\alpha_1+\alpha_2}, \;\;\; \forall 1 \leq i, p \leq n.
\end{equation}
\end{lemma}

\begin{proof}
For $1 \leq p \leq n$,
let $J(\bm{z}_p) =  ||\bm{x}_p-X\bm{z}_p||_2^2 + \alpha_1 ||\bm{z}_p||_2^2 + \alpha_2 ||\bm{z}_p-\bm{w}_p||_2^2$.
Since $Z^*$ is the optimal solution of Equation~\ref{eq:obj_constraint}, we have $\frac{\partial{J}}{\partial{Z}_{ip}}|_{\bm{z}_p = \bm{z}_p^*} = 0\; \forall 1\leq i \leq n$.
Hence, $-2\bm{x}_i^T(\bm{x}_p-X\bm{z}_p^*)+2\alpha_1Z_{ip}^*+2\alpha_2(Z_{ip}^*-\mathcal{W}_{ip}) = 0$,
which induces Equation~\ref{eq:zi}.
\end{proof}

%\begin{lemma}
%\label{lemma1}
%Given a set of objects $X\in \mathcal{R}^{d\times n}$ and a TKNN graph $\mathcal{G} = (\mathcal{V}, \mathcal{E}, \mathcal{W}_K)$,
%let $\bm{x}_p$ be an arbitrary object and $\bm{w}_p$ be the column vector in $\mathcal{W}_K$ describing the connectivity of $\bmx_p$.
%Each item $z_{pi}^*$ in the optimal solution $\bm{z}_p^*$ to the problem 
%$\min_{\bm{z}} ||\bm{x}_p-X\bm{z}_p||_2^2 + \lambda_1 ||\bm{z}_p||_2^2 + \lambda_2 ||\bm{z}_p-\bm{w}_p||_2^2$ is
%\begin{equation}
%\label{eq:zi}
%z_{pi}^* = \frac{\bm{x}_i^T(\bm{x}_p-X\bm{z}_p^*) + \lambda_2 w_{pi}}{\lambda_1+\lambda_2},
%\end{equation}
%where $w_{pi} = 1$, if $\bm{x_p}$ and $\bm{x}_i$ are reachable; $0$, otherwise. 
%
%Proof. Let $J(\bm{z}_p) =  ||\bm{x}_p-X\bm{z}_p||_2^2 + \lambda_1 ||\bm{z}_p||_2^2 + \lambda_2 ||\bm{z}_p-\bm{w}_p||_2^2$.
%Since $\bm{z}_p^*$ is the optimal solution, then $\frac{\partial{J}}{\partial{z}_{pi}}|_{\bm{z}_p = \bm{z}_p^*} = 0$.
%Thus we have $-2\bm{x}_i^T(\bm{x}_p-X\bm{z}_p^*)+2\lambda_1z_{pi}^*+2\lambda_2(z_{pi}^*-w_{pi}) = 0$,
%which induces Eq.~\ref{eq:zi}.
%\end{lemma}

\comment{
Eq.~\ref{eq:zi} calculates the weight of object $\bm{x}_i$ in characterizing $\bm{x}$.
For $w_i$, if $\bm{x}$ and $\bm{x}_i$ are reachable in the TKNN graph, $w_i = 1$; otherwise, $w_i = 0$. 
Eq.~\ref{eq:zi} rectifies the ineffective similarity as follows.
When $\bm{x}$ and $\bm{x_i}$ are far away from each other in the feature space
but reachable in the TKNN graph,
$z_i$ will be increased by $w_i = 1$.
%which rectifies the original incorrect similarity value between them.
Moreover, if $\bm{x}$ and $\bm{x}_i$ are highly correlated,
the first two terms in the problem can decide a large $z_i$,
even in the case that they are unreachable.
%$z_i$ can still keep large with a small $\lambda_2$.
In summary, Eq.~\ref{eq:zi} can not only hold the original accurate similarities but also rectify the incorrect ones.

\noindent{\small$\bullet$}
For any two objects dissimilar in the feature space but connected in the TKNN graph, 
their similarity will be increased.

\noindent{\small$\bullet$}
For any two objects dissimilar in the feature space and disconnected in the TKNN graph, 
their similarity will be further decreased.

\noindent{\small$\bullet$}
For any two objects similar in the feature space and connected in the TKNN graph,
their similarity will be further increased.

\noindent{\small$\bullet$}
For any two objects similar in the feature space but disconnected in the TKNN graph,
their similarity will be decreased. In this case,
provided that the raw similarity is effective already,
even though the new value will be decreased,
but it will be still
}

\begin{lemma}
\label{lemma2}
%Given a set of objects $X\in R^{d\times n}$ and a TKNN graph $\mathcal{G} = (\mathcal{V}, \mathcal{E}, \mathcal{W}_K)$,
%let $\bm{x}_p$ be an arbitrary object, 
%$\bm{w}_p$ be the column vector in $\mathcal{W}_K$ describing the connectivity of $\bm{x}_p$, and
%$\bm{z}_p^*$ be the optimal solution to the problem 
%$\min_{\bm{z}_p} ||\bm{x}_p-X\bm{z}_p||_2^2 + \lambda_1 ||\bm{z}_p||_2^2 + \lambda_2 ||\bm{z}_p-\bm{w}_p||_2^2$.
%Assume all the objects have been normalized.
%For any two objects $\bm{x}_i$ and $\bm{x}_j$,
$\forall 1 \leq i, j, p \leq n$,
\begin{equation}
\label{eq:norm}
|Z_{ip}^*-Z_{jp}^*| \leq \frac{c\sqrt{2(1-r)} + \alpha_2|\mathcal{W}_{ip}-\mathcal{W}_{jp}|}{\alpha_1+\alpha_2},
\end{equation}
where $c = \sqrt{1+\alpha_2||\bm{w}_p||_2^2}$ and $r = \bm{x}_i^T\bm{x}_j$.
\end{lemma}

\begin{proof}
From Equation~\ref{eq:zi}, we have 
\begin{equation}
\nonumber
Z_{ip}^*-Z_{jp}^* = \frac{(\bm{x}_i^T - \bm{x}_j^T)(\bm{x}_p-X\bm{z}_p^*) + \alpha_2 (\mathcal{W}_{ip}-\mathcal{W}_{jp})}{\alpha_1+\alpha_2}.
\end{equation}
That implies 
\begin{small}
\begin{equation}
\label{eq:zizj}
\begin{split}
|Z_{ip}^*-Z_{jp}^*| & \leq \frac{|(\bm{x}_i^T - \bm{x}_j^T)(\bm{x}_p-X\bm{z}_p^*)| + \alpha_2 |\mathcal{W}_{ip}-\mathcal{W}_{jp}|}{\alpha_1+\alpha_2}\\
& \leq \frac{||\bm{x}_i - \bm{x}_j||_2||\bm{x}_p-X\bm{z}_p^*||_2 + \alpha_2 |\mathcal{W}_{ip}-\mathcal{W}_{jp}|}{\alpha_1+\alpha_2}\\
\end{split}
\end{equation}
\end{small}

Since the column vectors of $X$ are normalized (i.e., $\bmx_q^T \bmx_q = 1 \; \forall 1 \leq q \leq n$) , we have
$||\bm{x}_i - \bm{x}_j||_2 = \sqrt{2(1-r)}$,
where $r = \bm{x}_i^T\bm{x}_j$.
%measuring the closeness between $\bm{x}_i$ and $\bm{x}_j$ in the feature space.
%As $\bm{z}_p^*$ is the optimal solution, 
As $Z^*$ is the optimal solution of Equation~\ref{eq:obj_constraint}, we have
\begin{equation}
\begin{split}
J(\bm{z}_p^*) & = ||\bm{x}_p-X\bm{z}_p^*||_2^2 + \alpha_1 ||\bm{z}_p^*||_2^2 + \alpha_2 ||\bm{z}_p^*-\bm{w}_p||_2^2 \leq  \\
J(\bm{0}) & = ||\bm{x}_p||_2^2 + \alpha_2 ||\bm{w}_p||_2^2 = 1 + \alpha_2 ||\bm{w}_p||_2^2.
\end{split}
\end{equation}
Hence, $||\bm{x}_p-X\bm{z}_p^*||_2 \leq \sqrt{1 + \alpha_2 ||\bm{w}_p||_2^2} = c$.
Equation~\ref{eq:zizj} can be further simplified as
\begin{equation}
\nonumber
|Z_{ip}^*-Z_{jp}^*| \leq \frac{c\sqrt{2(1-r)}+ \alpha_2 |\mathcal{W}_{ip}-\mathcal{W}_{jp}|}{\alpha_1+\alpha_2}.
\end{equation}
\end{proof}

\begin{lemma}
Matrix $Z^*$ has grouping effect.
\label{lemma:z-star}
\end{lemma}
\begin{proof}
Given two objects $x_i$ and $x_j$ such that $\xarrow{i}{j}$,
we have,  %by definition, 
(1) $\bmx_i^T \bmx_j \rightarrow 1$ and (2) $||\bmw_{i}-\bmw_{j}||_2 \rightarrow 0$.
These imply
$r = \bmx_i^T \bmx_j \rightarrow 1$ and  $|\mathcal{W}_{ip}-\mathcal{W}_{jp}| \rightarrow 0$.
Hence, the two terms of the numerator of the R.H.S of Equation~\ref{eq:norm} are close to 0. 
Therefore, $|Z_{ip}^*-Z_{jp}^*| \rightarrow 0$ and thus $Z^*$ has grouping effect.
\end{proof}

Indeed, Equation~\ref{eq:norm} shows how our algorithm ROSC enhances the effectiveness of 
spectral clustering on multi-scale data. 
Comparing with traditional approaches, which focus on feature similarity, ROSC uses $Z^*$
to integrate feature similarity  ($r$) with
reachability similarity ($|\mathcal{W}_{ip}-\mathcal{W}_{jp}|$).
In particular, two distant objects $x_i$ and $x_j$ of a cluster may not share a strong feature similarity.
This leads to a small $r$ and traditional approaches will likely put them into separate clusters.
On the contrary, ROSC considers the strong reachability of the objects
to derive a small value of $|\mathcal{W}_{ip}-\mathcal{W}_{jp}|$, and thus keeping them in the same cluster.
Moreover, for $x_i$ and $x_j$ that belong to two different dense clusters but happen to be close in 
space (i.e., $x_i$ and $x_j$ have strong feature similarity), 
traditional approaches may inadvertently merge them into the same cluster. 
ROSC, however, would discover their low reachability (via the mutual-KNN relation)
and derive a large value of $|\mathcal{W}_{ip}-\mathcal{W}_{jp}|$.
This regulates matrix $Z^*$ and avoids the incorrect merging.
As we will see in the next section, ROSC's approach 
greatly improves clustering quality and is more robust than other algorithms in handling
multi-scale data.

\comment{
\noindent{\small$\bullet$}
$\bmx_i^T \bmx_j \rightarrow 0$ but reachable, $S_{ij}$ will be increased.

\noindent{\small$\bullet$}
%For any two objects dissimilar in the feature space and disconnected in the TKNN graph, their similarity will be further decreased.
$\bmx_i^T \bmx_j \rightarrow 0$ and unreachable, $S_{ij}$ will be further decreased.

\noindent{\small$\bullet$}
%For any two objects similar in the feature space and connected in the TKNN graph, their similarity will be further increased.
$\bmx_i^T \bmx_j \rightarrow 1$ and reachable, $S_{ij}$ will be further increased.

\noindent{\small$\bullet$}
$\bmx_i^T \bmx_j \rightarrow 1$ but unreachable, $S_{ij}$ will be decreased.
}

\subsection*{ROSC: Robust Spectral Clustering}
%The coefficient matrix $Z$ take advantage of both cluster-separation information in all the eigenvectors
%and the TkNN graph, which not only retains the original accurate similarities but also rectifies the incorrect ones.

We note that the matrix $Z^*$ obtained may be asymmetric and it may contain negative values.
To construct a matrix of object similarity, a common fix~\cite{liu2013robust,lu2012robust} is to 
compute $\tilde{Z} = (|Z^*|+|(Z^*)^T|)/2$.
After $\tilde{Z}$ is computed, ROSC executes a standard spectral clustering method
(e.g., NCuts) using $\tilde{Z}$ as the similarity matrix in place of $S$.
It can be proved that $|Z^*|$, $|(Z^*)^T|$, and hence $\tilde{Z}$ all have grouping effect.
Due to space limitations, readers are referred to~\cite{technicalreportrobust} for the proofs. 
Finally, ROSC is summarized in Algorithm~\ref{alg}.

\comment{
\begin{lemma}
\label{lemma3}
$(Z^*)^T$ has the grouping effect. 

Proof. The problem is equivalent to if $x_i \rightarrow x_j$, $Z^*_{pi} \rightarrow Z^*_{pj}$.
From Lemma~\ref{lemma1},
\begin{equation}
\nonumber
z_{pi}^*-z_{pj}^* = \frac{\bm{x}_p^T ( \bmx_i - \bmx_j - X(\bm{z}_i^*-\bm{z}_j^*)) + \alpha_2 (W_{pi}-W_{pj})}{\alpha_1+\alpha_2}.
\end{equation}
Since $\bm{z}_i^* = (X^TX + \alpha_1I + \alpha_2I)^{-1}(X^T\bmx_i+\alpha_2\bm{w}_i),\bm{z}_j^* = (X^TX + \alpha_1I + \alpha_2I)^{-1}(X^T\bmx_j+\alpha_2\bm{w}_j)$,
let $Y = X(X^TX + \alpha_1I + \alpha_2I)^{-1}$. Then
\begin{equation}
\begin{split}
||Z_{pi}^*-Z_{pj}^*||_2 & \leq \frac{||\bmx_p^T(\bm{x}_i - \bm{x}_j)||_2 + ||\bmx_p^TX(\bm{z}_i^*-\bm{z}_j^*)||_2 + \alpha_2 ||(W_{pi}-W_{pj})||_2}{\alpha_1+\alpha_2}\\
& \leq \frac{||\bmx_p^T(\bm{x}_i - \bm{x}_j)||_2 + ||\bmx_p^TYX^T(\bmx_i-\bmx_j)||_2}{\alpha_1+\alpha_2}\\
& + \frac{\alpha_2||\bmx_p^TY(\bm{w}_i - \bm{w}_j)||_2 + \alpha_2 ||W_{pi}-W_{pj}||_2}{\alpha_1+\alpha_2}\\
\end{split}
\end{equation}
If $x_i \rightarrow x_j$, i.e., $\bmx_i^T\bmx_j \rightarrow 1$ and $||\bm{w}_i - \bm{w}_j||_2 \rightarrow 0$, 
we have
$||\bm{x}_i - \bm{x}_j||_2 \rightarrow 0$ and $||W_{pi}-W_{pj}||_2 \rightarrow 0$.
Then $||Z_{pi}^*-Z_{pj}^*||_2 \rightarrow 0$,
which proves the grouping effect of $(Z^*)^T$.
%\begin{equation}
%\begin{split}
%||\bmx_p^TX(\bm{z}_i^*-\bm{z}_j^*)||_2 & \leq ||\bmx_p^TY(X^T(\bmx_i-\bmx_j) +\alpha_2(\bm{w}_i - \bm{w}_j))||_2\\
%& \leq ||\bmx_p^TYX^T(\bmx_i-\bmx_j)||_2 +\alpha_2||\bmx_p^TY(\bm{w}_i - \bm{w}_j)||_2\\
%\end{split}
%\end{equation}
\end{lemma}

\begin{lemma}
\label{lemma4}
$\tilde{Z}$ has the grouping effect. 

Proof. From Lemma~\ref{lemma2} and~\ref{lemma3},
both $Z^*$ and $(Z^*)^T$ have the grouping effect,
which can be easily extended to
$|Z^*|$ and $|(Z^*)^T|$.
Since $\tilde{Z} = (|Z^*|+|(Z^*)^T|)/2$,
$\tilde{Z}$ also has the grouping effect.
\end{lemma}

To this end, we summarize ROSC as follows.
ROSC first computes the TKNN graph $\mathcal{G}$
and the associated weight matrix $\mathcal{W}_K$. Then it applies power iteration to generate $p$
pseudo-eigenvectors that form a $p \times n$ matrix $X$.
After whitening and normalization on $X$,
the coefficient matrix $Z^*$ can be calculated by solving Eq.~\ref{eq:solution},
which further leads to the construction of a rectified similarity matrix $\tilde{Z}$.
Since $\tilde{Z}$ has the grouping effect, 
the standard spectral clustering methods (e.g., NCuts) will be finally applied to derive more robust clustering results.
The time complexity of ROSC will be no more than the standard spectral clustering methods,
which is $O(n^3)$ in general.
In the future, we will attempt to improve it. 
}

\begin{algorithm}
\begin{small}
\caption{ROSC}
\label{alg}
\begin{algorithmic}[1]
\Require $S$, $k$.
\Ensure $\mathcal{C} = \{C_1, ..., C_k\}$
\State Compute the TKNN graph and the reachability matrix $\mathcal{W}$
\State Calculate $W = D^{-1}S$, where $D_{ii} = \sum_jS_{ij}$
%\For $j \leftarrow 1$ do
%\State$t = 0$
%\Repeat
%\State $v_j^{t+1} \leftarrow \frac{Wv_j^t}{||Wv_j^t||_1}$
%\State $ \delta^{t+1} \leftarrow |v_j^{t+1} - v_j^t|$
%\State $t$++
%\Until $||\delta_j^t+1 - \delta_j^t||_{max} \leq \epsilon$ or $t\geq T$
%\EndFor
\State Apply PI on $W$ and generate $p$ pseudo-eigenvectors $\{\bm{v}_r\}_{r=1}^p$
\State $X = \{\bm{v}_1^T; \bm{v}_2^T; ...; \bm{v}_p^T\}$; $X$ = whiten($X$)
\State Normalize each column vector $\bm{x}$ of $X$ such that $\bm{x}^T\bm{x} = 1$
\State Calculate the coefficient matrix $Z^*$ by Eq.~\ref{eq:solution}
\State Construct $\tilde{Z} = (|Z^*| + |(Z^*)^T|)/2$
\State Run standard spectral clustering methods, e.g., NCuts, with $\tilde{Z}$ as the
similarity matrix to obtain clusters $\mathcal{C} = \{C_r\}_{r=1}^k$
%\State Decode $\{C_r\}_{r=1}^k$ from $\{{\bm z_r}\}_{r=1}^k$
\State \Return $\mathcal{C} = \{C_1, ..., C_k\}$
\end{algorithmic}
\end{small}
\end{algorithm}

}

\section{Experiment}
\label{sec:exp}
In this section,
we conduct extensive experiments
to evaluate the performance of \algo.
We compare \algo\ with 10 other clustering methods on
a wide range of datasets
w.r.t. three popular measures, 
namely, \emph{purity}, \emph{adjusted mutual information (AMI)}, and \emph{rand index (RI)}.
These measures evaluate the clustering quality with values in the range from 0 to 1.
A larger value indicates a better
clustering quality. 
For details of the three measures, see~\cite{vinh2010information,lin2010power}.
We include the experiment settings in the Appendix.
%We first describe the performance measures (Section~\ref{sec:measures}) and
%existing methods against which ROSC is compared (Section~\ref{sec:algo-comp}).
%We then show the performance results on both real and synthetic datasets (Section~\ref{sec:results}).
%We illustrate the grouping effect of matrix $\tilde{Z}$ (Section~\ref{}).
%Finally, we conduct a parameter sensitivity study of ROSC (Section~\ref{}).

\comment{
\subsection{Measures}
\label{sec:measures}
We use three popular measures, namely, \emph{purity}, \emph{adjusted mutual information (AMI)}, and \emph{rand index (RI)}, to evaluate clustering quality~\cite{vinh2010information,lin2010power}.

Consider a clustering $\mathcal{C} = \{C_1, \ldots, C_k\}$ produced by a clustering algorithm
and a gold standard (true) clustering
$\mathcal{C}_t = \{ \hat{C}_1, \ldots, \hat{C}_k\}$.
For each cluster $C_i \in \mathcal{C}$, we find the cluster $\hat{C}_j \in \mathcal{C}_t$ that overlaps
with $C_i$ the most. 
The purity of cluster $C_i$ is the fraction of objects in $C_i$ that fall in the overlap, i.e., 
($\max_j |C_i \cap \hat{C}_j|) / |C_i|$. 
The purity of a clustering $\mathcal{C}$ is the average of its clusters' purities, weighted by the cluster sizes:
\begin{equation}
purity(\mathcal{C}_t,\mathcal{C}) = \frac{1}{n}\sum_{i}\max_{j}|C_i \cap \hat C_j|.
\end{equation}
The adjusted mutual information ({\it AMI}) is mutual information with the agreement due to chance between clusterings corrected, and
is given by,
\begin{equation}
\mathit{AMI}(\mathcal{C}_t,\mathcal{C}) = \frac{MI(\mathcal{C}_t,\mathcal{C}) - E\{MI(\mathcal{C}_t,\mathcal{C})\}}{\max\{H(\mathcal{C}_t),H(\mathcal{C})\} - E\{MI(\mathcal{C}_t,\mathcal{C})\}},
\end{equation}
where $MI(\mathcal{C}_t,\mathcal{C})$ is the mutual information between $\mathcal{C}_t$ and $\mathcal{C}$,
$H(\mathcal{C}_t)$ and $H(\mathcal{C})$ are the entropies of $\mathcal{C}_t$ and $\mathcal{C}$, respectively,
and $E\{MI(\mathcal{C}_t,\mathcal{C})\}$ is the expected mutual information between the two clusterings
$\mathcal{C}_t$ and $\mathcal{C}$.

Rand index ({\it RI}) considers object pairs in measuring clustering quality. It is defined as:
\begin{equation}
RI(\mathcal{C}_t,\mathcal{C}) = (N_{00} + N_{11}) / {\tbinom n2},
\end{equation}
where $N_{00}$ is the number of object pairs that are put into the same cluster in $\mathcal{C}_t$
as well as in the same cluster in $\mathcal{C}$, and
$N_{11}$ is the number of object pairs that are put into different clusters in 
$\mathcal{C}_t$ and also in different clusters in $\mathcal{C}$.
Note that values of all three measures range from 0 to 1, with a larger value indicating a better
clustering quality. 
}

\subsection{Algorithms for comparison}
\label{sec:algo-comp}
We group the algorithms into the following four categories.

\noindent$\bullet$
\textbf{(Standard spectral clustering methods)}: 
{\bf NCuts} and {\bf NJW} are two standard methods.
They differ in the way they normalize the graph Laplacian, $D-S$, 
where $D$ is a diagonal matrix with $D_{ii} = \sum_{j=1}^nS_{ij}$.
NCuts uses the random-walk-based normalization $D^{-1}(D-S)$
while NJW employs the symmetric normalization $D^{-\frac{1}{2}}(D-S)D^{-\frac{1}{2}}$.
%which follow the execution pipelines shown
%in Figure~\ref{figure:flow_graph}(a). 

\noindent$\bullet$
\textbf{(Power iteration (PI)-based methods)}: 
{\bf PIC}, {\bf PIC-$k$}, {\bf DPIC}, and {\bf DPIE} apply PI to generate pseudo-eigenvectors 
as a replacement of eigenvectors. 
They were described in Section~\ref{sec:related}.
%PIC is a PI based method which generates only one pseudo-eigenvector.
%In comparison, PIC-\emph{k} generates $\lceil log(k)\rceil$ pseudo-eigenvectors.
%DPIC employs Schur complement deflation to generate mutually orthogonal pseudo-eigenvectors.
%Finally, DPIE generates a set of diverse pseudo-eigenvectors.

\noindent$\bullet$
\textbf{(Multi-scale-data-oriented methods)}: 
{\bf ZP} and {\bf FUSE} are two methods that are specifically designed to handle multi-scale data.
They were discussed in Section~\ref{sec:related}.
Note that ZP automatically estimates the number of clusters.  For a fair comparison,
we modify ZP so that it  returns $k$ (the number of true) clusters.

%
%
%ZP is a self-tuning spectral clustering method which considers the local scale information for objects in multi-scale data.
%It can automatically estimate the number of clusters.
%To make a fair comparison, we directly set $k$ as in other methods.
%Although FUSE is based on PI, 
%it further uses ICA to rotate the pseudo-eigenvectors to make them pair-wise statistically independent.
%The rotated pseudo-eigenvectors can lead to desiring performance on multi-scale data.

\noindent$\bullet$
\textbf{(Matrix-reconstruction methods)}: 
This group includes {\bf ROSC} and {\bf \algo}.
We also consider a version of ROSC that regularizes the $Z$ matrix using the $\ell_1$-norm instead of  the Frobenius norm.
(That is, we replace Eq.~\ref{eq:rosc_l2} by Eq.~\ref{eq:rosc_l1} and
solve the optimization problem using Alg.~\ref{alg_sparse}.)
We call the method {\it ROSC with Sparse Matrix}, or {\bf ROSC-S}.

%This group of methods construct a new coefficient matrix ($Z$) based on which 
%new similarity matrix as a replacement of the original one.
%{\bf ROSC} generates a matrix with grouping effect.
%To derive a sparse matrix, we 
%replace Eq.~\ref{eq:rosc_l2} in ROSC by Eq.~\ref{eq:rosc_l1} and
%solve it by Alg.~\ref{alg_sparse}.
%We call such method \textbf{ROSC-S} (\textbf{ROSC}-\textbf{S}parse).
%Moreover, \textbf{CASC} is our proposed method that is summarized in Alg.~\ref{alg:casc}.

\comment{
\noindent{\small$\bullet$}
\textbf{NJW}: A standard spectral clustering method with symmetric normalization.

\noindent{\small$\bullet$}
\textbf{NCuts}: A standard spectral clustering method with divisive normalization.

\noindent{\small$\bullet$}
\textbf{ZP}: A self-tuning spectral clustering method for multi-scale clusters.
It uses eigenvector rotation to estimate the number of clusters.
To make a fair comparison, we directly set $k$ as in other methods.

\noindent{\small$\bullet$}
\textbf{PIC}: A power iteration based method which generates only one pseudo-eigenvector.

\noindent{\small$\bullet$}
\textbf{PIC-$k$}:
A power iteration based method which generates $\lceil log(k)\rceil$ pseudo-eigenvectors.

\noindent{\small$\bullet$}
\textbf{DPIC}: 
A power iteration based method which employs Schur complement deflation to generate mutually orthogonal pseudo-eigenvectors.

\noindent{\small$\bullet$}
\textbf{DPIE}: A power iteration based method which generates a set of diverse pseudo-eigenvectors.
%As suggested by~\cite{huang2014diverse}, 
%we set the maximum number of pseudo-eigenvectors to be $\lceil log(k) \rceil * 6 $ 
%out of $\max(\lceil log(k) \rceil * 30, 2k)$ random initial vectors.

\noindent{\small$\bullet$}
\textbf{FUSE}:
A full spectral clustering method based on power iteration and ICA. 
}

%\textbf{[Experiment settings]}
%%The parameters of all the methods are set according to their original papers.
%%For text data, cosine similarity is used to calculate the similarity matrix $S$.
%%For attributed data, 
%For all the datasets,
%the similarity matrix $S$ is computed based on Euclidean distance of objects' attributes.
%$S$ is also locally scaled as is done in ZP.
%%which are locally scaled based on ZP's local scaling procedure.
%All the methods employ \emph{k}-means as the last step of the clustering pipeline to return clusters.
%For this step, 
%we run \emph{k}-means $100$ times with random starting centroids
%and the most frequent cluster assignment is used~\cite{lin2010power}.
%For ROSC, ROSC-S and \algo,
%%we generate $k$ pseudo-eigenvectors with random starting vectors as is done in~\cite{thang2013deflation}.
%we set $K = 4$ in constructing the TKNN graph as suggested in~\cite{li2018rosc}, and 
%fine tune the parameters by grid search
%for $\alpha_1,\alpha_2 \in \{0.001,0.01,0.1,1,10\}$ 
%%and $\alpha_2 \in \{0.001,0.01,0.1,1\}$
%to report the best results.
%For other methods,
%parameters are set according to their original papers.
%For each method and dataset,
%we run the experiment 50 times and report average results.
%Our codes and datasets are publicly available at
%\url{https://github.com/lixiang3776/CAST}.
%%https://www.dropbox.com/s/1edmi8z5xmpl3eh/CAST.zip?dl=0.

\subsection{Performance results}
\label{sec:results}
%We evaluate CASC on two synthetic datasets and 8 real datasets.

\noindent{\bf[Synthetic datasets]}
We first use two synthetic datasets to illustrate the characteristics of the 11 methods.
Both datasets consist of clusters with various densities and sizes. 
Fig.~\ref{figure:syn1}(a) shows \textsc{Syn1}, in which a sparse rectangular cluster (magenta)
is sandwiched between a small dense circular cluster (yellow)
and a large dense rectangular one (navy blue).
The second dataset \textsc{Syn2} is illustrated in Fig.~\ref{figure:syn2}(a), in which
two dense square clusters are very close to a sparse half-ring cluster (red).
In both datasets,
an object in an elongated cluster 
%(e.g., the navy blue cluster in \textsc{Syn1} and the half ring in \textsc{Syn2}) 
can be closer to an object of another cluster than to
an object that is at a far end of the same cluster. 
This is purposely done to make clustering very difficult and so that we can visually compare 
the matrix-reconstruction methods.

%
%the distance between objects in adjacent clusters could be smaller than that of those in the elongated cluster
%(e.g., the navy blue cluster in \textsc{Syn1} and the half ring in \textsc{Syn2}),
%which increases the difficulty of clustering on the datasets.

\begin{table*}[!htbp]
\centering
\resizebox{0.85\linewidth}{!}
{
\begin{tabular}{|c||c|c||c|c|c|c||c|c||c|c|c|} \hline
Measure &NJW & NCuts & PIC & PIC-$k$ & DPIC & DPIE & ZP & FUSE & ROSC-S & ROSC & \algo \\ \hline 
Purity & $0.8000$ & $0.8000$ &  $0.7229$ & $0.7220$ & $0.6085$ & $0.7564$ & $0.8000$ & $0.7607$ & $0.7786$ & $0.7826$ & $\bm{0.8122}$\\ \hline
AMI    & $0.4213$ & $0.4216$ &  $0.4092$ & $0.4221$ & $0.1406$ & $0.4523$ & $0.4217$ & $0.4691$ & $0.5060$ & $0.4874$ & $\bm{0.5430}$ \\ \hline
RI       & $0.6953$ & $0.6956$ & $0.6474$ & $0.6586$ & $0.5421$ & $0.6605$ & $0.6958$ & $0.6898$ & $0.7070$ & $0.7021$ & $\bm{0.7438}$\\ \hline
\end{tabular}
}
\caption{Purity, AMI, and RI scores of methods for dataset \textsc{Syn1}}
\label{table:syn1}

\resizebox{0.85\linewidth}{!}
{
\begin{tabular}{|c||c|c||c|c|c|c||c|c||c|c|c|c|} \hline
Measure &NJW & NCuts & PIC & PIC-$k$ & DPIC & DPIE & ZP &  FUSE & ROSC-S & ROSC & \algo\\ \hline 
Purity & $0.6923$ & $0.6917$ & $0.6741$ & $0.6648$ & $0.5556$ & $0.5971$ & $0.6917$ & $0.7298$ & $0.7257$ & $0.7797$ & $\bm{0.8188}$\\ \hline
AMI    & $0.4472$ & $0.4468$ & $0.4361$ & $0.4158$ & $0.2202$ & $0.1485$ &$0.4468$ &  $0.4856$ & $0.4859$ & $0.5611$ & $\bm{0.6340}$\\ \hline
RI       & $0.6635$ & $0.6632$ & $0.6470$ & $0.6354$ & $0.5139$ & $0.4637$ & $0.6632$ & $0.6976$ & $0.6888$ & $0.7397$ & $\bm{0.7683}$\\ \hline
\end{tabular}
}
\caption{Purity, AMI, and RI scores of methods for dataset  \textsc{Syn2}}
\label{table:syn2}
\end{table*}

\begin{figure*}[!htbp]
\vspace{-5mm}
    \centering
        \includegraphics[width = 0.8\linewidth]{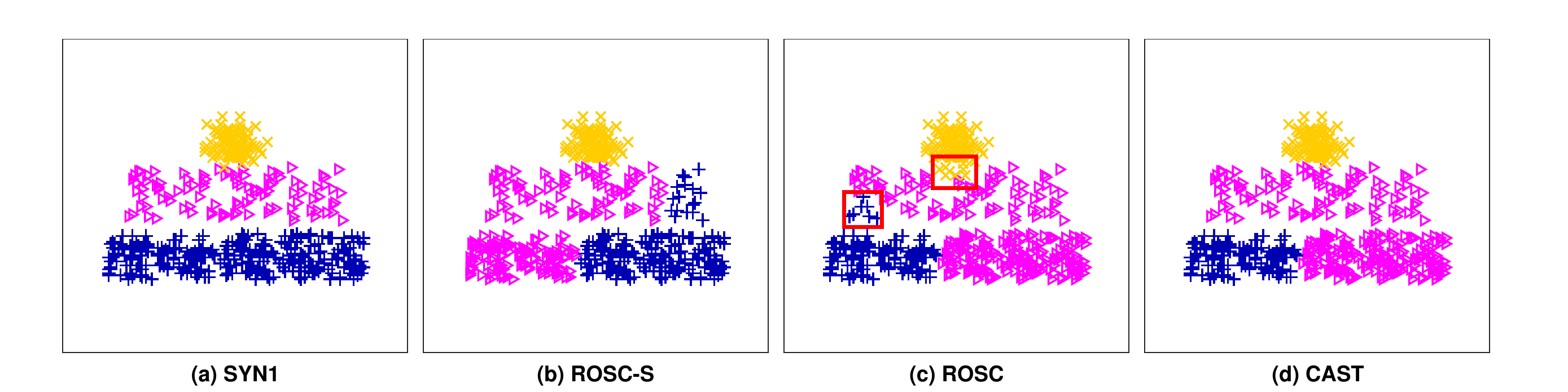}
        \caption{Clustering results for \textsc{Syn1}}
        \label{figure:syn1}

        \includegraphics[width = 0.8\linewidth]{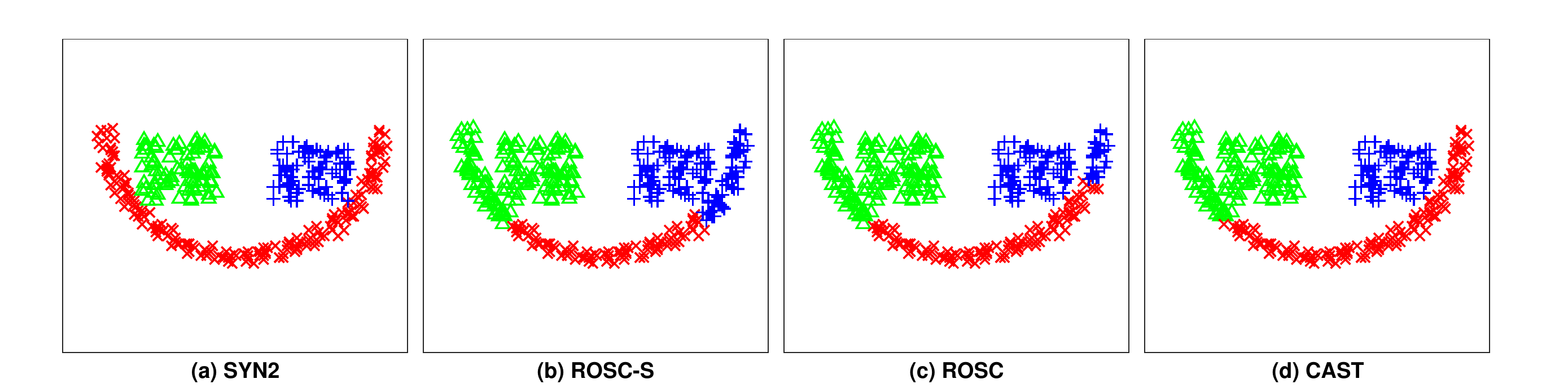}
        \caption{Clustering results for \textsc{Syn2}}
        \label{figure:syn2}
\end{figure*}

 \begin{figure*}[!htbp]   
 %\vspace{-5mm}
 \centering    
        \includegraphics[width = 0.8\linewidth]{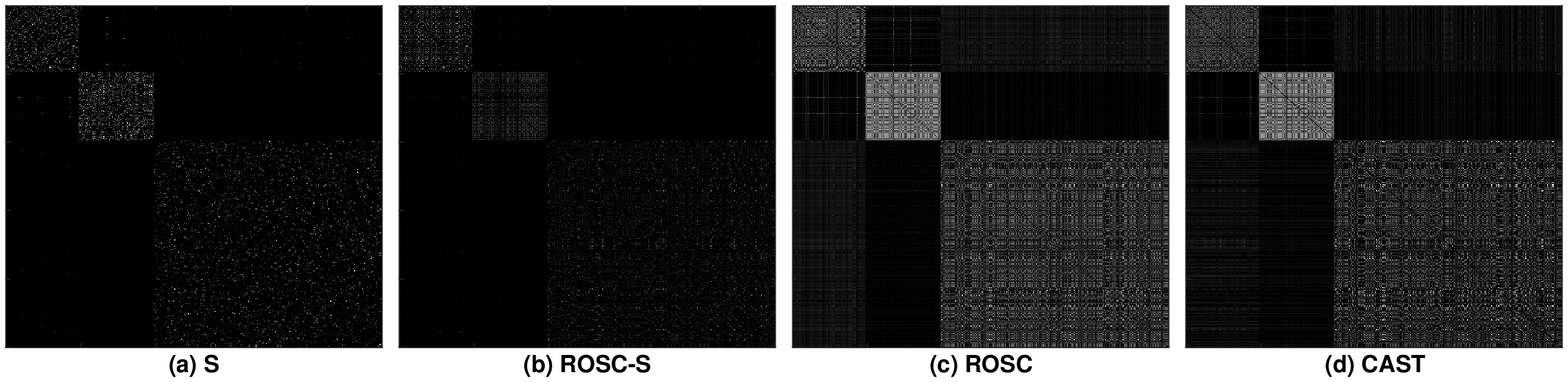}
        \caption{Similarity matrix $S$ (a) and constructed matrices by ROSC-S (b), ROSC (c), and \algo\ (d) for \textsc{Syn1}}
        \label{figure:syn1_Z}

    \centering
        \includegraphics[width = 0.8\linewidth]{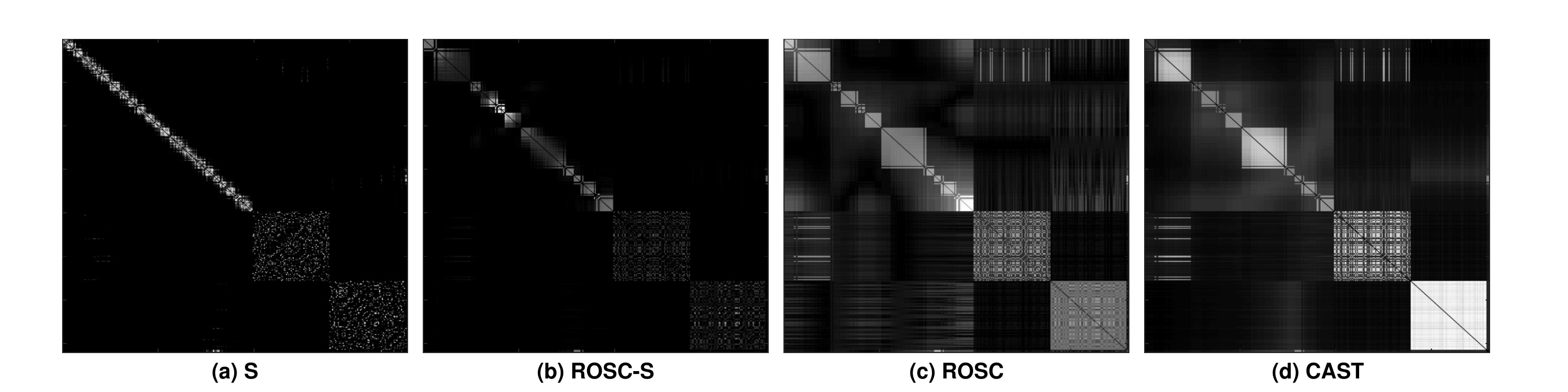}
        \caption{Similarity matrix $S$ (a) and constructed matrices by ROSC-S (b), ROSC (c), and \algo\ (d) for \textsc{Syn2}}
        \label{figure:syn2_Z}
\end{figure*}

Tables~\ref{table:syn1} and \ref{table:syn2} show the clustering performance of the methods
for the datasets \textsc{Syn1} and \textsc{Syn2}, respectively. 
From the tables, 
we see that matrix-reconstruction methods generally perform very well.
For \synone\ (Table~\ref{table:syn1}), w.r.t. Purity measure, ROSC, ROSC-S and \algo\ either perform better or
comparably with the other methods. 
For measures AMI and RI, the matrix-reconstruction methods outperform others.
For \syntwo\ (Table~\ref{table:syn2}),
\algo\ and ROSC outperform others by wide margins w.r.t. all three measures. 
ROSC-S has either better or comparable performance with other non-matrix-reconstruction methods.
Moreover, \algo\ achieves the best performance over all three measures on both \synone\ and \syntwo. 

We visually compare the performance of the matrix-reconstruction methods for \synone\ in Fig.~\ref{figure:syn1}.
Recall that ROSC-S uses the $\ell_1$-norm to regularize the coefficient matrix. 
This promotes sparsity. However, as shown in Fig.~\ref{figure:syn1}(b), the sparse magenta cluster is incorrectly 
chipped off on the right side. 
In contrast, ROSC, which uses the Frobenius norm and promotes object correlation, is connecting objects aggressively. 
This causes some clusters to overspread into close neighboring clusters (see the regions enclosed in red boxes in Fig.~\ref{figure:syn1}(c)).
From Fig.~\ref{figure:syn1}(d), we see that \algo\ rectifies the two problems by striking a balance between object connectivity and sparsity using
the trace Lasso regularizer. This explains \algo\ being the best method for \synone. 
Figs.~\ref{figure:syn2}(b), (c), (d) visually compare the three methods for \syntwo. 
Again, ROSC-S promotes sparsity, and for \syntwo, it inadvertently segments the half-ring cluster into three parts. 
ROSC, which promotes object correlation, recovers more objects of the half-ring cluster,
but the half-ring is still split into three segments.
%most of the half-ring but is a bit too aggressive resulting in the
%half-ring cluster spreading to the green cluster a little.
For \syntwo, \algo\ avoids the merging of objects on the right side of the half-ring cluster with the square cluster.
%The half-ring cluster can thus be more correctly identified.
This shows the adaptability of the trace Lasso regularizer. 

%These methods use the TKNN graph to correlate objects at far ends of an elongated cluster,
%and the similarity matrix is thereby rectified.
%Compared with ROSC and ROSC-S,
%CASC constructs a matrix with grouping effect for highly correlated objects
%and sparsity for uncorrelated ones.
%It thus has an outstanding performance.
%In particular,
%CASC achieves the best performance on \textsc{Syn1}.
%Despite a small gap with ROSC on \textsc{Syn2},
%CASC still outscores other methods.
%
%To make an in-depth anlysis on these matrix-reconstruction-based methods,
%Fig.~\ref{figure:syn1}(b)-(d) and Fig.~\ref{figure:syn2}(b)-(d)
%show their clustering results on \textsc{Syn1} and \textsc{Syn2}, respectively.
%In Fig.~\ref{figure:syn1},
%we see that ROSC-S and CASC correctly identify the yellow circular cluster,
%while ROSC incorrectly merges some objects in the magenta cluster with the circular cluster. 
%%This is because
%%ROSC-S and CASC construct a matrix with sparsity,
%%which largely reduces connections between objects from different clusters.
%For ROSC and ROSC-S,
%some objects in the magenta cluster
%are merged with the navy blue cluster,
%which is avoided in the clustering of CASC.
%For \textsc{Syn2},
%ROSC-S splits the half ring into three segments,
%two of which are incorrectly merged with the two square clusters.
%In contrast,
%ROSC and CASC can recover a large fraction of the half-ring cluster
%and group most green objects as a single cluster.

%We further compare $\check{Z}$ used in these methods.
Recall that the objective of the matrix-reconstruction methods is to construct a new matrix in place of the original similarity matrix $S$
and use the constructed matrix as input to the spectral clustering pipeline. 
Figs.~\ref{figure:syn1_Z} and \ref{figure:syn2_Z} display the constructed matrices (together with the original similarity matrice $S$) for 
\synone\ and \syntwo, respectively.
%Fig.~\ref{figure:syn1_Z} further compares
%the original similarity matrix $S$
%with the constructed matrices
%$\check{Z}$ of ROSC-S, ROSC and CASC on \textsc{Syn1}.
%Similarly, Fig.~\ref{figure:syn2_Z} show these matrices on \textsc{Syn2}.
Each figure displays values in a matrix by pixel brightness.
Rows and columns in the matrix are reordered by gold-standard clusters.
Readers are advised to view the figures magnified on a computer screen.  
Intuitively, each luminous rectangular block corresponds to a cluster.
Ideally, a figure should have 3 blocks (because there are 3 clusters in each dataset); each block is brightly lit (showing high intra-cluster correlation); and
pixels outside the blocks are black (showing sparse inter-cluster correlation). 

From Figs.~\ref{figure:syn1_Z}(a) and \ref{figure:syn2_Z}(a), we see that the original similarity matrices $S$ do not have the desired properties. 
In particular, pixels in blocks are sparse making the blocks not very visible. 
This results in poor clustering when spectral clustering is applied on $S$ directly. 
From Figs.~\ref{figure:syn1_Z}(c) and \ref{figure:syn2_Z}(c), we see that by using the Frobenius norm, ROSC is able to significantly amplify intra-cluster correlations,
resulting in brightly-lit blocks. However, inter-cluster correlations are inadvertently amplified as well, which is particularly striking in Fig~\ref{figure:syn2_Z}(c). 
On the other hand, ROSC-S, which uses the $\ell_1$-norm to promote sparsity, reduces inter-cluster correlation at the expense of  less defined blocks. 
Finally, Figs.~\ref{figure:syn1_Z}(d) and \ref{figure:syn2_Z}(d) show that \algo\ strikes a better balance between intra-cluster correlation and inter-cluster sparsity.
This gives better-lit blocks with dimer regions outside the blocks compared with those of ROSC.

%From the figures,
%we see that
%the original distance-based measure cannot effectively capture correlations between objects in elongated clusters.
%For example,
%objects in the navy blue cluster of \textsc{Syn1}
%have very weak correlations
%and their 
%corresponding cluster block in Fig.~\ref{figure:syn1_Z}(a) is very sparse.
%So do objects in the half-ring cluster of \textsc{Syn2}.
%Due to the $\ell_1$-norm regularization,
%matrices generated by ROSC-S are very sparse.
%ROSC-S weakens not only connections between objects from different clusters,
%but that of objects in the same cluster.
%%Even though objects from different clusters have few connections,
%%relations between objects in the same cluster are also weakened.
%For ROSC,
%matrices exhibit very strong grouping effect,
%but inter-cluster connections are also strengthened.
%This could adversely affect the clustering performance.
%By using trace Lasso regularization,
%CASC constructs $\check{Z}$ with both grouping effect and sparsity in Fig.~\ref{figure:syn1_Z}(d) and Fig.~\ref{figure:syn2_Z}(d),
%which thus enhances its performance on multi-scale data.

We further study how all 11 methods perform on multi-scale data by varying the densities and sizes
of some clusters in the synthetic datasets. 
We make two changes:
(1) increase the density of a cluster while keeping its size unchanged, and
(2) increase the size of a cluster while maintaining its density unchanged.
Here, we show some representative results. 
Specifically, we increase the density of the middle magenta cluster in \textsc{Syn1} and 
use $\Delta d$ to denote the density change
(e.g., $\Delta d$ = 20\% means that the density of the cluster is 1.2 times larger than the original one).
We change the size of the cluster by enlarging the length sideways with the height fixed. 
We use $\Delta s$ to denote the size change
(e.g., $\Delta s$ = 100\% means that the size of the cluster is doubled).
We make similar changes to the half-ring cluster of \textsc{Syn2}.
In particular, we gradually enlarge the size of the cluster from a half ring ($\smile$, $\Delta s$ = 0\%)
to a whole ring ($\bigcirc$, $\Delta s$ = 100\%).
The clustering results are shown in Fig.~\ref{figure:syn_ds}.
From the figure, 
we see that 
\algo\ gives the best and the most stable performances among all the methods over all the test cases.
This shows that a similarity matrix with intra-cluster correlation and inter-cluster sparsity contributes positively to the clustering of multi-scale data.
In contrast,
ROSC-S, which constructs a sparse matrix but
reduces connections between highly correlated objects,
and ROSC, which computes a matrix with grouping effect but amplifies the inter-cluster correlations,
are thus much less robust. 
%For example,
%while it achieves a similar performance as ROSC for \textsc{Syn1}, 
%it performs poorly for \textsc{Syn2}.

\comment{
Experimental results on \emph{AMI}, \emph{purity} and \emph{RI} are respectively 
summarized in Table~\ref{table:ami_synthetic},~\ref{table:purity_synthetic} and~\ref{table:ri_synthetic}.
To better present the results,
we select some methods and show their results graphically. 
%in Fig.~\ref{figure:syn1},~\ref{figure:syn2},~\ref{figure:syn3} and~\ref{figure:syn4}. 

The first dataset consists of 
three uniformly distributed clusters with 500, 80 and 120 objects respectively.
Since the largest rectangular cluster is of large length,
the two-end objects in the cluster are far away from each other,
i.e., they are less similar. 
Further, the closeness between the two small clusters and the large cluster increases more difficulty in clustering.
We observe that 
ROSC achieves $0.4384$ in AMI, $0.8515$ in purity and $0.7121$ in RI,
which outperforms all the comparison methods.
From Fig.~\ref{figure:syn1},
both ZP and FUSE perform poorly in that they separate the large cluster into small subclusters.
In comparison, ROSC correctly identifies the largest cluster.

The second dataset shown in Fig.~\ref{figure:syn2}(a)
is composed of five clusters:
two Gaussian distributed with 100 objects each,
two uniformly distributed with 150 and 200 objects respectively 
and an annular cluster with 100 objects.
The annular cluster is close to the other clusters and it is hard to be identified.
Experimental results show that ZP is the best method on this dataset. 
The clustering result shown in Fig.~\ref{figure:syn2}(c) indicates that 
ZP correctly finds Gaussian distributed and uniformly distributed clusters
with some misclassification in the annular cluster.
We further notice that the standard spectral clustering methods NJW and NCuts 
are also effective because of the high-quality locally scaled similarity matrix.
In comparison, ROSC is not the best method, but it performs well in identifying the annular cluster.
It is also among the best ones in all the measures.
}

%\begin{figure}[!htbp]
%        \centering
%        \includegraphics[width=0.45\textwidth]{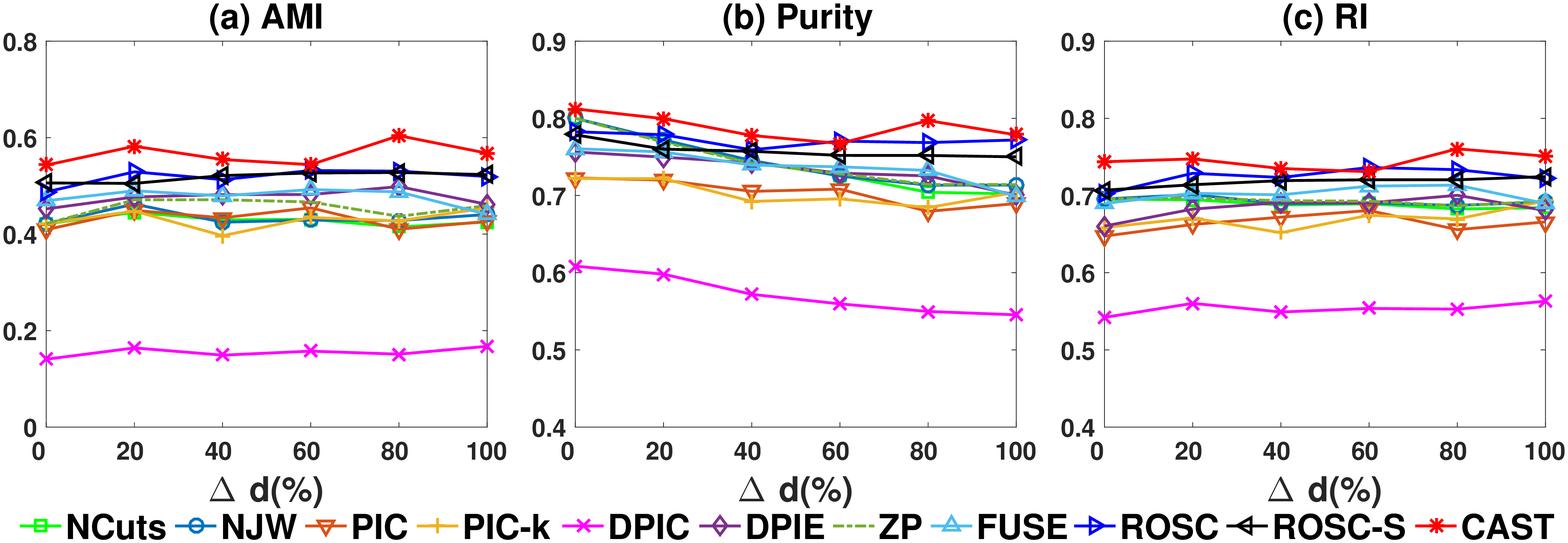}
%        \vspace{-0.15cm}
%        \caption{Results vs. varying cluster's density in \textsc{Syn1}}
%        \label{figure:syn1_d}
%  
%         \centering
%    %     \vspace{0.2cm}
%        \includegraphics[width=0.45\textwidth]{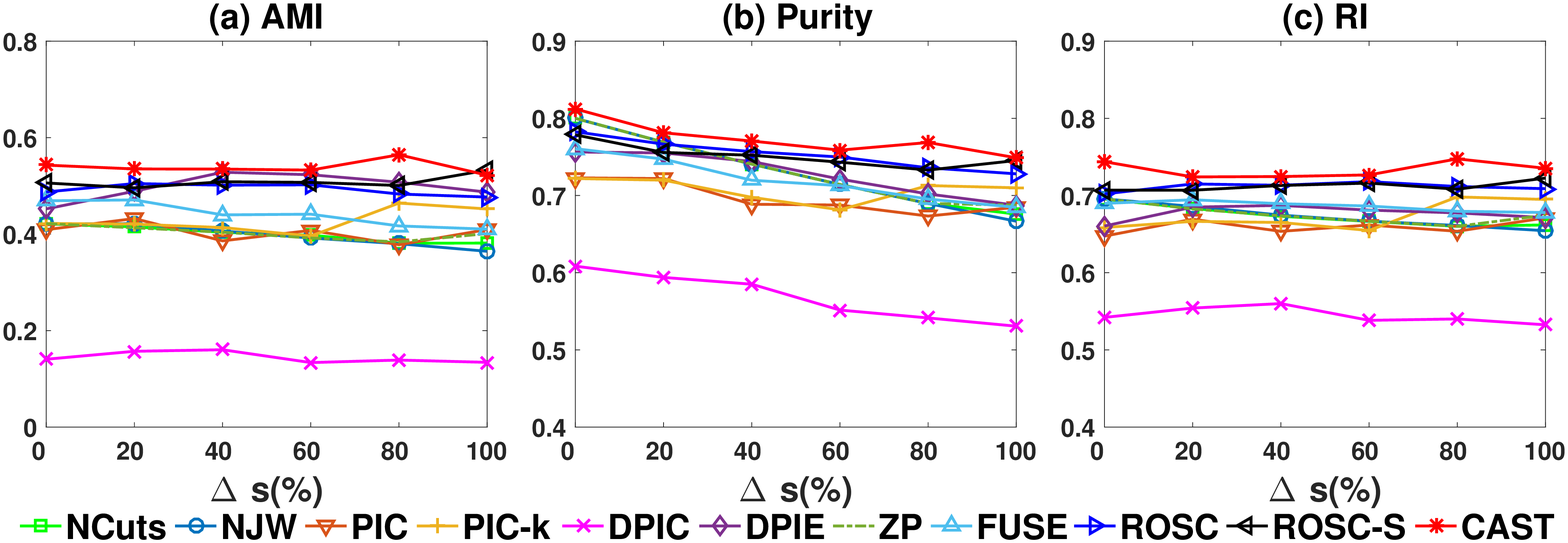}
%        \vspace{-0.15cm}
%        \caption{Results vs. varying cluster's size in \textsc{Syn1}}
%        \label{figure:syn1_s}
%\end{figure}
%
%\begin{figure}[!htbp]
%         \centering
%      %   \vspace{0.2cm}
%        \includegraphics[width=0.45\textwidth]{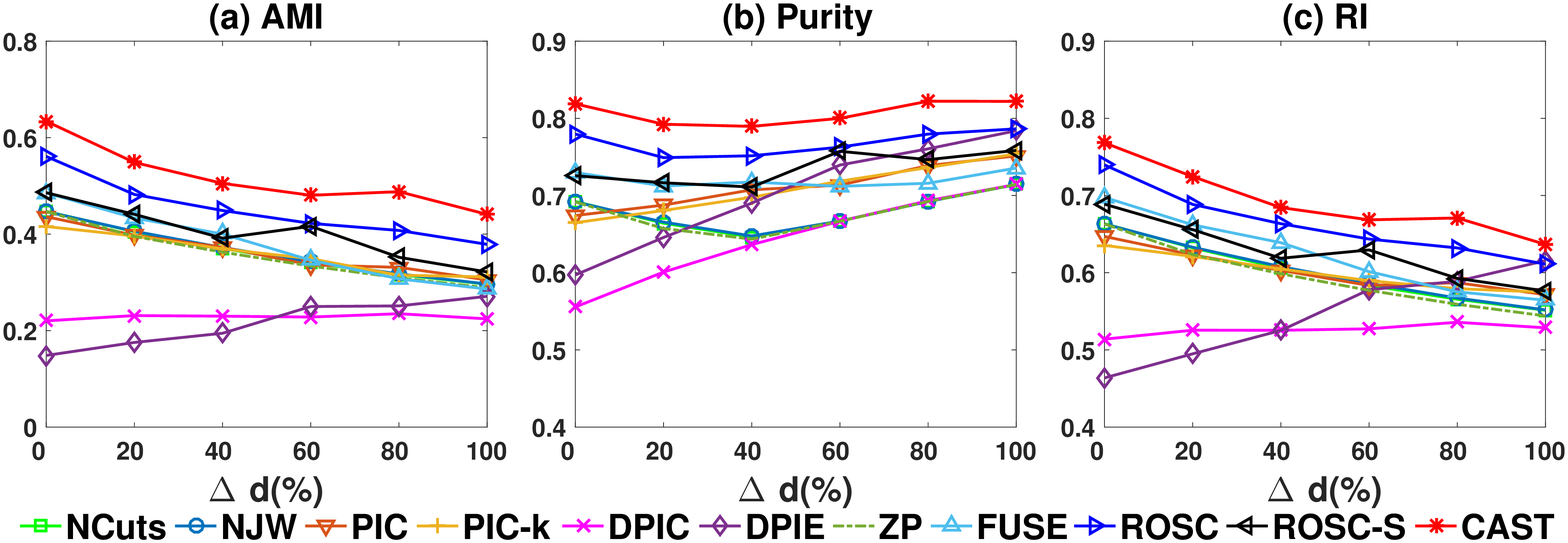}
%        \vspace{-0.15cm}
%        \caption{Results vs. varying cluster's density in \textsc{Syn2}}
%        \label{figure:syn2_d}
%
%         \centering
%      %  \vspace{0.2cm}
%        \includegraphics[width=0.45\textwidth]{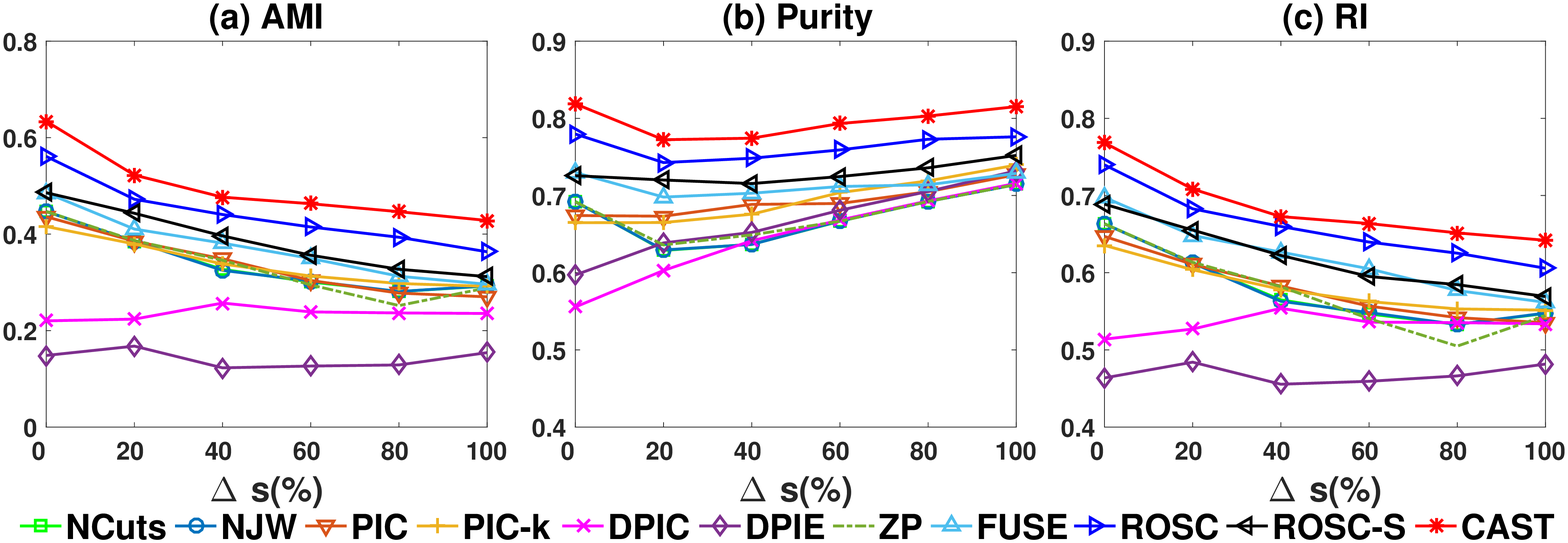}
%        \vspace{-0.15cm}
%    \caption{Results vs. varying cluster's size in \textsc{Syn2}}
%     \label{figure:syn2_s}
%     %\vspace{-1mm}
%\end{figure}

\begin{figure}[!htbp]
    \centering
   % \subfigure[Varying the middle cluster's density in \synone]{\includegraphics[width=\columnwidth]{figure/syn1_d.eps}} \\[-0.5mm]
    %\subfigure[Varying the middle cluster's size in \synone]{\includegraphics[width=\columnwidth]{figure/syn1_s.eps}} \\[-0.5mm]
    %\subfigure[Varying the half-ring cluster's density in \syntwo]{\includegraphics[width=\columnwidth]{figure/syn2_d1.eps}} \\[-0.5mm]
    %\subfigure[Varying the half-ring cluster's size in \syntwo]{\includegraphics[width=\columnwidth]{figure/syn2_s1.eps}}
    \subfloat[Varying the middle cluster's density in \synone]{\includegraphics[width=\columnwidth]{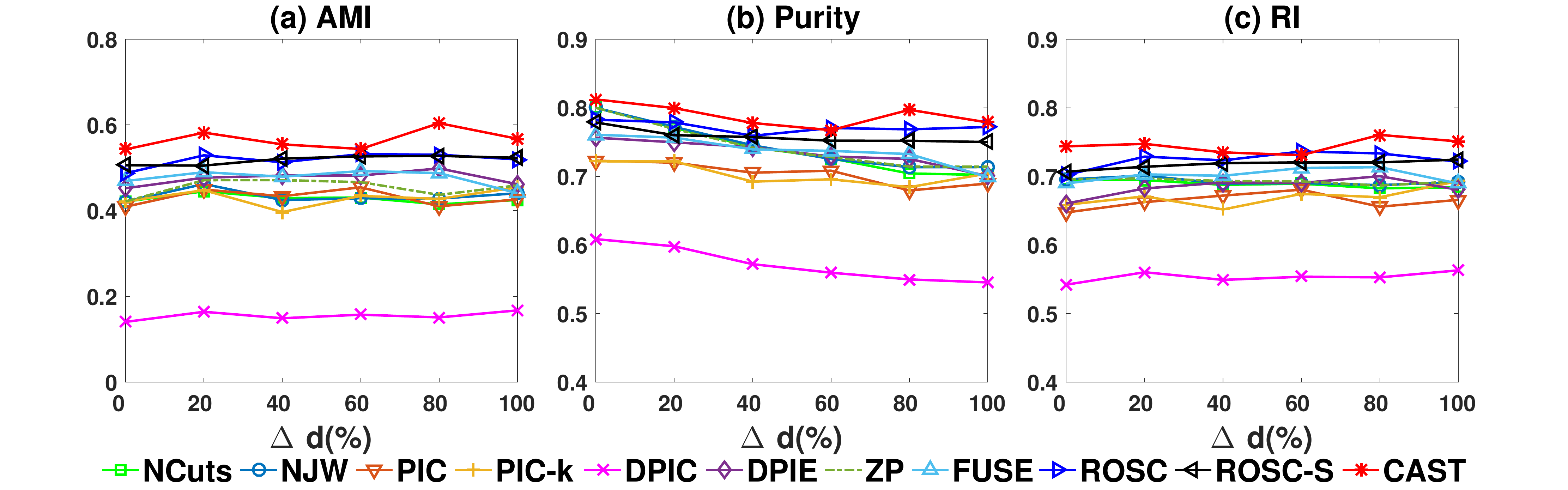}} \\[-0.5mm]
    \subfloat[Varying the middle cluster's size in \synone]{\includegraphics[width=\columnwidth]{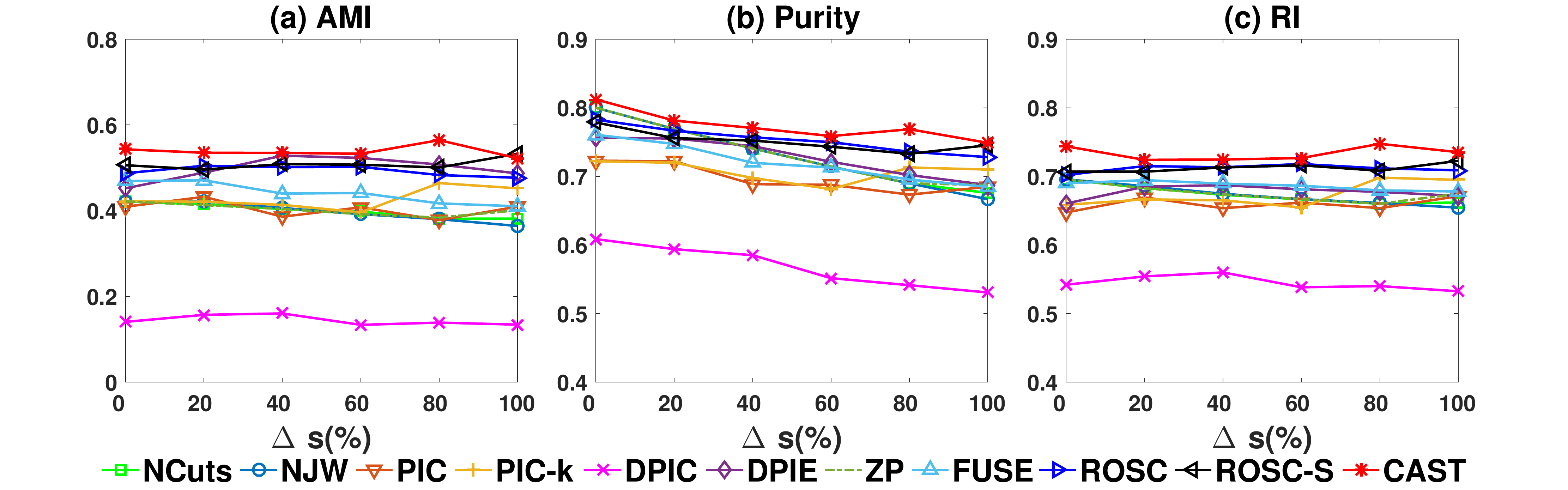}} \\[-0.5mm]
    \subfloat[Varying the half-ring cluster's density in \syntwo]{\includegraphics[width=\columnwidth]{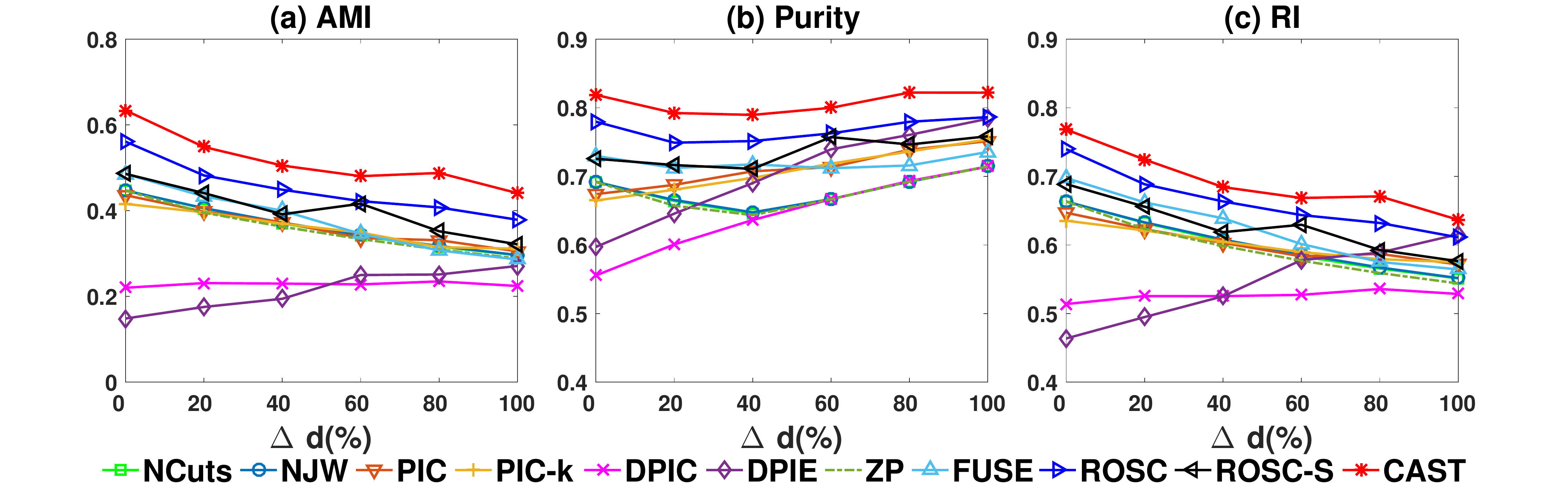}} \\[-0.5mm]
    \subfloat[Varying the half-ring cluster's size in \syntwo]{\includegraphics[width=\columnwidth]{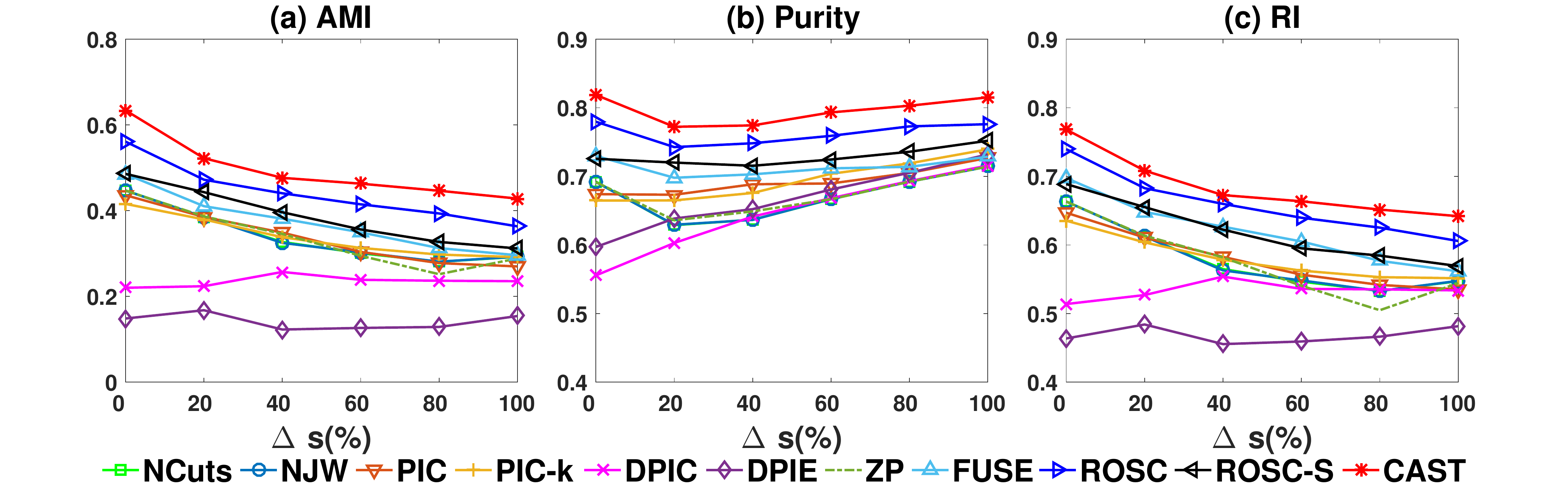}}
    \vspace{-0.3cm} 
     \caption{Results vs. varying the clusters in \synone\ and \syntwo}
     \label{figure:syn_ds}
\end{figure}

\begin{table*}[!htbp]
%\end{table*}

%\begin{table*}[!htbp]
\centering
\resizebox{0.85\linewidth}{!}
{
\begin{tabular}{|c||c|c||c|c|c|c||c|c||c|c|c|} \hline
Dataset &NJW & NCuts  & PIC & PIC-$k$ & DPIC & DPIE & ZP & FUSE & ROSC-S & ROSC & CAST \\ \hline 
%20ngD & $0.4570$ & $0.4750$  & $0.4891$ & $0.4858$ & $0.3406$ & $0.3074$ & $0.5075$ & $0.4672$ & $0.5029$ & $\bm{0.5076\ (1)}$ \\ \hline
COIL20 &$0.4115$ & $0.3926$  & $0.2801$ & $0.2801$ & $0.2361$ & $0.3496$ & $0.5028$ & $0.4177$ & $\bm{0.9500}$ & $\bm{0.9500}$ & $\bm{0.9500\ (1)}$ \\ \hline
%seg\_7\textsc{class} &$0.5608$ & $0.5403$  & $0.3483$ & $0.3566$ & $0.3000$ & $0.4756$ & $0.5143$ & $0.5912$ & $\bm{0.6904}$ & $0.6613$ & $0.6724\ (2)$\\ \hline
%ecoli &$0.5338$ & $0.5365$ & $0.5339$ & $0.4366$ & $0.4814$ & $0.2073$ & $0.4550$ & $0.4325$ & $0.4770$ \\ \hline
glass &$0.5234$ & $0.5187$  & $0.4976$ & $0.5029$ & $0.5245$ & $0.5158$ & $0.5374$ & $0.5390$ & $0.5497$ & $\bm{0.5822}$ & $0.5785\ (2)$ \\ \hline
MNIST0127 & $0.5066$ & $0.4970$  & $0.4975$ & $0.4924$ & $0.5898$ & $0.4395$ & $0.5066$ & $0.6436$ & $0.6971$ & $0.6776$ & $\bm{0.7146\ (1)}$\\ \hline
isolet\_5\textsc{class}&$0.8120$ & $0.7967$  & $0.5863$ & $0.5867$ & $0.3033$ & $0.8572$ & $0.7767$ & $0.7825$ & $\bm{0.8860}$ & $0.8253$ & $0.8671\ (2)$\\ \hline
%statlog &$0.6571$ & $0.6590$ & $\bm{0.6882}$ & $0.5465$ & $0.6052$ & $0.4322$ & $0.3573$ & $0.6193$ & $0.6202\ (4)$ \\ \hline
%wine &$0.8614$ & $0.8633$ & $0.8627$ & $0.8218$ & $0.8229$ & $0.5159$ & $0.5083$ & $0.6919$ & $0.6968$\\ \hline
%Yeast\_4\textsc{class}&$0.4819$ & $0.4665$  & $0.4428$ & $0.4557$ & $0.3831$ & $0.4671$ & $0.4819$ & $\bm{0.4999}$ & $0.4893$ & $0.4920$ & $0.4941\ (2)$\\ \hline
Yale\_5\textsc{class}&$0.5273$ & $0.5091$  & $0.4516$ & $0.4596$ & $0.4000$ & $0.5225$ & $0.5091$ & $0.5458$ & $0.5422$ & $0.5693$ & $\bm{0.5753\ (1)}$\\ \hline
%ecoli &$0.8365$ & $0.8319$  & $0.7251$ & $0.7481$ & $0.4274$ & $0.7506$ & $\bm{0.8393}$ & $0.8117$ & $0.8298\ (4)$ & $0.8026$ & $0.8244\ (5)$\\ \hline
\end{tabular}
}
\caption{Purity scores, real datasets}
\label{table:purity_real}

%\end{table*}

\centering
\resizebox{0.85\linewidth}{!}
{
\begin{tabular}{|c||c|c||c|c|c|c||c|c||c|c|c|} \hline
Dataset &NJW & NCuts & PIC & PIC-$k$ & DPIC & DPIE & ZP & FUSE & ROSC-S & ROSC & CAST\\ \hline 
%20ngD & $0.1913$ & $0.2094$ & $0.2363$ & $0.2558$ & $0.0305$ & $0.0445$ & $\bm{0.2618}$ & $0.2039$ & $0.2320$ & $0.2382\ (3)$ \\ \hline
COIL20 &$0.4718$ & $0.4258$  & $0.2989$ & $0.2781$ & $0.2507$ & $0.3642$ & $0.5702$ & $0.4448$ & $\bm{0.9758}$ & $\bm{0.9758}$ & $\bm{0.9758\ (1)}$\\ \hline
%seg\_7\textsc{class} &$0.5043$ & $0.4603$ & $0.2339$ & $0.2385$ & $0.0915$ & $0.3954$ & $0.4298$  & $0.5049$ & $\bm{0.5971}$ & $0.5608$ & $0.5738\ (2)$\\ \hline
%ecoli &$0.5338$ & $0.5365$ & $0.5339$ & $0.4366$ & $0.4814$ & $0.2073$ & $0.4325$ & $0.4550$ & $0.4770$ \\ \hline
glass &$\bm{0.3469}$ & $0.3465$  & $0.3162$ & $0.3193$ & $0.2807$ & $0.2683$ & $0.3426$ & $0.2589$ & $0.3245$ & $0.2988$ & $0.3390\ (4)$ \\ \hline
MNIST0127 & $0.4353$ & $0.4241$  & $0.3623$ & $0.3822$ & $0.3714$ & $0.2059$& $0.4219$ & $0.4125$ & $0.5243$ & $0.4731$ & $\bm{0.5311\ (1)}$ \\ \hline
isolet\_\textsc{5class}&$0.7595$ & $0.7204$  & $0.5280$ & $0.5292$ & $0.0489$ & $0.7481$ & $0.7379$ & $0.6516$ & $\bm{0.8038}$ & $0.7518$ & $0.7662\ (2)$ \\ \hline
%statlog &$0.6571$ & $0.6590$ & $\bm{0.6882}$ & $0.5465$ & $0.6052$ & $0.4322$ & $0.3573$ & $0.6193$ & $0.6202\ (4)$ \\ \hline
%wine &$0.8614$ & $0.8633$ & $0.8627$ & $0.8218$ & $0.8229$ & $0.5159$ & $0.5083$ & $0.6919$ & $0.6968$\\ \hline
%Yeast\_4\textsc{class}&$0.1173$ & $0.1052$  & $0.1081$ & $0.1165$ & $0.0214$ & $0.1318$ & $0.1138$ & $\bm{0.1816}$ & $0.1478$ & $0.1532$ & $0.1586\ (2)$ \\ \hline
Yale\_5\textsc{class}&$0.3121$ & $0.3321$  & $0.2357$ & $0.2320$ & $0.1468$ & $0.3305$ & $0.2788$ & $0.3465$ & $0.3218$ & $0.3475$ & $\bm{0.3477\ (1)}$ \\ \hline
%ecoli &$0.5243$ & $0.5202$  & $0.4443$ & $0.4674$ & $0.0366$ & $0.4237$ & $\bm{0.5339}$ & $0.4665$ & $0.5275\ (2)$ & $0.4702$ & $0.5161\ (5)$\\ \hline
\end{tabular}
}
\caption{AMI scores, real datasets}
\label{table:ami_real}

%\begin{table*}[!htbp]
\centering
\resizebox{0.85\linewidth}{!}
{
\begin{tabular}{|c||c|c||c|c|c|c||c|c||c|c|c|} \hline
Dataset &NJW & NCuts  & PIC & PIC-$k$ & DPIC & DPIE & ZP & FUSE & ROSC-S & ROSC& CAST \\ \hline 
%20ngD & $0.5902$ & $0.6312$  & $0.6221$ & $0.6153$ & $0.5896$ & $0.3334$ & $0.6569$ & $0.6148$ & $0.6588$ & $\bm{0.6609\ (1)}$ \\ \hline
COIL20 &$0.7303$ & $0.6245$  & $0.4940$ & $0.4481$ & $0.7737$ & $0.6114$ & $0.8534$ & $0.7424$ & $\bm{0.9938}$ & $\bm{0.9938}$ & $\bm{0.9938\ (1)}$ \\ \hline
%seg\_7\textsc{class} &$0.8242$ & $0.7962$  & $0.4830$ & $0.5000$ & $0.7212$ & $0.7162$ & $0.8208$ & $0.8210$ & $\bm{0.8713}$ & $0.8573$ & $0.8583\ (2)$\\ \hline
%ecoli &$0.5338$ & $0.5365$ & $0.5339$ & $0.4366$ & $0.4814$ & $0.2073$ & $0.4325$ & $0.4550$ & $0.4770$ \\ \hline
glass &$0.6890$ & $0.6880$  & $0.6808$ & $0.6851$ & $0.6556$ & $0.6281$ & $0.6949$ & $0.6693$ & $0.6992$ & $\bm{0.7117}$ & $0.7022\ (2)$\\ \hline
MNIST0127 & $0.5683$ & $0.5459$  & $0.5941$ & $0.5887$ & $0.6598$ & $0.4648$ & $0.6018$ & $0.7022$ & $0.7693$ & $0.7533$ & $\bm{0.7867\ (1)}$\\ \hline
isolet\_5\textsc{class}&$0.9058$ & $0.8942$  & $0.7288$ & $0.7296$ & $0.6792$ & $0.9123$ & $0.8993$ & $0.8695$ & $\bm{0.9293}$ & $0.9026$ & $0.9132\ (2)$\\ \hline
%statlog &$0.6571$ & $0.6590$ & $\bm{0.6882}$ & $0.5465$ & $0.6052$ & $0.4322$ & $0.3573$ & $0.6193$ & $0.6202\ (4)$ \\ \hline
%wine &$0.8614$ & $0.8633$ & $0.8627$ & $0.8218$ & $0.8229$ & $0.5159$ & $0.5083$ & $0.6919$ & $0.6968$\\ \hline
%Yeast\_4\textsc{class}&$0.6046$ & $0.5929$  & $0.5643$ & $0.5733$ & $0.5770$ & $0.5037$ & $0.6201$ & $0.6346$ & $$ & $0.6443$ & $\bm{0.6468\ (1)}$\\ \hline0.6423
Yale\_5\textsc{class} &$0.7626$ & $0.7519$  & $0.6772$ & $0.6843$ & $0.6846$ & $0.7542$ & $0.7600$ & $0.7363$ & $0.7721$ & $0.7817$ & $\bm{0.7833\ (1)}$\\ \hline
%ecoli &$\bm{0.8070}$ & $0.8044$  & $0.7919$ & $0.8010$ & $0.5169$ & $0.7753$ & $0.8069$ & $0.7847$ & $0.8051\ (3)$ & $0.7894$ & $0.8047\ (4)$\\ \hline
\end{tabular}
}
\caption{Rand index scores, real datasets}
\vspace{-5mm}
\label{table:ri_real}
\end{table*}

\begin{figure}[!htbp]
    \centering
    %\subfigure[\isolet\_$\alpha_1$]{\includegraphics[width=0.4\columnwidth]{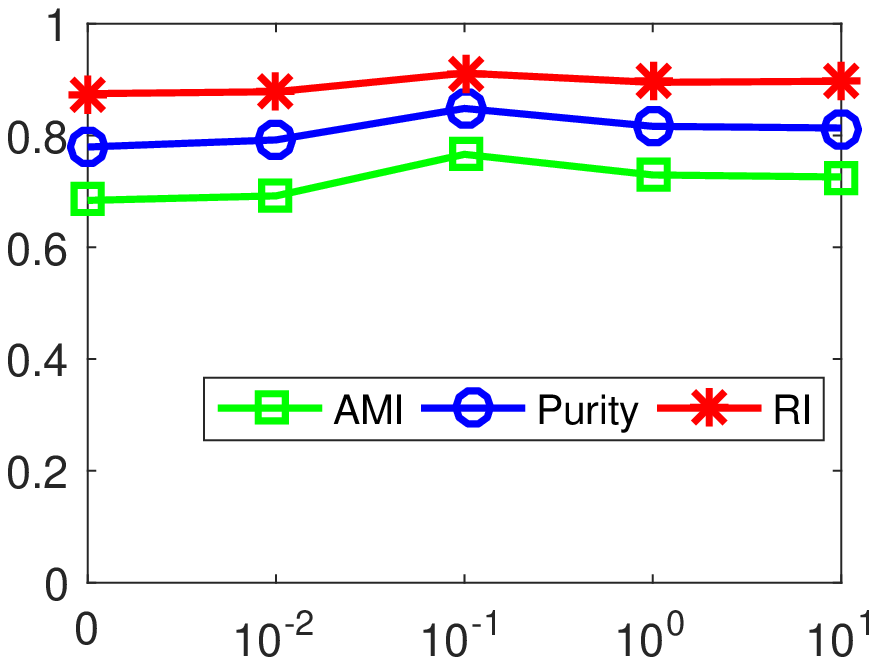}} 
    %\subfigure[\yale\_$\alpha_1$]{\includegraphics[width=0.4\columnwidth]{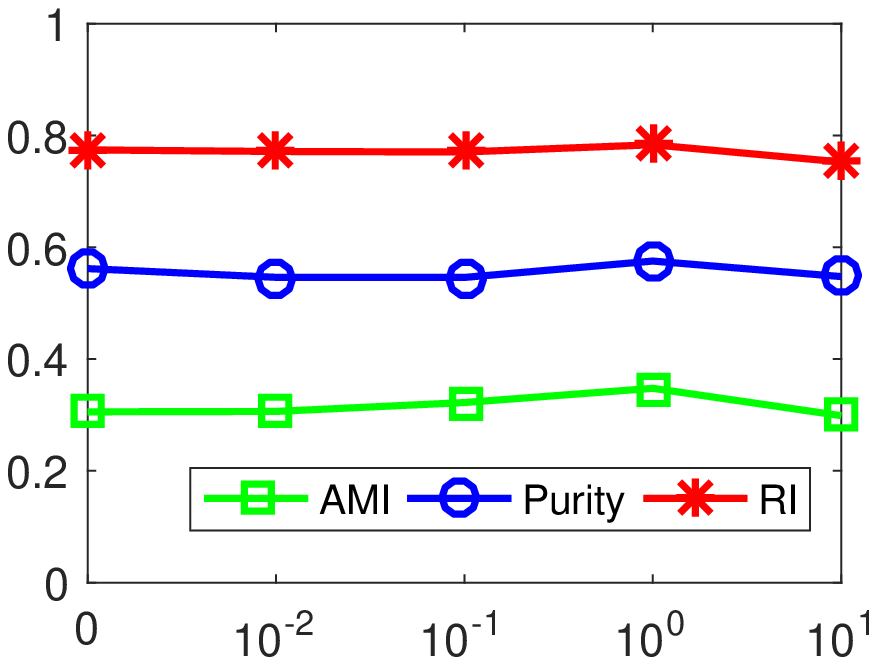}} \\[-2ex]
    %\subfigure[\isolet\_$\alpha_2$]{\includegraphics[width=0.4\columnwidth]{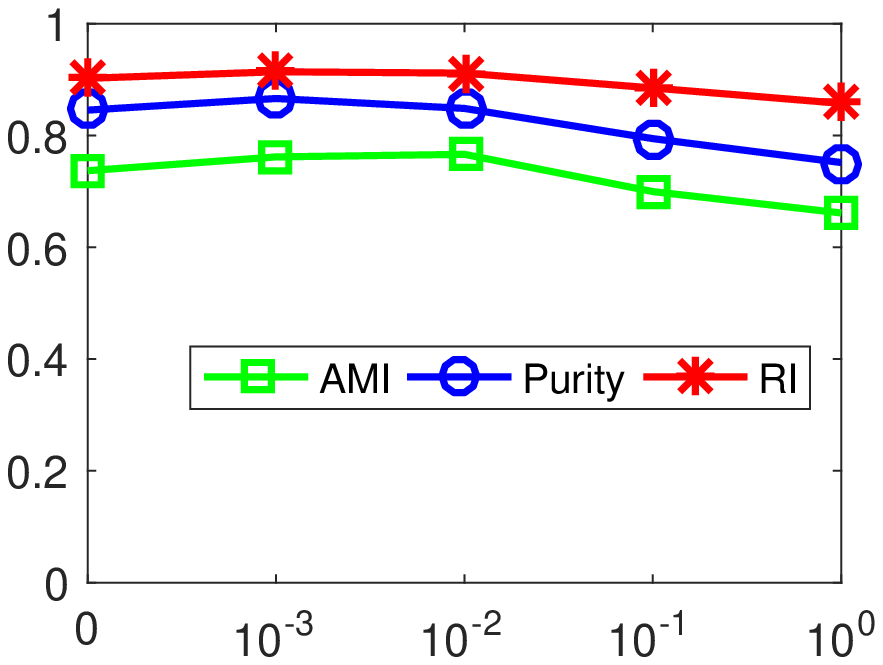}} 
    %\subfigure[\yale\_$\alpha_2$]{\includegraphics[width=0.4\columnwidth]{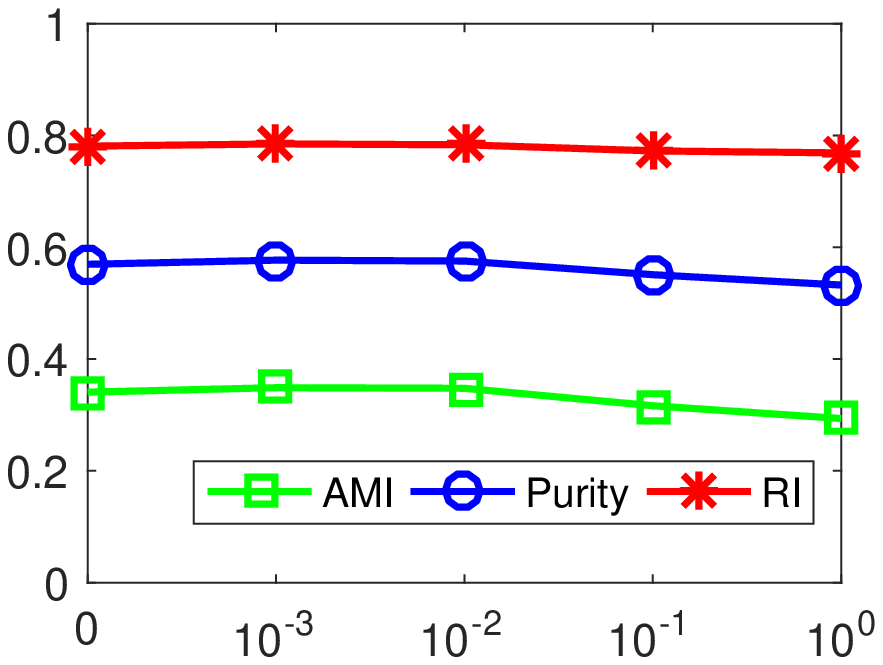}}
    \subfloat[\isolet\_$\alpha_1$]{\includegraphics[width=0.4\columnwidth]{figure/isolet_alpha1.eps}} 
    \subfloat[\yale\_$\alpha_1$]{\includegraphics[width=0.4\columnwidth]{figure/yale_alpha1.eps}} \\[-2ex]
    \subfloat[\isolet\_$\alpha_2$]{\includegraphics[width=0.4\columnwidth]{figure/isolet_alpha2.eps}} 
    \subfloat[\yale\_$\alpha_2$]{\includegraphics[width=0.4\columnwidth]{figure/yale_alpha2.eps}}
    \vspace{-0.3cm} 
     \caption{CAST's performance scores vs. $\alpha_1$ and $\alpha_2$}
     \label{figure:para}
\end{figure}

\noindent{\bf[Real datasets]}
We further compare the methods using 5 real datasets.
%For a fair comparison, we use the same datasets as reported in~\cite{li2018rosc}, which include clusters of various densities and sizes.
%~\footnote{The datasets are publicly available at.}
They are:
%\ngd\ (text documents from 20Newsgroup),
\coil\ (images),
\glass\ (UCI repository),
\mnist\ (hand-written digit images),
\isolet\ (speech, UCI repository),
%\seg\ (images, UCI repository),
%\yeast\ (biological data, UCI repository),
and \yale\ (facial images).
Some statistics of these datasets are given in the Appendix.

%\begin{table}[H]
%\centering
%\resizebox{0.75\linewidth}{!}
%{
%%\begin{tabular}{|c|c|c|c|c|c|} \hline
%\begin{tabular}{|c|r|r|r|} \hline
%Dataset & \#objects & \#dimensions & \#clusters \\ \hline 
%%20ngD & $800$ & $26214$ & $4$ & - & - \\ \hline
%COIL20 &$1,440$ & $1,024$ & $20$  \\ \hline
%%seg\_7\textsc{class} &$210$ & $19$ & $7$ \\ \hline
%glass &$214$ & $9$ & $6$  \\ \hline
%MNIST0127 & $1,666$ & $784$ & $4$ \\ \hline
%isolet\_5\textsc{class}&$300$ & $617$ & $5$ \\ \hline
%%Yeast\_4\textsc{class}&$1299$ & $8$ & $4$ \\ \hline
%Yale\_5\textsc{class}&$55$ & $1024$ & $5$ \\ \hline
%\end{tabular}
%}
%\caption{Statistics of 5 real datasets}
%\vspace{-6mm}
%\label{table:des_real}
%\end{table}

\comment{
For each dataset, we show the number of objects, the number of dimensions, and the number of clusters.
Also, to show whether a dataset is multi-scale or not, we measure the {\it size} and {\it density} of each
gold-standard cluster in each dataset. 
Specifically, for each cluster, we find the largest distance of any object-pair of the cluster. 
This distance is taken as the {\it diameter} of cluster, reflecting how big in size the cluster is.
Moreover, for each cluster, we find the average distance, $\rho$, of all object-pairs of the cluster. 
Then, we use $1/\rho$ as a measure of density. These cluster sizes and densities are shown in
bar graphs in Table~\ref{table:des_real}.
%\footnote{We do not show the sizes and densities of the clusters
%for the {\it 20ngD} dataset because the objects are text documents with extremely sparse and high dimensions.}.
The sizes (densities) shown are all normalized by the size (density) of the biggest (densest) cluster
of the corresponding dataset to the range [0, 1].
Intuitively, the more variations in the bars of a graph indicate the more multi-scale the corresponding
dataset is.
}

Tables~\ref{table:purity_real},~\ref{table:ami_real} and~\ref{table:ri_real} show the performance results.
%of the methods on real datasets w.r.t. purity, AMI and RI, respectively.
%For standard spectral clustering methods (NJW and NCuts), 
%PI-based methods (PIC, PIC-$k$, DPIC and DPIE)
%and multi-scale-oriented-methods (ZP and FUSE),
%we directly use the results given in~\cite{li2018rosc}.
Since we evaluate the methods on 5 datasets w.r.t. 3 measures,
there are in total 15 ``contests''. 
Each row in the tables corresponds to one contest.
We highlight the winner's score of each contest in bold.
We also give the 
ranking of \algo\ in each contest next to its score.
From the tables,
we make the following observations:

%\noindent{\small$\bullet$}
%The standard spectral clustering methods and 
%PI based methods win only 1 of 21 contests.
%Although
%multi-scale-data-oriented methods can
%achieve good performance in some cases,
%e.g., 
%the score 0.1816 of FUSE in the (AMI-\yeast) contest outscores all other methods,
%they are not very robust across all the 7 datasets.  

\noindent{\small$\bullet$}
Matrix-reconstruction methods (ROSC, ROSC-S and \algo) win in all but one contest (AMI-\glass).
In this case, 
%their performances are close to the winner's (NJW).
%For example, 
%for the contest Purity-\yeast, \algo\ ranks 2nd with score 0.4941, which is very close to that of the winner (FUSE, 0.4999).
%As another example,
%\algo\ ranks 2nd. 
\algo's score (0.3390) is very close to that of the winner NJW (0.3469).
This shows that matrix-reconstruction methods
are superior in dealing with multi-scale data.
Compared with other competitors, these methods
derive new matrices that can more effectively capture object correlations,
which explains their excellent performance.
For example, all the three methods significantly outperform the rests for \coil.

\noindent{\small$\bullet$}
ROSC and ROSC-S each win in 5 and 6 contests, respectively. 
We also observe that there are quite a few cases in which their performances differ significantly. 
%ROSC strengthens correlations between distant objects in big and sparse clusters, while
%ROSC-S weakens connections between physically close objects in adjacent clusters. 
%However,
%in some cases, there are big gaps between their performance scores.
For example,
ROSC-S beats ROSC 0.8038 to 0.7518 in AMI-\isolet,
while 
ROSC outperforms ROSC-S 0.5822 to 0.5497 in Purity-\glass.
This is because ROSC lacks sparsity for inter-cluster connections
while ROSC-S loses grouping effect for highly correlated objects.
The results thus show that the relative performance of ROSC-S and ROSC varies across datasets.
They are thus relatively unstable in their performance. 

\noindent{\small$\bullet$}
\algo\ provides a more stable performance across the datasets compared with ROSC and ROSC-S.
First, \algo\ wins in 9 contests and ranks 2nd in 5 others.
For the case that \algo\ is not top-2 (e.g., AMI-\glass), it is
the best algorithm among the matrix-reconstruction methods.
%or is the second among the three with a score that is close to the top one (e.g., in RI-\isolet, \algo's score is 0.9115, while ROSC-S's score is 0.9293).
%
%CASC is the top two 18 times, 7 of which are the winner and 11 of which are the first runner-up.
%For the cases it loses,
%the performance gap with the winner is very small. 
%For example, the score of CASC in (AMI-\glass) is 0.3390
%while the best one is 0.3469.
%We also see that 
%in the cases that ROSC or ROSC-S wins,
%the score of CASC is in between that of ROSC and ROSC-S.
%For example,
%in the (purity-\glass) contest, CASC gets a score of 0.5782 which is larger than that of ROSC-S 0.5523
%but slightly smaller than the score of ROSC 0.5822; 
%for (AMI-\seg), the score of CASC is 0.5738 that falls in between 0.5971 of ROSC-S and 0.5608 of ROSC.
With regularization using the trace Lasso, \algo\ takes advantage of both grouping effect and sparsity.
It is thus more robust when applied to multi-scale data of different characteristics. 

We end this section with a parameter analysis.
\algo\ uses two parameters $\alpha_1$ and $\alpha_2$ to control the 
trace Lasso regularization term
and the TKNN graph regularization term, respectively.
We fix one parameter and vary the other.
Fig.~\ref{figure:para} shows the parameter analysis on the datasets \isolet\ and \yale.
From the figure, 
we see that \algo\ gives very stable performance over a wide range of parameter values.

\section{Conclusions}
\label{sec:conclusion}
In this paper, 
we studied the performance of spectral clustering on data with various sizes and densities.
We reviewed existing spectral methods in handling multi-scale data.
In particular,
we observed that the ROSC algorithm
constructs a matrix with grouping effect,
but it fails to weaken connections between clusters.
We thus proposed the \algo\ algorithm,
which 
uses trace Lasso to balance the effect of $\ell_1$ and $\ell_2$ regularizations.
We mathematically proved that the matrix $\check{Z}$ constructed by \algo\ has grouping effect.
We also show that the matrix achieves sparsity for uncorrelated objects. 
%that has both grouping effect and sparsity.
%We mathematically proved that the matrix $\check{Z}$ has the grouping effect.
We conducted extensive experiments to evaluate \algo's performance and compared \algo\
against other competitors using both synthetic and real datasets.
Our experimental results showed that \algo\ performed very well against its competitors over all the datasets. 
It is thus robust when applied to multi-scale data of different properties.

%\end{document}  % This is where a 'short' article might terminate

% ensure same length columns on last page (might need two sub-sequent latex runs)

%ACKNOWLEDGMENTS are optional

\section{Acknowledgments}
This research is supported by Hong Kong Research Grants Council GRF HKU 17254016.

%\clearpage

% The following two commands are all you need in the
% initial runs of your .tex file to
% produce the bibliography for the citations in your paper.
\bibliographystyle{ACM-Reference-Format}

%\bibliographystyle{abbrv}
%\balance

\bibliography{sc}  % vldb_sample.bib is the name of the Bibliography in this case
% You must have a proper ".bib" file
%  and remember to run:
% latex bibtex latex latex
% to resolve all references

%APPENDIX is optional.
% ****************** APPENDIX **************************************
% Example of an appendix; typically would start on a new page
%\pagebreak
%\clearpage

\appendix
%\subsection{Datasets}
%The details of real datasets are given in Table~\ref{table:des_real}.
%\begin{table}[!htbp]
%\centering
%\resizebox{0.8\linewidth}{!}
%{
%\begin{tabular}{|c|c|c|c|c|c|} \hline
%Dataset & \#objects & \#dimensions & \#clusters \\ \hline 
%%20ngD & $800$ & $26214$ & $4$ & - & - \\ \hline
%COIL20 &$1440$ & $1024$ & $20$  \\ \hline
%%seg\_7\textsc{class} &$210$ & $19$ & $7$ \\ \hline
%glass &$214$ & $9$ & $6$  \\ \hline
%MNIST0127 & $1666$ & $784$ & $4$ \\ \hline
%isolet\_5\textsc{class}&$300$ & $617$ & $5$ \\ \hline
%%Yeast\_4\textsc{class}&$1299$ & $8$ & $4$ \\ \hline
%Yale\_5\textsc{class}&$55$ & $1024$ & $5$ \\ \hline
%\end{tabular}
%}
%\caption{Statistics of 5 real datasets}
%\label{table:des_real}
%\end{table}
\newpage

\section{Experiment}
\label{sec:data}

\subsection{Dataset statistics}
We summarize the statistics of datasets used in our experiments in Table~\ref{table:des_real}.
\begin{table}[H]
\centering
\resizebox{0.75\linewidth}{!}
{
%\begin{tabular}{|c|c|c|c|c|c|} \hline
\begin{tabular}{|c|r|r|r|} \hline
Dataset & \#objects & \#dimensions & \#clusters \\ \hline 
%20ngD & $800$ & $26214$ & $4$ & - & - \\ \hline
COIL20 &$1,440$ & $1,024$ & $20$  \\ \hline
%seg\_7\textsc{class} &$210$ & $19$ & $7$ \\ \hline
glass &$214$ & $9$ & $6$  \\ \hline
MNIST0127 & $1,666$ & $784$ & $4$ \\ \hline
isolet\_5\textsc{class}&$300$ & $617$ & $5$ \\ \hline
%Yeast\_4\textsc{class}&$1299$ & $8$ & $4$ \\ \hline
Yale\_5\textsc{class}&$55$ & $1024$ & $5$ \\ \hline
\end{tabular}
}
\caption{Statistics of 5 real datasets}
%\vspace{-6mm}
\label{table:des_real}
\end{table}

\subsection{Experiment settings}
%\textbf{[Experiment settings]}
%The parameters of all the methods are set according to their original papers.
%For text data, cosine similarity is used to calculate the similarity matrix $S$.
%For attributed data, 
For all the datasets,
the similarity matrix $S$ is computed based on Euclidean distance of objects' attributes.
$S$ is also locally scaled as is done in ZP.
%which are locally scaled based on ZP's local scaling procedure.
All the methods employ \emph{k}-means as the last step of the clustering pipeline to return clusters.
For this step, 
we run \emph{k}-means $100$ times with random starting centroids
and the most frequent cluster assignment is used~\cite{lin2010power}.
For ROSC, ROSC-S and \algo,
%we generate $k$ pseudo-eigenvectors with random starting vectors as is done in~\cite{thang2013deflation}.
we set $K = 4$ in constructing the TKNN graph as suggested in~\cite{li2018rosc}, and 
fine tune the parameters by grid search
for $\alpha_1,\alpha_2 \in \{0.001,0.01,0.1,1,10\}$ 
%and $\alpha_2 \in \{0.001,0.01,0.1,1\}$
to report the best results.
For other methods,
parameters are set according to their original papers.
For each method and dataset,
we run the experiment 50 times and report average results.
Our codes and datasets are publicly available at
\url{https://github.com/lixiang3776/CAST}.
%https://www.dropbox.com/s/1edmi8z5xmpl3eh/CAST.zip?dl=0.

\section{Algorithms}
\label{sec:codes}
%\subsection{Pseudocodes}
We give the details of the pseudocodes.
Algorithm~\ref{alg_sparse} and~\ref{alg_trace_lasso} introduce solving Eq.~\ref{eq:rosc_l1} and Eq.~\ref{eq:casc} by inexact ALM, respectively.
%Algorithm~\ref{alg_trace_lasso} introduces solving Eq.~\ref{eq:casc} by inexact ALM.
Algorithm~\ref{alg:casc} summarizes \algo.

%\subsection{Datasets}
%The details of real datasets are given in Table~\ref{table:des_real}.

\begin{figure}[!hbtp]
\begin{minipage}{\columnwidth}
\begin{algorithm}[H]
%\begin{small}
\begin{algorithmic}[1]
\Require $X$, $\mathcal{W}$, $k$, $\rho$, $\mu_{\max}$, $\epsilon$
\Ensure $Z$
\State Initialize $J$, $Z$, $Y$, $\mu$
\While{$\|J-Z+\text{Diag}(Z)\|_{\infty} > \epsilon$}
\State Update $J$ by Eq.~\ref{eq:update_J_sparse} with the others fixed
\State Update $Z$ by Eq.~\ref{eq:update_Z_sparse} with the others fixed
\State Update the multiplier $Y = Y + \mu (J-Z+ \text{Diag}(Z))$
\State Update $\mu = \min (\rho \mu, \mu_{\max})$
\EndWhile
\State \Return $Z$
\end{algorithmic}
%\end{small}
\caption{Solving Eq.~\ref{eq:rosc_l1} by inexact ALM}
\label{alg_sparse}
\end{algorithm}
\end{minipage}

\begin{minipage}{\columnwidth}
\begin{algorithm}[H]
%\begin{small}
\begin{algorithmic}[1]
\Require $\bmx$, $X$, $\bmw$, $k$, $\rho$, $\mu_{\max}$, $\epsilon$
\Ensure $\bmz$
\State Initialize $J$, $\bmz$, $\bme$, $\bmh$, $\bm{\lambda}_1$, $\bm{\lambda}_2$, $Y$, $\mu$
\While{$\|\bme-\bmx+X\bmz\|_{\infty} > \epsilon$ or $\|\bmh-\bmz+\bmw\|_{\infty} > \epsilon$ or $\|J-X\text{Diag}(\bmz)\|_{\infty} > \epsilon$}
\State Update $\bmz$ by Eq.~\ref{eq:casc_z} with other variables fixed
\State Update $\bme$ by Eq.~\ref{eq:casc_e} with other variables fixed
\State Update $\bmh$ by Eq.~\ref{eq:casc_h} with other variables fixed
\State Update $J$ by Eq.~\ref{eq:casc_j} with other variables fixed
\State Update the multiplier $\bm{\lambda}_1 = \bm{\lambda}_1 + \mu (\bme-\bmx+X\bmz)$
\State Update the multiplier $\bm{\lambda}_2 = \bm{\lambda}_2 + \mu (\bmh-\bmz+\bmw)$
\State Update the multiplier $Y = Y + \mu (J-X\text{Diag}(\bmz))$
\State Update $\mu = \min (\rho \mu, \mu_{\max})$
\EndWhile
\State \Return $\bmz$
\end{algorithmic}
%\end{small}
\caption{Solving Eq.~\ref{eq:casc} by inexact ALM}
\label{alg_trace_lasso}
\end{algorithm}
\end{minipage}

\begin{minipage}{\columnwidth}
\begin{algorithm}[H]
%\begin{small}
\begin{algorithmic}[1]
\Require $S$, $k$.
\Ensure $\mathcal{C} = \{C_1, ..., C_k\}$
\State Compute the TKNN graph and the weight matrix $\mathcal{W}$
\State Calculate $W = D^{-1}S$, where $D_{ii} = \sum_jS_{ij}$
%\For $j \leftarrow 1$ do
%\State$t = 0$
%\Repeat
%\State $v_j^{t+1} \leftarrow \frac{Wv_j^t}{||Wv_j^t||_1}$
%\State $ \delta^{t+1} \leftarrow |v_j^{t+1} - v_j^t|$
%\State $t$++
%\Until $||\delta_j^t+1 - \delta_j^t||_{max} \leq \epsilon$ or $t\geq T$
%\EndFor
\State Apply PI on $W$ and generate $p$ pseudo-eigenvectors $\{\bm{v}_r\}_{r=1}^p$
\State $X = \{\bm{v}_1^T; \bm{v}_2^T; ...; \bm{v}_p^T\}$; $X$ = whiten($X$)
\State Normalize each column vector $\bm{x}$ of $X$ such that $\bm{x}^T\bm{x} = 1$
%\State Solve Eq.~\ref{eq:casc} for each object and construct a matrix $Z^*$
\For{$i = 1$ to $n$} 
\State Solve Eq.~\ref{eq:casc} for an object $x_i$ by inexact ALM and get $\bmz_i^*$
\EndFor
\State Calculate the coefficient matrix $Z^* = [\bmz_1^*,...,\bmz_n^*]$
\State Construct $\check{Z} = (|Z^*| + |(Z^*)^T|)/2$
\State Run NCuts on $\check{Z}$ to obtain clusters $\mathcal{C} = \{C_r\}_{r=1}^k$
%\State Decode $\{C_r\}_{r=1}^k$ from $\{{\bm z_r}\}_{r=1}^k$
\State \Return $\mathcal{C} = \{C_1, ..., C_k\}$
\end{algorithmic}
%\end{small}
\caption{\algo}
\label{alg:casc}
\end{algorithm}
\end{minipage}
\end{figure}

\section{Proof}
\label{sec:formulation}

\comment{
In this section, we introduce the detailed derivation of the two methods.
We first solve the optimization problem
\begin{equation}
\min ||X-XZ||_F^2 + \alpha_1||Z||_1 + \alpha_2||Z-W||_F^2,
s.t.\ diag(Z) = 0
\end{equation}
It is equivalent to solving the problem:
\begin{equation}
\min ||X-XJ||_F^2 + \alpha_1||Z||_1 + \alpha_2||J-W||_F^2,
s.t.\ J = Z-diag(Z).
\end{equation}
\begin{equation}
\begin{split}
L(J,Z) & = \frac{1}{2}||X-XJ||_F^2 + \alpha_1||Z||_1 + \frac{\alpha_2}{2}||J-W||_F^2 \\
& +tr(Y^T(J-Z+diag(Z)))+\frac{\mu}{2}||J-Z+diag(Z)||_F^2
\end{split}
\end{equation}
To update $J$,
\begin{equation}
\begin{split}
\frac{\partial L}{\partial J} & = -X^T(X-XJ) + \alpha_2(J-W) + Y + \mu(J-Z+diag(Z)) \\
& = -X^TX+X^TXJ + \alpha_2 J-\alpha_2 W + Y +\mu J-\mu Z+\mu diag(Z) \\
& = 0 
\end{split}
\end{equation}
\begin{equation}
J = (X^TX+\alpha_2 I + \mu I)^{-1} (X^TX+\alpha_2 W - Y + \mu Z -\mu diag(Z))
\end{equation}
To update $Z$,
\begin{equation}
\frac{\partial L}{\partial Z} = \alpha_1 - Y + diag(Y) + \mu (-J + Z + diag(J) - diag(Z)) = 0
\end{equation}
\begin{equation}
Z-diag(Z) = \frac{1}{\mu}(Y-diag(Y)-\alpha_1) + J - diag(J)
\end{equation}
\begin{equation}
Z = A - diag(A),\ \ A = \mathcal{T}_{\frac{\alpha_1}{\mu}}(\frac{Y}{\mu} + J)
\end{equation}
where $\mathcal{T}_\eta(\cdot)$ is the shrinkage-thresholding operator
acting on each element of the given matrix, and 
it is defined as
$\mathcal{T}_\eta(v) = (|v| - \eta)_+sgn(v)$.
The operator $(\cdot)_+$ returns the argument if it is non-negative and 0, otherwise.
The operator $sgn(\cdot)$ gives the sign of $v$.

To update $Y$:
\begin{equation}
Y = Y + \mu(J - Z + diag(Z))
\end{equation}
We next solve the optimization problem:
\begin{equation}
\min ||y-\tilde{X}z||_2^2 + \alpha_1 ||\tilde{X}diag(z)||_* + \alpha_2||z-w||_2^2
\end{equation}
We can first transform the problem to:
\begin{equation}
\begin{split}
& \min ||y-\tilde{X}z||_2^2 + \alpha_1 ||\tilde{X}diag(z)||_* + \alpha_2||z-w||_2^2 \\
& s.t.\ e = y-\tilde{X}z,\ J = \tilde{X}diag(z),\ h = z - w \\
\end{split}
\end{equation}
\begin{equation}
\begin{split}
L(e,J,h,z) & = \frac{1}{2}||e||_2^2 + \alpha_1||J||_* + \frac{\alpha_2}{2}||h||_2^2 + <\lambda_1, e-y+\tilde{X}z>\\
& + <\lambda_2, h-z+w> + tr(Y^T(J-\tilde{X}diag(z))) \\
& + \frac{\mu}{2}(||e-y+\tilde{X}z||_2^2 + ||J-\tilde{X}diag(z)||_F^2 + ||h-z+w||_2^2)
\end{split}
\end{equation}
To update $z$,
\begin{equation}
\begin{split}
\frac{\partial L}{\partial z} & = \tilde{X}^T\lambda_1 - \lambda_2 - diag(Y^T\tilde{x}) + \mu \tilde{X}^T(e-y+\tilde{X}z) \\
& + \mu(h-z+w) + \mu(-diag(J^T\tilde{X}) + Diag(\tilde{X}^T\tilde{X})z) \\
& = 0 
\end{split}
\end{equation}
\begin{equation}
\begin{split}
\mu\tilde{X}^T\tilde{X}z + \mu z + \mu Diag(\tilde{X}^T\tilde{X})z & = -\tilde{X}^T\lambda_1 + \lambda_2 + diag(Y^T\tilde{X}) \\
& -\mu \tilde{X}^T(e-y) + \mu(h+w) + \mu diag(J^T\tilde{X}) \\
\end{split}
\end{equation}
\begin{equation}
\begin{split}
z & = (\tilde{X}^T\tilde{X} + I + Diag(\tilde{X}^T\tilde{X}))^{-1}\cdot \\
& (-\frac{\tilde{X}^T\lambda_1}{\mu} - \tilde{X}^Te + \tilde{X}^Ty + \frac{\lambda_2}{\mu} + h + w + diag((\frac{Y}{\mu} + J)^T\tilde{X}))\\
\end{split}
\end{equation}
To update $e$,
\begin{equation}
\begin{split}
\frac{\partial L}{\partial e} = e + \lambda1 + \mu(e-y+\tilde{X}z) = 0
\end{split}
\end{equation}
\begin{equation}
e = \frac{\mu}{\mu + 1}(-\frac{\lambda_1}{\mu} + y - \tilde{X}z)
\end{equation}
To update $h$,
\begin{equation}
\begin{split}
\frac{\partial L}{\partial h} = \alpha_2 h + \lambda_2 + \mu(h-z+w) = 0
\end{split}
\end{equation}
\begin{equation}
h = \frac{\mu}{\alpha_2 + \mu}(-\frac{\lambda_2}{\mu} + z - w)
\end{equation}
To update $J$,
it is equivalent to solving the sub-problem:
\begin{equation}
\min_{J} \alpha_1||J||_* + tr(Y^TJ) + \frac{\mu}{2}||J-\tilde{X}diag(z)||_F^2
\end{equation}
We can further transform it into
\begin{equation}
\min_{J} \frac{\alpha_1}{\mu}||J||_* + \frac{1}{2}||J-\tilde{X}diag(z) + \frac{Y}{\mu}||_F^2,
\end{equation}
which is a convex optimization problem and has a closed-form solution.
Suppose the singular value decomposition (SVD) of the rank-$r$ matrix
$H = \tilde{X}diag(z) - \frac{Y}{\mu}$ is $H = U\Sigma V^*$,
where $\Sigma = diag(\{\sigma_i\}_{1\leq i \leq r})$ and $\sigma_i$ is the $i$-th singular value.
Let $\tau = \frac{\alpha_1}{\mu}$ and $\mathcal{D}_{\tau} (\Sigma) = diag(\{\sigma_i - \tau\}_+)$.
Then $J$ can be solved by the singular value thresholding operator:
\begin{equation}
J = U\mathcal{D}_{\tau} (\Sigma) V^*.
\end{equation}
}

\comment{
\begin{theorem}
\label{theo:group_effect}
Given a data vector $y \in \mathbb{R}^d$,
a set of data objects $X = [x_1, ..., x_n] \in \mathbb{R}^{d\times n}$
and two parameters $\alpha_1, \alpha_2 \geq 0$,
let $z^* = [z_1^*, ..., z_n^*]^T \in \mathbb{R}^n$ be the optimal solution to the problem:
$\min \limits_{z} f(z) = \frac{1}{2} \left\| y-Xz \right\|_2^2 + \alpha_1 \left\| XDiag(z)\right\|^* + \frac{\alpha_2}{2}\left\| z-w\right\|_2^2$. 
If $x_i \rightarrow x_j$, then $z_i^* \rightarrow z_j^*$. 
\end{theorem}

We rearrange $X$ and get $X= [\hat{X}, \tilde{X}]$,
where $\tilde{X} \in \mathbb{R}^{d\times q}$ consists of $q$ columns that are close to each other
and $\hat{X} \in \mathbb{R}^{d \times (n-q)}$ consists of the rest columns of $X$.
Moreover,
$\tilde{X}$ satisfies:
\begin{equation}
\max\{\| \tilde{X} - \bar{x}_0\mathbf{1}^T\|_*, \| \tilde{X} - \bar{x}_0\mathbf{1}^T\|_2\} \leq \epsilon,
\end{equation}
where $\epsilon > 0$, $\mathbf{1} \in \mathbb{R}^q$ is the all one's vector and
$\bar{x}_0 = \tilde{X}\mathbf{1}/q$ is the mean of $\tilde{X}$. 
Accordingly, we denote $z^* = [\hat{z};\tilde{z}]$.
To prove Theorem~\ref{theo:group_effect},
we only need to show that
if $\left \| \tilde{z} - \bar{z} \mathbf{1}\right \|_2$ is not small,
then $f([\hat{z};\tilde{z}]) > f([\hat{z};\bar{z}\mathbf{1}])$,
where $\bar{z} = \mathbf{1}^T\tilde{z}/q$ is the average value in $\tilde{z}$.
}

In this section we prove Theorem~\ref{th:casc_ge}.
We first consider two lemmas.
\begin{lemma}
\label{lemma_nucleartoF}
Given $\bmz \in \mathbb{R}^n$, $X\in \mathbb{R}^{d \times n}$,
$\left\| X\text{Diag}(\bmz)\right\|_* \leq \left\| X\right\|_F \left\| \bmz\right\|_2$.
\end{lemma}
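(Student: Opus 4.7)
The plan is to exploit the rank-one decomposition of $X\text{Diag}(\bmz)$ that the paper has already recorded in Eq.~\ref{eq:trace_lasso_decomp}, namely $X\text{Diag}(\bmz) = \sum_{q=1}^n z_q\,\bmx_q \bme_q^T$, and then successively apply two standard inequalities: the triangle inequality for the nuclear norm, followed by the Cauchy--Schwarz inequality.

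First, I would invoke the triangle inequality (together with absolute homogeneity) of the nuclear norm on the rank-one decomposition to obtain
\[
\|X\text{Diag}(\bmz)\|_* \;=\; \Bigl\|\sum_{q=1}^n z_q\,\bmx_q\bme_q^T\Bigr\|_* \;\leq\; \sum_{q=1}^n |z_q|\,\|\bmx_q\bme_q^T\|_*.
\]
Next I would simplify the rank-one terms: since $\bmx_q\bme_q^T$ has rank at most one, its only nonzero singular value equals $\|\bmx_q\|_2\|\bme_q\|_2 = \|\bmx_q\|_2$, so $\|\bmx_q\bme_q^T\|_* = \|\bmx_q\|_2$. This gives $\|X\text{Diag}(\bmz)\|_* \leq \sum_{q=1}^n |z_q|\,\|\bmx_q\|_2$.

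Finally, I would apply Cauchy--Schwarz to the sum of products to conclude
\[
\sum_{q=1}^n |z_q|\,\|\bmx_q\|_2 \;\leq\; \sqrt{\sum_{q=1}^n z_q^2}\;\sqrt{\sum_{q=1}^n \|\bmx_q\|_2^2} \;=\; \|\bmz\|_2\,\|X\|_F,
\]
which chains together to give the claimed bound. There is no serious obstacle here; the only point requiring care is making sure the triangle inequality for $\|\cdot\|_*$ is cited correctly (it follows from the fact that the nuclear norm is indeed a norm, being the dual of the spectral norm) and that $\|\bme_q\|_2 = 1$ is used explicitly. No assumption on $X$ (such as unit-norm columns) is needed for this inequality, so the lemma holds in the stated generality.
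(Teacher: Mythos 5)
Your proof is correct and complete. For the record, the paper does not actually prove Lemma~\ref{lemma_nucleartoF} at all: it defers both Lemma~\ref{lemma_nucleartoF} and Lemma~\ref{lemma:lambda_mu} to the cited reference \cite{lu2013correlation}, so your argument supplies a self-contained proof where the paper gives none. Your route --- the rank-one decomposition $X\text{Diag}(\bmz)=\sum_{q}z_q\bmx_q\bme_q^T$ (already recorded in Eq.~\ref{eq:trace_lasso_decomp}), the triangle inequality for $\|\cdot\|_*$, the identity $\|\bmx_q\bme_q^T\|_*=\|\bmx_q\|_2\|\bme_q\|_2=\|\bmx_q\|_2$ for rank-one matrices, and finally Cauchy--Schwarz --- is elementary and uses only facts the paper has already set up. The more common textbook derivation is a one-liner from the Schatten-norm H\"older inequality $\|AB\|_*\le\|A\|_F\|B\|_F$ applied with $B=\text{Diag}(\bmz)$, since $\|\text{Diag}(\bmz)\|_F=\|\bmz\|_2$; that version is shorter but imports a less elementary inequality, whereas yours builds the bound from first principles and makes transparent where each factor ($\|X\|_F$ from the column norms, $\|\bmz\|_2$ from the coefficients) comes from. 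Your closing remark that no unit-norm assumption on the columns of $X$ is needed is also accurate.
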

\begin{lemma}
\label{lemma:lambda_mu}
If $\eta_i \geq \mu_i \geq 0$, $i = 1,...,n$, and $C = \sum_{i=1}^n(\eta_i - \mu_i)$,
then $\sum_{i=1}^n\sqrt{\eta_i} \geq \sum_{i=1}^n \sqrt{\mu_i} + \frac{C}{2\sqrt{\max\{\eta_i\}}}$.
\end{lemma}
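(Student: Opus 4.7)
The plan is to prove the inequality term-by-term and then sum. The key observation is that each summand $\sqrt{\eta_i} - \sqrt{\mu_i}$ can be lower-bounded by $(\eta_i - \mu_i)/(2\sqrt{\max_j \eta_j})$ using the standard conjugate identity, and then summing over $i$ yields exactly the stated bound with the constant $C$ appearing on the right-hand side.

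Concretely, first I would rationalize: for each index $i$ with $\eta_i > 0$,
\[
\sqrt{\eta_i} - \sqrt{\mu_i} \;=\; \frac{\eta_i - \mu_i}{\sqrt{\eta_i} + \sqrt{\mu_i}}.
\]
Since $0 \leq \mu_i \leq \eta_i$, we have $\sqrt{\mu_i} \leq \sqrt{\eta_i}$, so the denominator satisfies $\sqrt{\eta_i} + \sqrt{\mu_i} \leq 2\sqrt{\eta_i} \leq 2\sqrt{\max_j \eta_j}$. Because the numerator $\eta_i - \mu_i$ is non-negative, replacing the denominator by the larger quantity only decreases the fraction, giving
\[
\sqrt{\eta_i} - \sqrt{\mu_i} \;\geq\; \frac{\eta_i - \mu_i}{2\sqrt{\max_j \eta_j}}.
\]
Indices $i$ with $\eta_i = 0$ force $\mu_i = 0$ and contribute $0$ to both sides, so they can be discarded harmlessly; the edge case $\max_j \eta_j = 0$ reduces to the trivial identity $0 \geq 0$.

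Finally I would sum the per-index inequality over $i = 1,\dots,n$:
\[
\sum_{i=1}^n \sqrt{\eta_i} - \sum_{i=1}^n \sqrt{\mu_i} \;\geq\; \frac{1}{2\sqrt{\max_j \eta_j}} \sum_{i=1}^n (\eta_i - \mu_i) \;=\; \frac{C}{2\sqrt{\max_j \eta_j}},
\]
and rearranging gives exactly the claim. There is essentially no hard step here; the only thing to be careful about is the degenerate case when every $\eta_i$ is zero, which I handle separately as noted. The proof is short and self-contained, requiring only the algebraic identity $a - b = (a^2 - b^2)/(a+b)$ with $a = \sqrt{\eta_i}$, $b = \sqrt{\mu_i}$, plus monotonicity of the square root.
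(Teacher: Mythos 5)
Your proof is correct. The paper does not actually prove Lemma~\ref{lemma:lambda_mu} itself --- it defers to the cited reference \cite{lu2013correlation} --- but the argument there is the same standard one you give: rationalize $\sqrt{\eta_i}-\sqrt{\mu_i}=(\eta_i-\mu_i)/(\sqrt{\eta_i}+\sqrt{\mu_i})$, bound the denominator by $2\sqrt{\max_j\eta_j}$, and sum; your handling of the degenerate cases $\eta_i=0$ and $\max_j\eta_j=0$ is a sensible addition.
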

\comment{
\begin{theorem}
Given a set of objects $X = [\hat{X},\tilde{X}]$ and $w = [\hat{w}, \tilde{w}]$,
where $\max\{\| \tilde{X} - \bar{x}_0\mathbf{1}^T\|_*, \| \tilde{X} - \bar{x}_0\mathbf{1}^T\|_2\} \leq \epsilon$ and $\|\tilde{w} - \bar{w}\mathbf{1}\|_2 \leq \epsilon$,
let $z = [\hat{z};\tilde{z}] $, $\bar{z} = \tilde{z}^T\mathbf{1}/q$
and $\tilde{z}^*$ is the optimal solution to $\min_{\tilde{z}} \| \tilde{z} - \tilde{w} \|_2^2,\ s.t.\ \tilde{z}^T\mathbf{1} = q\bar{z}$.
If $\left\| \tilde{z} - \bar{z}\mathbf{1} \right\|_2 > \delta$, $f([\hat{z};\tilde{z}]) > f([\hat{z};\bar{z}\mathbf{1}])$,
where
{\small{
\begin{equation}
\nonumber
\delta = \sqrt{\frac{((\alpha_1 + \|y - \hat{X}\hat{z} - \tilde{X} (\bar{z} \mathbf{1})\|_2) \|\tilde{z}\|_2 + \alpha_1 |\bar{z}|)\epsilon - \frac{\alpha_2}{2} \sum_{i=1}^q [(\tilde{z}_i^* - \bar{z})(\tilde{z}_i^* + \bar{z} - 2\tilde{w}_i)]}{\frac{\alpha_1\|\bar{x}_0\|_2^2}{2\|[\hat{X}_{\hat{z}}\ \bar{z}\bar{x}_0\mathbf{1}^T]\|_2}}}.
\end{equation}
}}
\end{theorem}
}
For proofs of both lemmas, see~\cite{lu2013correlation}.
Next, we prove Theorem~\ref{th:casc_ge}.
%\begin{proof}
Let $\hat{X}_{\hat{\bmz}} = \hat{X}\text{Diag}(\hat{\bmz})$. We get
\begin{small}
\begin{equation}
\nonumber
%\label{eq:init}
\begin{split}
f([\hat{\bmz};\tilde{\bmz}]) &= \frac{1}{2}\|\bmx - \hat{X}\hat{\bmz} - \tilde{X}\tilde{\bmz}\|_2^2 + \alpha_1\| [\hat{X}_{\hat{\bmz}}\ \tilde{X}\text{Diag}(\tilde{\bmz})]\|_* + \frac{\alpha_2}{2}||[\hat{\bmz};\tilde{\bmz}]-\bmw||_2^2.
%&= \frac{1}{2}\|(\bmx - \hat{X}\hat{\bmz} - \bar{\bmx}_0 \mathbf{1}^T\tilde{\bmz}) + (\bar{\bmx}_0 \mathbf{1}^T\tilde{\bmz} - \tilde{X}\tilde{\bmz})\|_2^2 \\
%&+ \alpha_1\| [\hat{X}_{\hat{\bmz}}\  \bar{\bmx}_0\mathbf{1}^T\text{Diag}(\tilde{\bmz})] + [0\ (\tilde{X} - \bar{\bmx}_0\mathbf{1}^T)\text{Diag}(\tilde{\bmz})]\|_* \\
%&+ \frac{\alpha_2}{2}||\hat{\bmz} - \hat{\bmw}||_2^2 + \frac{\alpha_2}{2}||(\tilde{\bmz} - \tilde{\bmw})||_2^2 \\
\end{split}
\end{equation}
\end{small}

Rewrite $f([\hat{\bmz};\tilde{\bmz}]) = \Omega_1 + \Omega_2 + \Omega_3$,
where $\Omega_1 = \frac{1}{2}\|(\bmx - \hat{X}\hat{\bmz} - \bar{\bmx}_0 \mathbf{1}^T\tilde{\bmz}) + (\bar{\bmx}_0 \mathbf{1}^T\tilde{\bmz} - \tilde{X}\tilde{\bmz})\|_2^2$,
$\Omega_2 = \alpha_1\| [\hat{X}_{\hat{\bmz}}\  \bar{\bmx}_0\mathbf{1}^T\text{Diag}(\tilde{\bmz})] + [0\ (\tilde{X} - \bar{\bmx}_0\mathbf{1}^T)\text{Diag}(\tilde{\bmz})]\|_*$
and $\Omega_3 = \frac{\alpha_2}{2}||\hat{\bmz} - \hat{\bmw}||_2^2 + \frac{\alpha_2}{2}||(\tilde{\bmz} - \tilde{\bmw})||_2^2$.
Since $\bmy^*$ is the optimal solution to the problem: $\min_{\bmy} \|\bmy-\tilde{\bmw}\|_2^2,\ \text{s.t.}\ \bmy^T\mathbf{1} = q\bar{z}$,  we have
\begin{equation}
\nonumber
\begin{split}
\|\tilde{\bmz} - \tilde{\bmw}\|_2^2  & \geq \|\bmy^* - \tilde{\bmw}\|_2^2 - \|\bar{z}\mathbf{1} - \tilde{\bmw}\|_2^2 + \|\bar{z}\mathbf{1} - \tilde{\bmw}\|_2^2\\
%& = \sum_{i=1}^d [(z_i^* - \tilde{w}_i)^2 - (\bar{z} - \tilde{w}_i)^2]\\
% = (\tilde{z}^* - \bar{z}\mathbf{1})^T(\tilde{z}^* + \bar{z}\mathbf{1} - 2\tilde{w}) \\
& =\sum_{j=1}^q [(y_j^* - \bar{z})(y_j^* + \bar{z} - 2\tilde{w}_j)] + \|\bar{z}\mathbf{1} - \tilde{\bmw}\|_2^2.\\
\end{split}
\end{equation}
%\begin{equation}
%\label{eq:optimalz}
%\begin{split}
%\|\tilde{\bmz} - \tilde{\bmw}\|_2^2  - \|\bar{z}\mathbf{1} - \tilde{\bmw}\|_2^2  & \geq \|\bmy^* - \tilde{\bmw}\|_2^2 - \|\bar{z}\mathbf{1} - \tilde{\bmw}\|_2^2 \\
%%& = \sum_{i=1}^d [(z_i^* - \tilde{w}_i)^2 - (\bar{z} - \tilde{w}_i)^2]\\
%% = (\tilde{z}^* - \bar{z}\mathbf{1})^T(\tilde{z}^* + \bar{z}\mathbf{1} - 2\tilde{w}) \\
%& =\sum_{j=1}^q [(y_j^* - \bar{z})(y_j^* + \bar{z} - 2\tilde{w}_j)] \\
%\end{split}
%\end{equation}
Let $\Omega_4 = \sum_{j=1}^q [(y_j^* - \bar{z})(y_j^* + \bar{z} - 2\tilde{w}_j)] $.
Since $\|\bmy^* - \tilde{\bmw}\|_2^2$ is the minimum value,
we have $\Omega_4 \leq 0$ and 
$\|\tilde{\bmz} - \tilde{\bmw}\|_2^2  \geq \|\bar{z}\mathbf{1} - \tilde{\bmw}\|_2^2 + \Omega_4$.
We derive lower bounds for $\Omega_1$, $\Omega_2$ and $\Omega_3$:
%For $\Omega_1$,
\begin{small}
\begin{equation}
\label{eq:omega1}
\begin{split}
\Omega_1 & \geq \frac{1}{2}\|\bmx - \hat{X}\hat{\bmz} - \bar{\bmx}_0 (\mathbf{1}^T\tilde{\bmz})\|_2^2 
- \|\bmx - \hat{X}\hat{\bmz} - \bar{\bmx}_0 (\mathbf{1}^T\tilde{\bmz})\|_2 \|(\bar{\bmx}_0 \mathbf{1}^T - \tilde{X})\tilde{\bmz}\|_2 \\
&\geq \frac{1}{2}\|\bmx - \hat{X}\hat{\bmz} - \bar{\bmx}_0 (\mathbf{1}^T\tilde{\bmz})\|_2^2 - \|\bmx - \hat{X}\hat{\bmz} - \bar{\bmx}_0 (\mathbf{1}^T\tilde{\bmz})\|_2 \|\bar{\bmx}_0 \mathbf{1}^T - \tilde{X}\|_2\|\tilde{\bmz}\|_2 \\
\end{split} 
\end{equation}
\end{small}
Based on Lemma~\ref{lemma_nucleartoF},
\begin{equation}
\label{eq:omega2}
\begin{split}
\Omega_2 & \geq \alpha_1\| [\hat{X}_{\hat{\bmz}}\  \bar{\bmx}_0\mathbf{1}^T\text{Diag}(\tilde{\bmz})]\|_* - \alpha_1 \| (\tilde{X} - \bar{\bmx}_0\mathbf{1}^T)\text{Diag}(\tilde{\bmz})\|_* \\
& \geq \alpha_1\| [\hat{X}_{\hat{\bmz}}\  \bar{\bmx}_0\tilde{\bmz}^T]\|_* - \alpha_1 \|\tilde{\bmz}\|_2\| \tilde{X} - \bar{\bmx}_0\mathbf{1}^T\|_F,  \\ 
\end{split}
\end{equation}
%and 
%For $\Omega_3$,
\begin{equation}
\label{eq:omega3}
\Omega_3  \geq \frac{\alpha_2}{2}||[\hat{\bmz}; \bar{z}\mathbf{1}] - \bmw||_2^2 + \frac{\alpha_2}{2} \Omega_4
\end{equation}
Combining Eqs.~\ref{eq:omega1}-\ref{eq:omega3}, we have,
\begin{equation}
\nonumber
\begin{split}
f([\hat{\bmz};\tilde{\bmz}])  &\geq \frac{1}{2}\|\bmx - \hat{X}\hat{\bmz} - \tilde{X} (\bar{z} \mathbf{1})\|_2^2 - (\alpha_1 + \|\bmx - \hat{X}\hat{\bmz} - \tilde{X} (\bar{z} \mathbf{1})\|_2) \|\tilde{z}\|_2 \epsilon\\
& + \alpha_1\| [\hat{X}_{\hat{\bmz}}\  \bar{\bmx}_0\tilde{\bmz}^T]\|_* +   \frac{\alpha_2}{2}||[\hat{\bmz}; \bar{z}\mathbf{1}] - \bmw||_2^2 + \frac{\alpha_2}{2}  \Omega_4 \\
\end{split}
\end{equation}
%To shorten the above equation,
Let $\Omega_5 = \frac{1}{2}\|\bmx - \hat{X}\hat{\bmz} - \tilde{X} (\bar{z} \mathbf{1})\|_2^2 - (\alpha_1 + \|\bmx - \hat{X}\hat{\bmz} - \tilde{X} (\bar{z} \mathbf{1})\|_2) \|\tilde{z}\|_2 \epsilon$
and $\Omega_6 =  \frac{\alpha_2}{2}||[\hat{\bmz}; \bar{z}\mathbf{1}] - \bmw||_2^2 + \frac{\alpha_2}{2}  \Omega_4$, we have,
\begin{equation}
\label{eq:init_extend}
f([\hat{\bmz};\tilde{\bmz}])  \geq \Omega_5 + \alpha_1\| [\hat{X}_{\hat{\bmz}}\  \bar{\bmx}_0\tilde{\bmz}^T]\|_* +  \Omega_6.
\end{equation}

\comment{
\begin{equation}
\label{eq:init_extend}
\begin{split}
f([\hat{\bmz};\tilde{\bmz}]) &\geq \frac{1}{2}\|\bmx - \hat{X}\hat{\bmz} - \bar{\bmx}_0 (\mathbf{1}^T\tilde{\bmz})\|_2^2 + \frac{1}{2}\|(\bar{\bmx}_0 \mathbf{1}^T - \tilde{X})\tilde{\bmz}\|_2^2 \\
&- \|\bmx - \hat{X}\hat{\bmz} - \bar{\bmx}_0 (\mathbf{1}^T\tilde{\bmz})\|_2 \|(\bar{\bmx}_0 \mathbf{1}^T - \tilde{X})\tilde{\bmz}\|_2\\
&+ \alpha_1\| [\hat{X}_{\hat{\bmz}}\  \bar{\bmx}_0\mathbf{1}^T\text{Diag}(\tilde{\bmz})]\|_* - \alpha_1 \| (\tilde{X} - \bar{\bmx}_0\mathbf{1}^T)\text{Diag}(\tilde{\bmz})\|_*\\
&+  \frac{\alpha_2}{2}||\hat{\bmz} - \hat{\bmw}||_2^2 + \frac{\alpha_2}{2}\| \bar{z}\mathbf{1} - \tilde{\bmw} \|_2^2 + \frac{\alpha_2}{2} \sum_{j=1}^q [(y_j^* - \bar{z})(y_j^* + \bar{z} - 2\tilde{w}_j)] \\
&\geq \frac{1}{2}\|\bmx - \hat{X}\hat{\bmz} - \bar{\bmx}_0 (\mathbf{1}^T\tilde{\bmz})\|_2^2 - \|\bmx - \hat{X}\hat{\bmz} - \bar{\bmx}_0 (\mathbf{1}^T\tilde{\bmz})\|_2 \|\bar{\bmx}_0 \mathbf{1}^T - \tilde{X}\|_2\|\tilde{\bmz}\|_2 \\
&+ \alpha_1\| [\hat{X}_{\hat{\bmz}}\  \bar{\bmx}_0\tilde{\bmz}^T]\|_* - \alpha_1 \|\tilde{\bmz}\|_2\| \tilde{X} - \bar{\bmx}_0\mathbf{1}^T\|_F  \\
& +  \frac{\alpha_2}{2}||[\hat{\bmz}; \bar{z}\mathbf{1}] - \bmw||_2^2 + \frac{\alpha_2}{2} \sum_{j=1}^q [(y_j^* - \bar{z})(y_j^* + \bar{z} - 2\tilde{w}_j)] \\
&\geq \frac{1}{2}\|\bmx - \hat{X}\hat{z} - \tilde{X} (\bar{z} \mathbf{1})\|_2^2 - (\alpha_1 + \|\bmx - \hat{X}\hat{z} - \tilde{X} (\bar{z} \mathbf{1})\|_2) \|\tilde{z}\|_2 \epsilon\\
& + \alpha_1\| [\hat{X}_{\hat{\bmz}}\  \bar{\bmx}_0\tilde{\bmz}^T]\|_* +   \frac{\alpha_2}{2}||[\hat{\bmz}; \bar{z}\mathbf{1}] - \bmw||_2^2 \\
& + \frac{\alpha_2}{2} \sum_{j=1}^q [(y_j^* - \bar{z})(y_j^* + \bar{z} - 2w_j)] \\
\end{split}
\end{equation}
}

%To further extend $\| [\hat{X}_{\hat{\bmz}}\  \bar{\bmx}_0\tilde{\bmz}^T]\|_*$,
Let $Y = \hat{X}_{\hat{\bmz}} \hat{X}_{\hat{\bmz}}^T$ and $\lambda_i(M)$ denote the $i$-th largest eigenvalue of a matrix $M$.
%with the order $\lambda_1 \geq \lambda_2 \geq ... \lambda_n$.
We have,
\begin{scriptsize}
\begin{equation}
\label{eq:lambda}
\begin{split}
\sum_{i=1}^d\lambda_i(Y+\|\tilde{\bmz}\|_2^2\bar{\bmx}_0\bar{\bmx}_0^T) &= tr(Y+\|\tilde{\bmz}\|_2^2\bar{\bmx}_0\bar{\bmx}_0^T) \\
&= tr(Y+\|\bar{z}\mathbf{1}\|_2^2\bar{\bmx}_0\bar{\bmx}_0^T) + tr((\|\tilde{\bmz}\|_2^2 -\|\bar{z}\mathbf{1}\|_2^2)\bar{\bmx}_0\bar{\bmx}_0^T) \\
&= \sum_{i=1}^d\lambda_i(Y+\|\bar{z}\mathbf{1}\|_2^2\bar{\bmx}_0\bar{\bmx}_0^T) + (\|\tilde{\bmz}\|_2^2 -\|\bar{z}\mathbf{1}\|_2^2)\|\bar{\bmx}_0\|_2^2 \\
\end{split}
\end{equation}
\end{scriptsize}
Since $\mathbf{1}^T\tilde{\bmz} = q\bar{z}$,
we get $\|\tilde{\bmz}\|_2^2 \geq \|\bar{z}\mathbf{1}\|_2^2$
and $\lambda_i(Y+\|\tilde{\bmz}\|_2^2\bar{\bmx}_0\bar{\bmx}_0^T) \geq \lambda_i(Y+\|\bar{z}\mathbf{1}\|_2^2\bar{\bmx}_0\bar{\bmx}_0^T) \geq 0 $.
Moreover,
$\| \tilde{\bmz}\|_2^2 - \| \bar{z}\mathbf{1}\|_2^2 = \| \tilde{\bmz} - \bar{z}\mathbf{1}\|_2^2$.
Based on Eq.~\ref{eq:lambda} and Lemma~\ref{lemma:lambda_mu}, we get
\begin{equation}
\label{eq:nuclear1}
\begin{split}
\|[\hat{X}_{\hat{\bmz}}\ \bar{\bmx}_0\tilde{\bmz}^T]\|_* &= \sum_{i=1}^d\sqrt{\lambda_i(Y+\|\tilde{\bmz}\|_2^2\bar{\bmx}_0\bar{\bmx}_0^T)}\\
& \geq \sum_{i=1}^d\sqrt{\lambda_i(Y+\|\bar{z}\mathbf{1}\|_2^2\bar{\bmx}_0\bar{\bmx}_0^T)} + \frac{\|\tilde{\bmz} - \bar{z}\mathbf{1}\|_2^2\|\bar{\bmx}_0\|_2^2}{2\sqrt{\lambda_1(Y+\|\tilde{z}\|_2^2\bar{\bmx}_0\bar{\bmx}_0^T)}}\\
& \geq \|[\hat{X}_{\hat{\bmz}}\ \bar{z}\bar{\bmx}_0\mathbf{1}^T]\|_* + \frac{\|\bar{\bmx}_0\|_2^2}{2\|[\hat{X}_{\hat{\bmz}}\ \bar{\bmx}_0\tilde{\bmz}^T]\|_2}\delta^2\\
\end{split}
\end{equation}
Moreover,
\begin{equation}
\label{eq:nuclear2}
\begin{split}
\|[\hat{X}_{\hat{\bmz}}\ \bar{z}\bar{\bmx}_0\mathbf{1}^T]\|_* &= \|[\hat{X}_{\hat{\bmz}}\ \tilde{X}\text{Diag}(\bar{z}\mathbf{1})] + [0\ (\bar{z}\bar{\bmx}_0\mathbf{1}^T - \tilde{X}\text{Diag}(\bar{z}\mathbf{1}))]\|_* \\
& \geq \|[\hat{X}_{\hat{\bmz}}\ \tilde{X}\text{Diag}(\bar{z}\mathbf{1})]\|_* - |\bar{z}|\|\bar{\bmx}_0 \mathbf{1}^T - \tilde{X}\|_* \\
& \geq \|[\hat{X}_{\hat{\bmz}}\ \tilde{X}\text{Diag}(\bar{z}\mathbf{1})]\|_* - |\bar{z}|\epsilon. \\
\end{split}
\end{equation}
Substituting Eq.~\ref{eq:nuclear1} and~\ref{eq:nuclear2} into Eq.~\ref{eq:init_extend}, we have 
\begin{small}
\begin{equation}
\label{eq:final}
\begin{split}
f([\hat{\bmz};\tilde{\bmz}]) & \geq \Omega_5 + \alpha_1(\|[\hat{X}_{\hat{\bmz}}\ \tilde{X}\text{Diag}(\bar{z}\mathbf{1})]\|_* - |\bar{z}|\epsilon + \frac{\|\bar{\bmx}_0\|_2^2}{2\|[\hat{X}_{\hat{\bmz}}\ \bar{\bmx}_0\tilde{\bmz}^T]\|_2}\delta^2) +  \Omega_6\\
& = f([\hat{\bmz}; \bar{z}\mathbf{1}]) + (\frac{\alpha_1\|\bar{\bmx}_0\|_2^2}{2\|[\hat{X}_{\hat{\bmz}}\ \bar{\bmx}_0\tilde{\bmz}^T]\|_2})\delta^2 + \frac{\alpha_2}{2}\Omega_4 
 - \Omega_7\\
\end{split}
\end{equation}
\end{small}
where $\Omega_7 = ((\alpha_1 + \|\bmx - \hat{X}\hat{\bmz} - \tilde{X} (\bar{z} \mathbf{1})\|_2) \|\tilde{\bmz}\|_2 + \alpha_1 |\bar{z}|)\epsilon$.
\comment{
\begin{equation}
\label{eq:final}
\begin{split}
f([\hat{\bmz};\tilde{\bmz}]) & \geq \frac{1}{2}\|\bmx - \hat{X}\hat{\bmz} - \tilde{X} (\bar{z} \mathbf{1})\|_2^2  - (\alpha_1 + \|\bmx - \hat{X}\hat{\bmz} - \tilde{X} (\bar{z} \mathbf{1})\|_2) \|\tilde{\bmz}\|_2 \epsilon \\
& + \alpha_1(\|[\hat{X}_{\hat{\bmz}}\ \tilde{X}\text{Diag}(\bar{z}\mathbf{1})]\|_* - |\bar{z}|\epsilon + \frac{\|\bar{\bmx}_0\|_2^2}{2\|[\hat{X}_{\hat{\bmz}}\ \bar{\bmx}_0\tilde{\bmz}^T]\|_2}\delta^2) \\
&+  \frac{\alpha_2}{2}||[\hat{\bmz}; \bar{z}\mathbf{1}] - \bmw||_2^2  + \frac{\alpha_2}{2} \sum_{j=1}^q [(y_j^* - \bar{z})(y_j^* + \bar{z} - 2\tilde{w}_j)]\\
& = f([\hat{\bmz}; \bar{z}\mathbf{1}]) + (\frac{\alpha_1\|\bar{\bmx}_0\|_2^2}{2\|[\hat{X}_{\hat{\bmz}}\ \bar{\bmx}_0\tilde{\bmz}^T]\|_2})\delta^2 + \frac{\alpha_2}{2} \sum_{j=1}^q [(y_j^* - \bar{z})(y_j^* + \bar{z} - 2\tilde{w}_j)]\\
& - ((\alpha_1 + \|\bmx - \hat{X}\hat{\bmz} - \tilde{X} (\bar{z} \mathbf{1})\|_2) \|\tilde{\bmz}\|_2 + \alpha_1 |\bar{z}|)\epsilon\\
\end{split}
\end{equation}
}
From Eq.~\ref{eq:final},
we see that
if $\left\| \tilde{\bmz} - \bar{z}\mathbf{1} \right\|_2 > \delta$, $f([\hat{\bmz};\tilde{\bmz}]) > f([\hat{\bmz};\bar{z}\mathbf{1}])$, 
where 
\begin{equation}
\nonumber
\delta = \sqrt{\frac{(2\Omega_7 - \alpha_2 \Omega_4)(\|[\hat{X}_{\hat{\bmz}}\ \bar{\bmx}_0 \tilde{\bmz}^T]\|_2)}{\alpha_1\|\bar{\bmx}_0\|_2^2}}.
\end{equation}

%From Lemma.~\ref{lemma:epsilon1}, we know 
%$\epsilon \leq y_j^* - \bar{z} \leq \epsilon$.
%When $\epsilon \rightarrow 0$,
%we have $\Omega_4 \rightarrow 0$, $\Omega_7 \rightarrow 0$
%and thus $\delta \rightarrow 0$.
%In this case,
%$\tilde{\bmz}$ thus has to be very close to $\bar{z}\bm{1}$.
%The grouping effect of $z^*$ is thus guaranteed.

%\end{proof}

\end{document}